\documentclass{article}


\PassOptionsToPackage{numbers,compress}{natbib}
\usepackage[final]{neurips_2025}




\usepackage[utf8]{inputenc} 
\usepackage[T1]{fontenc}    
\usepackage[table]{xcolor}         
\usepackage[pagebackref=true,breaklinks=true,letterpaper=true]{hyperref}       
\definecolor{perfblue}{RGB}{64, 114, 175}
\definecolor{myred}{rgb}{0.882,0.2196,0.1882}
\hypersetup{
    colorlinks = true,
    citecolor= perfblue,
    linkcolor=myorange
}
\usepackage{url}            
\usepackage{booktabs}       
\usepackage{amsfonts}       
\usepackage{nicefrac}       
\usepackage{microtype}      

\usepackage{subcaption}
\captionsetup[subfigure]{labelformat=empty}  
\usepackage{wrapfig}

\usepackage{csquotes}
\usepackage{comment}
\usepackage[page]{appendix}
\usepackage{amsthm}
\usepackage{mathtools}
\usepackage{algorithm}
\usepackage{multicol}
\usepackage{algcompatible}

\renewcommand{\appendixtocname}{Contents of appendices}

\usepackage{array, multirow,graphicx}
\usepackage{float}
\usepackage{pifont}
\usepackage{amsmath}

\newcommand{\blue}{\color{blue}}

\definecolor{darkbrown}{rgb}{0.7,0.2,0.1}

\definecolor{orange}{rgb}{1,0.5,0}

\definecolor{darkgreen}{rgb}{0,0.5,0}
\definecolor{grey}{rgb}{0.7,0.7,0.7}

\definecolor{mypurple}{rgb}{0.56,0.3059,0.54}
\definecolor{myyellow}{rgb}{0.98,0.69,0.282}
\definecolor{mypink}{rgb}{0.745,0.38,0.6117}
\definecolor{myblue}{rgb}{0.2745,0.3255,0.5647}
\definecolor{myorange}{rgb}{0.7,0.3647,0.22745}
\definecolor{mydarkred}{rgb}{0.5686,0.1333,0.141}
\definecolor{mygreen}{rgb}{0.3255,0.596,0.3}

\newcommand{\myred}{\color{myred}}

\newcommand{\mygreen}{\color{mygreen}}
\newcommand{\myorange}{\color{myorange}}
\definecolor{colorpo}{rgb}{0.87,0.87,0.9}

\definecolor{colorro}{rgb}{0.9569,0.9451,0.932}

\definecolor{blue1}{rgb}{0.4,0.757,0.887}
\definecolor{blue2}{rgb}{0.63,0.84,0.91}
\definecolor{blue3}{rgb}{0.89,0.95,0.97}
\usepackage{diagbox}

\newcommand{\TV}{D_{\mathrm{TV}}}


\usepackage{amsmath,amsfonts,bm}









\def\eqref#1{equation~\ref{#1}}









\def\1{\bm{1}}




\def\rva{{\mathbf{a}}}

\def\rvs{{\mathbf{s}}}

\def\rvx{{\mathbf{x}}}







\DeclareMathAlphabet{\mathsfit}{\encodingdefault}{\sfdefault}{m}{sl}
\SetMathAlphabet{\mathsfit}{bold}{\encodingdefault}{\sfdefault}{bx}{n}

\def\gA{{\mathcal{A}}}
\def\gB{{\mathcal{B}}}

\def\gD{{\mathcal{D}}}

\def\gH{{\mathcal{H}}}

\def\gO{{\mathcal{O}}}

\def\gR{{\mathcal{R}}}
\def\gS{{\mathcal{S}}}



\def\sR{{\mathbb{R}}}








\newcommand{\E}{\mathbb{E}}

\newcommand{\KL}{D_{\mathrm{KL}}}

\newcommand{\Cov}{\mathrm{Cov}}


\DeclareMathOperator*{\argmax}{arg\,max}

\definecolor{darkbrown}{rgb}{0.7,0.2,0.1}
\definecolor{orange}{rgb}{1,0.5,0}
\definecolor{darkgreen}{rgb}{0,0.5,0}
\definecolor{grey}{rgb}{0.7,0.7,0.7}
\definecolor{darkblue}{rgb}{0,0.078,0.4}

\definecolor{mypurple}{rgb}{0.56,0.3059,0.54}
\definecolor{myyellow}{rgb}{0.98,0.69,0.282}
\definecolor{mypink}{rgb}{0.745,0.38,0.6117}
\definecolor{myblue}{rgb}{0.2745,0.3255,0.5647}
\definecolor{myorange}{rgb}{0.7,0.3647,0.22745}
\definecolor{mydarkred}{rgb}{0.5686,0.1333,0.141}
\definecolor{mygreen}{rgb}{0.3255,0.596,0.3}
\newcommand{\teal}{\color{teal}}
\theoremstyle{plain}
\newtheorem{theorem}{Theorem}
\newtheorem{proposition}[theorem]{Proposition}
\newtheorem{lemma}[theorem]{Lemma}
\newtheorem{corollary}[theorem]{Corollary}
\theoremstyle{definition}

\theoremstyle{remark}

\usepackage{bm}
\newcommand{\rold}{\mathrm{old}}
\newcommand{\rnew}{\mathrm{new}}
\usepackage{xspace}
\newcommand{\ie}{i.e.\xspace}
\usepackage{tabularx}

\makeatletter
\let\oldappendix\appendices

\renewcommand{\appendices}{%
  \clearpage
  \renewcommand{\thesection}{\Roman{section}}
  \let\tf@toc\tf@app
  \addtocontents{app}{\protect\setcounter{tocdepth}{3}}
  \immediate\write\@auxout{%
    \string\let\string\tf@toc\string\tf@app^^J
  }
  \oldappendix
}%

\newcommand{\listofappendices}{%
  \begingroup
  \renewcommand{\contentsname}{\appendixtocname}
  \let\@oldstarttoc\@starttoc
  \def\@starttoc##1{\@oldstarttoc{app}}
  \tableofcontents
  \endgroup
}

\makeatother

\title{Trust Region Reward Optimization and\\ Proximal Inverse Reward Optimization Algorithm~\thanks{Title used at submission and review: PIRO: Toward Stable Reward Learning for Inverse RL via Monotonic Policy Divergence Reduction.}}

%

\author{%
  Yang Chen$^{1}$\thanks{Main contributors. Yang Chen developed the theorems, completed the proofs, wrote the paper, and implemented the initial version of the algorithm. Menglin Zou led the experimental evaluation. Jiaqi Zhang and Junyi Yang validated the algorithm using toy models. Yitan Zhang conducted the experiments on robotics and animal behavior modeling tasks. The remaining authors contributed through critical discussions and feedback.
  }\;\;\thanks{Corresponding author.} \quad
  Menglin Zou$^{2\;\dagger}$ \quad
  Jiaqi Zhang$^{3}$ \quad
  Yitan Zhang$^{2}$ \quad
  {\bf Junyi Yang}$^{2}$ \\
  {\bf Ga\"el Gendron}$^{2}$ \quad
  {\bf Libo Zhang}$^{2}$ \quad
  {\bf Jiamou Liu}$^{2}$ \quad
  {\bf Michael J.~Witbrock}$^{2}$ \\
  $^{1}$ Shanghai Artificial Intelligence Laboratory \quad
  $^{2}$ University of Auckland \quad
  $^{3}$ Chongqing University \\
  \texttt{chenyang4@pjlab.org.cn}
}


\begin{document}

\maketitle
\setcounter{footnote}{0}

\begin{abstract}
Inverse Reinforcement Learning (IRL) learns a reward function to explain  expert demonstrations. Modern IRL methods often use the adversarial (minimax) formulation that alternates between reward and policy optimization, which often lead to {\em unstable} training. Recent non-adversarial IRL approaches improve stability by jointly learning reward and policy via energy-based formulations but lack formal guarantees. 
This work bridges this gap. We first present a {\em unified} view showing canonical non-adversarial methods explicitly or implicitly maximize the likelihood of expert behavior, which is equivalent to minimizing the expected return gap. This insight leads to our main contribution: {\em Trust Region Reward Optimization} (TRRO), a framework that guarantees {\em monotonic} improvement in this likelihood via a Minorization-Maximization process. 
We instantiate TRRO into {\em Proximal Inverse Reward Optimization} (PIRO), a practical and stable IRL algorithm. Theoretically, TRRO provides the IRL counterpart to the stability guarantees of Trust Region Policy Optimization (TRPO) in forward RL. Empirically, PIRO matches or surpasses state-of-the-art baselines in reward recovery, policy imitation with high sample efficiency on MuJoCo and Gym-Robotics benchmarks and a real-world animal behavior modeling task.~\footnote{The implementation is available at \url{https://github.com/PolynomialTime/PIRO}.}
\end{abstract}

\section{Introduction}\label{sec:intro}
Learning optimal policies from fixed reward functions is reinforcement learning (RL); learning rewards from fixed expert policies is inverse reinforcement learning (IRL) \citep{ng2000algorithms}. 
Modern IRL methods \citep{fu2018learning,swamy2023inverse,ren2024hybrid} often take a minimax game formulation and a bi-level optimization procedure, where a reward function (min player) is adversarially optimized to differentiate between a best-response policy (max player, an RL subroutine) and the expert policy via their expected return gap (a.k.a. the {\em imitation gap} \citep{swamy2021moments}). Due to the advantages of interpretability, robustness to dynamics shifts \citep{abbeel2004apprenticeship}, and out-of-distribution generalization \citep{chang2021mitigating}, these methods have been effectively applied in autonomous driving 
\citep{igl2022symphony}, robotics \citep{chen2023option}, and reward modeling in language models \citep{sun2025inverse}. 
However, despite its theoretical grounding and practical appeal, adversarial training introduces optimization instability due to brittle approximations and high sensitivity to hyperparameters, hindering reliable reward recovery.

Recent non-adversarial IRL approaches \citep{reddysqil,garg2021iq,ni2021f,zeng2022maximum,zeng2023demonstrations,watson2023coherent} revive a line of early apprenticeship learning methods \citep{neu2007apprenticeship,piot2014boosted}; they 
bypass the nested adversarial training by coupling the reward and policy via an energy-based model \citep{haarnoja2017reinforcement}, jointly updating them to optimize some measure of fit to expert behavior. 
While improving empirical stability, they still lack principled control over reward updates. As a result, {\em a provably stable IRL mechanism, one that ensures consistent progress toward expert imitation, remains elusive}. This work aims to address this {\bf gap}. 

By leveraging the fact that the expected return gap between two policies equals the expected advantage value of one under the other \citep{schulman2015trust,kostrikovimitation,zeng2022maximum}, we develop a {\bf unified view} of canonical non-adversarial IRL methods. We show that {\em they all, explicitly or implicitly, optimize the likelihood of expert behavior, which is equivalent to minimizing the imitation gap} (Sec.~\ref{sec:unified}).  
This leads to our {\bf key insight:} {\em IRL stability can be achieved by provably increasing the likelihood of expert demonstrations at every update step.} We realize this insight in a principled non-adversarial IRL framework and a practical algorithm that together offer a stable alternative to existing approaches.

Concretely, our contributions are summarized as follows, which are  illustrated in Fig.~\ref{fig:contribution}:

\begin{figure}[t]
\vspace{-1em}
    \centering
    \begin{minipage}{0.48\textwidth}
        \centering
        \includegraphics[width=\linewidth]{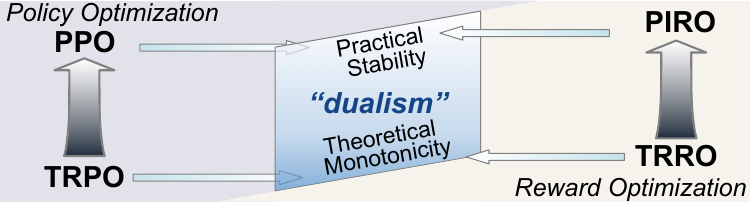}\\
        \includegraphics[width=\linewidth]{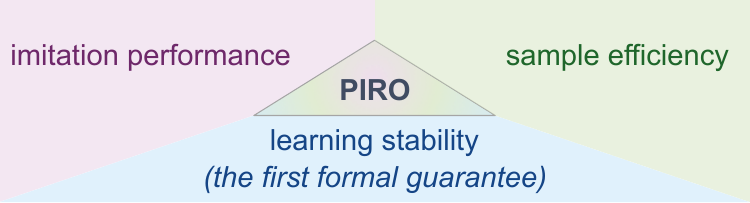}
        \caption{\small {\bf Theoretical (top) and practical (bottom) contributions.} {\bf Top:} PPO -- rooted in TRPO's theory of monotonic policy improvement -- has been (one of) the most successful RL algorithm(s). This work is motivated by a {\bf\em dualism:} the mathematical beauty of TRPO should {\em not} exist in {\em isolation}, but in {\em conjugation} with its inverse problem space. We identify and formalize this inverse counterpart, completing the ``right half'' of this ``symmetric picture''. We believe this contribution advances RL theory and opens new avenues for designing robust IRL algorithms. See Sec.~\ref{sec:TRRO} for theoretical justifications. 
        {\bf Bottom:} PIRO, our practical algorithm, achieves a three-way balance among learning stability, imitation performance, and sample efficiency. To our knowledge, PIRO is the first IRL method that achieves state-of-the-art  performance in imitation performance and learning stability with high sample efficiency. See Sec.~\ref{sec:PRO} for the practical algorithm design and  Sec.~\ref{sec:experiments} for experiments. 
        }
        \label{fig:contribution}
    \end{minipage}
    \hfill
        \begin{minipage}{.48\textwidth}
        \centering
        \includegraphics[width=.95\linewidth]{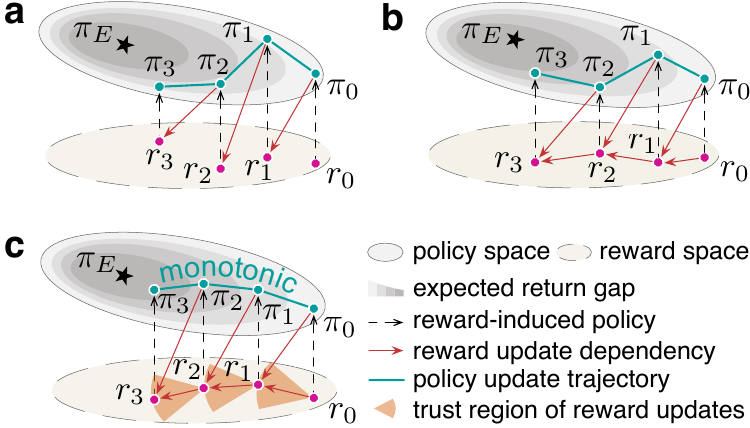}
        \caption{\small{\bf Comparing Adversarial IRL, Non-adversarial IRL and our Trust Region
Reward Optimization (TRRO)}. {\bf (a)} Adversarial IRL methods frame reward learning as a game against a (nearly) best-response policy, often resulting in unstable training dynamics due to the inherent minimax structure.
{\bf (b)} Non-adversarial IRL methods bypass this game setup by coupling reward and policy via energy-based formulations and jointly update them by minimizing the expected return gap (a.k.a. the imitation gap). However, lacking principled control over reward update makes them sensitive to optimization errors.
{\bf (c)} TRRO reformulates non-adversarial IRL as a majorization-minimization (MM) process that identifies a trusted reward update in each step. This ensures a monotonic reduction in imitation gap and providing, to our knowledge, the first formal stability guarantee in IRL. ({\bf Note:} This is a theoretical comparison assuming exact policy computation.)}
        \label{fig:idea}
    \end{minipage}
\end{figure}

$\bullet$ We propose {\bf Trust Region Reward Optimization} (TRRO), a principled non-adversarial IRL framework that, to our knowledge, for the {\em first} time provides  a formal guarantee on stability. As depicted in Fig.~\ref{fig:idea}, it provides principled control on reward update via a Minorization-Maximization (MM) process, which iteratively optimizes a surrogate objective function to identify a trusted reward update that ensures a {\em monotonic} improvement in the likelihood of expert behavior (equivalent to reducing the imitation gap). (Sec.~\ref{sec:TRRO}) 

$\bullet$ We develop {\bf Proximal Inverse Reward Optimization} (PIRO), a practical IRL algorithm that approximates the theoretical guarantee of TRRO through adaptive step sizes in place of the theory-informed small updates. PIRO achieves a balance among learning stability, imitation performance and sample efficiency. It can be easily implemented on top of Soft Actor-Critic \citep{haarnoja2017reinforcement} by adding a few stochastic gradient steps for the controlled reward update.
(Sec.~\ref{sec:PRO}) 

$\bullet$ We empirically demonstrate the strong performance of PIRO. Across MuJoCo and Gym Robotics tasks, PIRO offers substantially improved stability and high sample efficiency, while matches or exceeds state-of-the-art IRL methods in reward recovery and policy imitation. (Sec.~\ref{sec:experiments})

TRRO/PIRO mirrors the success of Trust Region Policy Optimization (TRPO) \citep{schulman2015trust} and its successor Proximal Policy Optimization (PPO) \citep{schulman2017proximal}. TRPO guarantees monotonic policy improvement in expected return with respect to a fixed reward function, while TRRO ensures monotonic reduction in the expected return gap with respect to the expert behavior. In this sense, TRRO/PIRO serves as the inverse RL counterpart to TRPO/PPO in forward RL. 

\section{Preliminaries}\label{sec:pre}

Consider a Markov decision process (MDP) defined by \((\gS, \gA, r, \eta, P, \gamma)\), where \(\gS\) and \(\gA\) are the state and action spaces, \(\eta(\cdot)\) is the initial state distribution, \(P: \gS \times \gA \times \gS \to [0,1]\) is the transition function, \(r: \gS \times \gA \to \sR\) is the reward function, and $\gamma \in (0,1)$ is the discount factor. A stochastic policy \(\pi: \gS \times \gA \to [0,1]\) defines a probabilistic action selection at each state. We denote the {\em occupancy measure} of $\pi$ as  $\rho^\pi(\rvs,\rva) \coloneqq \sum_{t=0}^\infty \gamma^t \Pr(\rvs_t = \rvs, \rva_t = \rva | \rvs_0\sim\eta, \pi, P) $. Note that we will omit the normalizing constant $\frac{1}{1-\gamma}$ for $\rho^\pi(\rvs,\rva)$. 

\subsection{Maximum Entropy RL}

MaxEnt RL characterizes the optimal behavior as a policy $\pi^*$ that maximizes the {\em policy entropy}-augmented rewards:
\begin{equation}\label{eq:MaxEntRL}
\begin{aligned}
    J(\pi, r) \coloneqq \E_{\rho^\pi} \left[ r(\rvs,\rva) \right] +  \gH(\pi),\;
    \gH(\pi) \coloneqq \E_{\rho^\pi} [-\log(\pi(\rva | \rvs))].
\end{aligned}
     \tag{{\myorange MaxEnt-RL}}
\end{equation}
Here, $\gH(\pi)$ is the discounted {\em causal entropy} 
\citep{ziebart2010modeling} of a policy $\pi$. 
In MaxEnt RL, an optimal policy \(\pi^*\)  follows an {\em energy-based model}:
\begin{equation}\label{eq:energy}
    \pi^*(\rva \vert \rvs) = \exp (Q^{\pi^*}_r( \rvs,\rva) - V^{\pi^*}_r(\rvs)) , 
\end{equation}
where $Q^{\pi^*}_r$ is the optimal soft Q-function and $V^{\pi^*}_r$ is the optimal soft value function satisfying:
\begin{equation}\label{eq:SBE}
    V^{\pi^*}_r(\rvs)= 
    \log {\textstyle \sum_{\rva \in \gA}} \exp(Q^{\pi^*}_r(\rvs, \rva)),\;
   Q^{\pi^*}_r(\rvs, \rva) =  r(\rvs,\rva) + \gamma\E_{\rvs' \sim P(\cdot | \rvs, \rva)} [V^{\pi^*}_r(\rvs')]. 
\end{equation}
Eq.~(\ref{eq:SBE}) is the so-called {\em Soft Bellman Equation}. Given a reward function $r \in \gR \subset \sR^{\gS \times \gA}$ and a policy $\pi \in \Pi \subset [0,1]^{\gS \times \gA}$, the soft Q-value can be computed by iteratively applying the {\em soft Bellman operator} $\gB^\pi_r: \sR^{\gS \times \gA} \to \sR^{\gS \times \gA}$ defined as: 
\begin{equation}\label{eq:SBO}
\begin{aligned}
    (\gB_r^\pi Q)(\rvs, \rva) = r(\rvs, \rva) + \gamma \E_{\rvs' \sim P(\cdot|\rvs, \rva)}[V(\rvs')],\; V(\rvs) = \E_{\rva\sim\pi(\cdot|\rvs)}[Q(\rvs,\rva) - \log \pi(\rva|\rvs)].
\end{aligned}
\end{equation}
The operator $\gB_r^\pi$ is contractive \citep{haarnoja2018soft} and defines the soft Q-function $Q_r^\pi$ as a unique fixed point solution, i.e. $Q_r^\pi = \gB_r^\pi Q_r^\pi$. An improved policy can be derived from $Q_r^\pi$ through
\begin{equation}\label{eq:PI}
    \pi'(\rva | \rvs) \propto \exp(Q_r^\pi(\rva, \rvs)),
\end{equation}
which guarantees $Q_r^{\pi'}(\rva|\rvs) \geq Q_r^{\pi}(\rva|\rvs)$ for all $(\rvs,\rva) \in \gS \times \gA$. Starting from an arbitray policy $\pi$, repeated application of 
Eq.~(\ref{eq:SBO}) and Eq.~(\ref{eq:PI}) gives the so-called {\em soft policy iteration} \citep{haarnoja2018soft}, which converges to the optimal policy $\pi^*$ that maximizes $J(\pi, r)$ in (\ref{eq:MaxEntRL}).

\subsection{Maximum Entropy IRL}
Suppose we do not know the reward function but have a set of demonstrations $\gD_E = \{(\rvs_0,\rva_0,\ldots)\}$ sampled from an expert policy $\pi_E$. MaxEnt IRL aims to recover the reward function that explains demonstrations by minimizing the expected return gap (a.k.a. {\em imitation gap} \citep{swamy2021moments}) through solving the following optimization problem:~\footnote{We hereafter omit the constant expert policy entropy $\gH(\pi_E)$ in $J(\pi_E,r)$.}
\begin{equation}\label{eq:MaxEntIRL}
\begin{aligned}
\min_{\pi \in \Pi} \max_{r \in \gR}  J(\pi_E, r) - J(\pi, r) =\E_{\rho^{\pi_E}}[r(\rvs,\rva)]  - (\E_{\rho^{\pi}}[r(\rvs,\rva)] + \gH(\pi) ). 
\end{aligned}\tag{{\myorange MaxEnt-IRL}}
\end{equation}
In practice, $\E_{\rho^{\pi_E}}[r(\rvs,\rva)]$ is emprically estimated on expert demonstrations $\gD_E$. The minimax formulation of (\ref{eq:MaxEntIRL}) suggests an adversarial solution structure:~\footnote{See Sec.~\ref{sec:related} for the discussion on adversarial IRL methods.}   
an {\em outter loop} optimizes the reward function by differentiating expert and learned policies through maximizing the imitation gap (Line~\ref{alg:AIRL:RO}, Alg.~\ref{alg:adversarial}) and an {\em inner loop} trains an optimal policy via a MaxEnt RL process (Line~\ref{alg:AIRL:PO}, Alg.~\ref{alg:adversarial}). 
MaxEnt IRL has been well studied theoretically \citep{ziebart2010modeling,bloem2014infinite} and has been practically applied  \citep{wu2020efficient,fulanguage}. However, its nested structure can introduce significant training instability and computational burden, especially when state-action spaces are high-dimensional or continuous.

\begin{figure}[t]
\centering
\begin{minipage}{.48\linewidth}
\begin{algorithm}[H]
  \caption{Adversarial IRL}\label{alg:adversarial}
  \small
  \begin{algorithmic}[1]
    \STATE {\bfseries Provided:} Expert demonstration $\gD_E$, Reward parameter ${\bm\theta}_0$.
    \FOR{$i$ in $1,\ldots,N$}
        \STATEx {\teal // A full RL process}
        \STATE {\teal \( \pi_{i} \leftarrow\) {\tt MaxEntRL}$(r_{{\bm\theta}_{i-1}})$.} \label{alg:AIRL:PO} 
        \STATE ${\bm\theta}_i \leftarrow \argmax_{{\bm\theta}} J(\pi_E, r_{{\bm\theta}}) - J(\pi_i, r_{{\bm\theta}})$. 
        \label{alg:AIRL:RO} 
    \ENDFOR
  \end{algorithmic}
\end{algorithm}
\end{minipage}
\hfill
\begin{minipage}{.48\linewidth}
\begin{algorithm}[H]
  \caption{Non-Adversarial IRL}\label{alg:direct}
  \small
   \begin{algorithmic}[1]
    \STATE {\bfseries Provided:} Expert demonstration $\gD_E$, Reward parameter ${\bm\theta}_0$, Policy $\pi_0$. 
    \FOR{$i$ in $1,\ldots,N$}
    \STATEx {\teal // One round of soft policy iteration.}
        \STATE {\teal \( \pi_i(\rva|\rvs) \propto \exp(Q_{r_{{\bm\theta}_{i-1}}}^{\pi_{i-1}}(\rvs,\rva)) \).}  \label{alg:NAIRL:PO}
        \STATE \( {\bm\theta}_i \leftarrow {\bm\theta}_{i-1} + \alpha_i \nabla_{\bm\theta} (J(\pi_E, r_{\bm\theta}) - J(\pi_i, r_{\bm\theta}))\). \label{alg:NAIRL:RO}
    \ENDFOR
  \end{algorithmic}
\end{algorithm}
\end{minipage}
\end{figure}

\subsection{Maximum Likelihood IRL}\label{subsec:ML-IRL}
ML-IRL bypasses the nested loop in MaxEnt IRL by jointly updating the reward and policy  
via the energy-based model (Eq.~(\ref{eq:energy})), thereby improving stability. Let $\pi_{\bm\theta}$ denote the optimal policy induced by a ${\bm\theta}$-parameterized reward function $r_{\bm\theta}$ with ${\bm\theta} \in \sR^d$. 
ML-IRL aims to maximize the likelihood of expert behavior under $\pi_{\bm\theta}$ (equivalent to minimizing the KL divergence $\KL(\pi_E(\rva|\rvs) \| \pi_{\bm\theta}(\rva|\rvs)) \coloneqq \E_{\rho^{\pi_E}}[\log \pi_E(\rva|\rvs) - \log \pi_{\bm\theta}(\rva|\rvs)]$): 
\begin{equation}\label{eq:ML-IRL}
    \max\nolimits_{\bm\theta} \ell({\bm\theta}) \coloneqq \E_{\rho^{\pi_E}} [\log \pi_{\bm\theta}(\rva|\rvs)].
    \tag{{\myorange ML-IRL}}
\end{equation}
An important property of  $\ell({\bm\theta})$ is that it can be equivalently expressed as the imitation gap.~\footnote{We provide the proof of Proposition~\ref{prop:equiv} in Appendix~\ref{app:prop:equiv} using the notations in this paper.}
\begin{proposition}[Lemma~1 in \citep{zeng2022maximum}]
\label{prop:equiv}
  The log-likelihood objective $\ell({\bm\theta})$ in {\em (\ref{eq:ML-IRL})} has the following equivalent form that implies the expression of its gradient: 
  \begin{equation}\label{eq:equiv}
  \begin{aligned}
      \ell({\bm\theta}) = \E_{\rho^{\pi_E}}[r_{\bm\theta}(\rvs, \rva)] - \E_{\rvs_0\sim\eta}[V_{r_{\bm\theta}}^{\pi_{\bm\theta}}(\rvs_0)]= J(\pi_E, r_{\bm\theta}) - J(\pi_{\bm\theta}, r_{\bm\theta}), 
  \end{aligned}\tag{5a}
  \end{equation}
  \begin{equation}\label{eq:grad}
      \nabla_{\bm\theta} \ell({\bm\theta}) =  \E_{\rho^{\pi_E}}[\nabla_{\bm\theta} r_{\bm\theta}(\rvs, \rva)] - \E_{\rho^{\pi_{\bm\theta}}}[\nabla_{\bm\theta} r_{\bm\theta}(\rvs, \rva)].\tag{6a}
  \end{equation}
\end{proposition}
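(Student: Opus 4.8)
The plan is to exploit the energy-based form of the MaxEnt-optimal policy to rewrite each log-probability $\log\pi_{\bm\theta}(\rva|\rvs)$ as a one-step Bellman residual of the soft value function, and then telescope this residual against the occupancy measure $\rho^{\pi_E}$.

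First I would use that $\pi_{\bm\theta}$ is the MaxEnt-optimal policy for $r_{\bm\theta}$, so Eq.~(\ref{eq:energy}) gives $\log\pi_{\bm\theta}(\rva|\rvs)=Q^{\pi_{\bm\theta}}_{r_{\bm\theta}}(\rvs,\rva)-V^{\pi_{\bm\theta}}_{r_{\bm\theta}}(\rvs)$; substituting the Soft Bellman Equation~(\ref{eq:SBE}) for $Q^{\pi_{\bm\theta}}_{r_{\bm\theta}}$ then yields $\log\pi_{\bm\theta}(\rva|\rvs)=r_{\bm\theta}(\rvs,\rva)+\gamma\,\E_{\rvs'\sim P(\cdot|\rvs,\rva)}[V^{\pi_{\bm\theta}}_{r_{\bm\theta}}(\rvs')]-V^{\pi_{\bm\theta}}_{r_{\bm\theta}}(\rvs)$. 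Next I would take $\E_{\rho^{\pi_E}}$ of both sides. The reward term contributes $\E_{\rho^{\pi_E}}[r_{\bm\theta}(\rvs,\rva)]$ directly; for the value terms I would expand the occupancy measure as $\rho^{\pi_E}(\rvs,\rva)=\sum_{t\ge0}\gamma^t\Pr(\rvs_t=\rvs,\rva_t=\rva)$, so that $\E_{\rho^{\pi_E}}\big[\gamma\,\E_{\rvs'}[V^{\pi_{\bm\theta}}_{r_{\bm\theta}}(\rvs')]-V^{\pi_{\bm\theta}}_{r_{\bm\theta}}(\rvs)\big]=\sum_{t\ge0}\big(\gamma^{t+1}\E[V^{\pi_{\bm\theta}}_{r_{\bm\theta}}(\rvs_{t+1})]-\gamma^{t}\E[V^{\pi_{\bm\theta}}_{r_{\bm\theta}}(\rvs_t)]\big)$, which telescopes to $-\E_{\rvs_0\sim\eta}[V^{\pi_{\bm\theta}}_{r_{\bm\theta}}(\rvs_0)]$ once the tail $\gamma^{T}\E[V^{\pi_{\bm\theta}}_{r_{\bm\theta}}(\rvs_T)]$ is shown to vanish. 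This establishes the first equality of Eq.~(\ref{eq:equiv}). For the second equality I would unroll the soft Bellman recursion (Eq.~(\ref{eq:SBO})) to identify $\E_{\rvs_0\sim\eta}[V^{\pi_{\bm\theta}}_{r_{\bm\theta}}(\rvs_0)]$ with the entropy-augmented return $\E_{\rho^{\pi_{\bm\theta}}}[r_{\bm\theta}(\rvs,\rva)]+\gH(\pi_{\bm\theta})=J(\pi_{\bm\theta},r_{\bm\theta})$, and note $\E_{\rho^{\pi_E}}[r_{\bm\theta}(\rvs,\rva)]=J(\pi_E,r_{\bm\theta})$ under the stated convention of dropping the constant $\gH(\pi_E)$.

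For the gradient formula~(\ref{eq:grad}) I would differentiate $\ell({\bm\theta})=J(\pi_E,r_{\bm\theta})-J(\pi_{\bm\theta},r_{\bm\theta})$. Since $\rho^{\pi_E}$ does not depend on $\bm\theta$, the first term contributes $\E_{\rho^{\pi_E}}[\nabla_{\bm\theta}r_{\bm\theta}(\rvs,\rva)]$. For the second term I would use that $J(\pi_{\bm\theta},r_{\bm\theta})=\max_{\pi\in\Pi}J(\pi,r_{\bm\theta})$ because $\pi_{\bm\theta}$ is MaxEnt-optimal for $r_{\bm\theta}$; by an envelope (Danskin) argument the first-order variation through the maximizer $\pi_{\bm\theta}$ cancels, leaving only the explicit dependence through $r_{\bm\theta}$, i.e. $\nabla_{\bm\theta}J(\pi_{\bm\theta},r_{\bm\theta})=\E_{\rho^{\pi_{\bm\theta}}}[\nabla_{\bm\theta}r_{\bm\theta}(\rvs,\rva)]$. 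Subtracting the two gives Eq.~(\ref{eq:grad}).

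I expect the main obstacle to be making the last two steps rigorous: justifying the rearrangement of the infinite discounted series and the vanishing of its tail, and justifying the envelope step (differentiability of $\bm\theta\mapsto\pi_{\bm\theta}$ and of the optimal soft value, so that the maximizer's variation truly drops out). Both are controlled by boundedness of $r_{\bm\theta}$ over $\gR$ together with $\gamma\in(0,1)$, which make $V^{\pi_{\bm\theta}}_{r_{\bm\theta}}$ uniformly bounded and the series absolutely convergent; with a finite action set and a smooth reward parametrization the envelope argument is then routine.
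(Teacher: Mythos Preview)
Your argument for Eq.~(\ref{eq:equiv}) is essentially identical to the paper's: both use the energy-based identity $\log\pi_{\bm\theta}(\rva|\rvs)=Q^{\pi_{\bm\theta}}_{r_{\bm\theta}}(\rvs,\rva)-V^{\pi_{\bm\theta}}_{r_{\bm\theta}}(\rvs)$, substitute the soft Bellman equation, and telescope the value terms against the expert occupancy to leave $-\E_{\rvs_0\sim\eta}[V^{\pi_{\bm\theta}}_{r_{\bm\theta}}(\rvs_0)]$.

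For Eq.~(\ref{eq:grad}) you take a genuinely different route. The paper does \emph{not} invoke an envelope/Danskin argument; instead it differentiates $\E_{\rvs_0\sim\eta}[V^{\pi_{\bm\theta}}_{r_{\bm\theta}}(\rvs_0)]$ directly by recursively expanding $\nabla_{\bm\theta}Q^{\pi_{\bm\theta}}_{r_{\bm\theta}}(\rvs_t,\rva_t)$ through the soft Bellman equation. At each step the derivative of the log-sum-exp produces the softmax weights $\pi_{\bm\theta}(\rva_{t+1}|\rvs_{t+1})$, so the recursion unrolls to $\nabla_{\bm\theta}V^{\pi_{\bm\theta}}_{r_{\bm\theta}}(\rvs_0)=\E_{\pi_{\bm\theta}}\big[\sum_{t\ge0}\gamma^t\nabla_{\bm\theta}r_{\bm\theta}(\rvs_t,\rva_t)\big]=\E_{\rho^{\pi_{\bm\theta}}}[\nabla_{\bm\theta}r_{\bm\theta}]$. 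Your envelope approach is shorter and more conceptual---it packages the cancellation of the ``derivative through the maximizer'' into one appeal to Danskin---whereas the paper's computation is more elementary and self-contained, making the same cancellation explicit without needing to verify Danskin's hypotheses (differentiability and uniqueness of the maximizer). Both arrive at the same formula; they differ only in how much structure is invoked versus computed by hand.
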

Indeed, Proposition~\ref{prop:equiv} is not so surprising, as it reflects a standard identity in RL theory: the expected return gap between two policies equals the expected advantage value ($Q(\rvs,\rva)-V(\rvs)$) of one policy under the occupancy measure of the other \citep{kakade2002approximately, schulman2015trust, kostrikovimitation}; in MaxEnt RL, the advantage value corresponds to $\log \pi$ (see Eq.~(\ref{eq:energy})). However, its implication for MF-IRL is noteworthy: it effectively bypasses the inner RL loop typically required in MaxEnt IRL. As a result, the nested-loop optimization is reduced to a {\em single-loop} structure: alternating between one round of soft policy iteration for policy improvement (Line~\ref{alg:NAIRL:PO}, Alg.~\ref{alg:direct}) and one gradient step for reward update  (Line~\ref{alg:NAIRL:RO}, Alg.~\ref{alg:direct}).

To further mitigates instability, \citep{zeng2022maximum} employ a decaying gradient step size $\alpha_i = \frac{\alpha_0}{N^\sigma}$ for reward updates, where $N$ is the total number of iterations and $\sigma \in (0,1)$ is a constant. Under the assumption of {\em exact policy computation} for $\pi_i$, \citep[Theorem~2]{zeng2022maximum} show that with Alg.~\ref{alg:direct},  $\ell({\bm\theta})$ converges at rate $\gO(N^{-1}) + \gO(N^{-\sigma})$, and converges to the optimal value under linear reward functions. 
However, this setup still lacks a formal stability guarantee, as gradient-based reward updates with heuristic step sizes cannot ensure improvement in $\ell({\bm\theta})$ at each step. Our key contribution fills this gap: a novel non-adversarial IRL framework that, under the similar assumption of exact policy computation, guarantees {\em monotonic} improvement in $\ell({\bm\theta})$ through a carefully designed non-gradient reward update mechanism (Sec.~\ref{sec:TRRO}).

\section{A Unified View of Non-Adversarial IRL: IR, ER and Beyond}\label{sec:unified}

In this section, we show an interesting yet natural fact that a range of canonical non-adversarial IRL methods --- both {\em implicit reward} {\bf (IR)} methods that learn soft Q-functions (e.g., Soft Q Imitation Learning (SQIL) \citep{reddysqil}, Inverse Q Learning (IQ-Learn) \citep{garg2021iq}) and {\em explicit reward} {\bf (ER)} methods that directly learn reward functions (e.g., $f$-IRL \citep{ni2021f} and ML-IRL) --- can be unified under the objective of maximizing the likelihood of expert behavior. As discussed further in Sec.~\ref{sec:related}, this unified view extends to a broader class of non-adversarial IRL methods that go beyond the settings of these canonical methods. This 
allows for unifying non-adversarial IRL methods under a general optimization procedure (Alg.~\ref{alg:direct}), highlighting the generality of maximizing the likelihood as a principled objective and situates our framework (next section) within a broader methodological landscape.

For IR, we already know that the objectives of SQIL and IQ-Learn are regularized versions of~\footnote{See \citep[Sec.~3.3]{reddysqil} for SQIL and \citep[Sec.~4]{garg2021iq} for IQ-Learn.} 
\begin{equation}\label{eq:SQIL-IQ}
\begin{aligned}
\ell_Q({\bm\omega}) \coloneqq \E_{\rho^{\pi_E}}[r_{Q_{\bm\omega}}(\rvs,\rva)] - \E_{\rvs_0\sim\eta}[V^*(\rvs_0)],
\end{aligned}\tag{5b}
\end{equation}
where $V^*(\rvs)=\log \sum_{\rva \in \gA} \exp (Q_{\bm\omega}(\rvs, \rva))$. Eq.~(\ref{eq:SQIL-IQ}) can be derived by transforming $\ell({\bm\theta})$ (Eq.~(\ref{eq:equiv})) via replacing $r_{\bm\theta}$  with $r_{Q_{\bm\omega}}(\rvs,\rva) \coloneqq Q_{\bm\omega}(\rvs,\rva) - \gamma\E_{\rvs' \sim P(\cdot | \rvs, \rva)} [V^*(\rvs')]$ -- the implicit reward defined as the differences of $\bm\omega$-parameterized soft Q-values via the soft Bellman equation (Eq.~(\ref{eq:SBE})). 

For ER, we show that the objective of a basic form of $f$-IRL --- assuming state-only rewards and minimizing the KL divergence between expert and learner state marginals --- is equivalent to $\ell({\bm\theta})$, up to a constant. That is (proof of Eq.~(\ref{eq:firl}) in Appendix~\ref{app:unify}),
\begin{equation}\label{eq:firl}
     r_{\bm\theta}(\rvs) \implies \nabla_{\bm\theta} \KL(\rho^{\pi_E}(\rvs) \| \rho^{\pi_{\bm\theta}}(\rvs)) \propto - \nabla_{\bm\theta} \ell({\bm\theta}), \tag{6b}
\end{equation}
where $\rho^{\pi}(\rvs) = \rho^{\pi}(\rvs, \rva)/\pi(\rva|\rvs)$ denotes the state marginal of the occupancy measure.

Pros and cons of IR/ER methods are well-documented \citep{ren2024hybrid}. IR  offers higher computational efficiency, as Eq.~(\ref{eq:SQIL-IQ}) depends {\em solely} on estimating the soft Q-function, which encodes both reward and policy. However, this coupling of reward and environment dynamics can lead to inaccuracies under dynamics shift, thereby limiting the reward  transferability to new dynamics. In contrast, ER methods learn reward functions directly and avoid this entanglement, offering better robustness to dynamics shift. In light of this, our framework will adopt the ER formulation.

\section{Trust Region Reward Optimization}\label{sec:TRRO}
In this section, we introduce {\em Trust Region Reward Optimization} (TRRO), a theoretical IRL framework that enforces stability by producing a guaranteed increase on the likelihood of expert behavior. To our knowledge, it provides the {\em first} formal theoretical stability guarantee for IRL.

To proceed, 
let ${\bm\theta}_\rold$ denote the current reward parameter and assume we have the corresponding optimal policy {\blue $\pi_{\rold}$}. As argued in Sec.~\ref{subsec:ML-IRL}, gradient-based reward updates cannot rigorously ensure an improvement in $\ell({\bm\theta})$.
We thus consider a non-gradient-based approach. Our key idea is to restrict the search for ${\bm\theta}_\rnew$ within a region centered around ${\bm\theta}_\rold$ such that all ${\bm\theta}$ in that region admit an increase on $\ell({\bm\theta})$. To do so, we introduce the following local approximation to $\ell({\bm\theta})$:  
\begin{equation}\label{eq:surrogate}
\begin{aligned}
    \ell_{{\bm\theta}_\rold}({\bm\theta}) \coloneqq \E_{\rho^{\pi_E}}[r_{\bm\theta}(\rvs, \rva)] - \E_{\rvs_0\sim\eta}[V_{r_{\bm\theta}}^{{\blue \pi_{\rold}}}(\rvs_0)]
    =\; J(\pi_E, r_{\bm\theta}) - J({\blue \pi_{\rold}}, r_{\bm\theta}).
\end{aligned}\tag{5c}
\end{equation}
\begin{proposition}\label{prop:local-approx}
    Suppose \(r_{\bm\theta}\) is differentiable.  The surrogate function \(\ell_{{\bm\theta}_\rold}({\bm\theta})\) in {\em Eq.~(\ref{eq:surrogate})} matches the original objective \(\ell({\bm\theta})\) in {\em Eq.~(\ref{eq:equiv})} to first order, {\em \ie}, for any value \( {\bm\theta}_\rold\):
    \begin{equation}\label{eq:local-approx}
        \underbrace{\ell_{{\bm\theta}_\rold}({\bm\theta}_\rold) = \ell({\bm\theta}_\rold)}_{\equiv \E_{\rho^{\pi_E}}[r_{{\bm\theta}_\rold}(\rvs, \rva)] - \E_{\rvs_0\sim\eta}[V_{r_{{\bm\theta}_\rold}}^{{\blue \pi_{\rold}}}(\rvs_0)]} ~\text{  and }
        \underbrace{\nabla_{\bm\theta} \ell_{{\bm\theta}_\rold}({\bm\theta}) |_{{\bm\theta} ={\bm\theta}_\rold}  = \nabla_{\bm\theta} \ell({\bm\theta}) |_{{\bm\theta} ={\bm\theta}_\rold}}_{\equiv \E_{\rho^{\pi_E}}[\nabla_{\bm\theta} r_{\bm\theta}(\rvs, \rva)] - \E_{\rho^{{\blue \pi_{\rold}}}}[\nabla_{\bm\theta} r_{\bm\theta}(\rvs, \rva)]|_{{\bm\theta} ={\bm\theta}_\rold}}. \tag{6c}
    \end{equation}
\end{proposition}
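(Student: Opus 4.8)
The plan is to verify the two assertions in Eq.~(\ref{eq:local-approx}) separately: the zeroth-order (value) match $\ell_{{\bm\theta}_\rold}({\bm\theta}_\rold) = \ell({\bm\theta}_\rold)$ and the first-order (gradient) match at ${\bm\theta} = {\bm\theta}_\rold$. Both collapse onto a single structural fact that I would state up front: by construction $\pi_\rold$ is precisely the optimal policy $\pi_{{\bm\theta}_\rold}$ induced by the reward $r_{{\bm\theta}_\rold}$.

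\textbf{Value match.} I would argue this by direct substitution. Setting ${\bm\theta} = {\bm\theta}_\rold$ in the definition of $\ell_{{\bm\theta}_\rold}({\bm\theta})$ in Eq.~(\ref{eq:surrogate}) turns the term $V^{\pi_\rold}_{r_{\bm\theta}}$ into $V^{\pi_{{\bm\theta}_\rold}}_{r_{{\bm\theta}_\rold}}$, which is exactly the soft value term in $\ell({\bm\theta}_\rold)$ as given by Eq.~(\ref{eq:equiv}); the first term $\E_{\rho^{\pi_E}}[r_{\bm\theta}(\rvs,\rva)]$ is literally identical in the two expressions. Hence $\ell_{{\bm\theta}_\rold}({\bm\theta}_\rold) = \ell({\bm\theta}_\rold) = \E_{\rho^{\pi_E}}[r_{{\bm\theta}_\rold}(\rvs,\rva)] - \E_{\rvs_0\sim\eta}[V^{\pi_\rold}_{r_{{\bm\theta}_\rold}}(\rvs_0)]$, which is the first underbraced quantity.

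\textbf{Gradient match.} The point I would emphasize is that $\ell_{{\bm\theta}_\rold}$ is much easier to differentiate than $\ell$, because the policy $\pi_\rold$ appearing in it does \emph{not} depend on ${\bm\theta}$. Concretely, for the fixed policy $\pi_\rold$ the soft value under the reward $r_{\bm\theta}$ is the affine functional $\E_{\rvs_0\sim\eta}[V^{\pi_\rold}_{r_{\bm\theta}}(\rvs_0)] = \E_{\rho^{\pi_\rold}}[r_{\bm\theta}(\rvs,\rva)] + \gH(\pi_\rold) = J(\pi_\rold, r_{\bm\theta})$, since the causal-entropy term $\gH(\pi_\rold)$ is reward-independent; this matches the second form of $\ell_{{\bm\theta}_\rold}$ in Eq.~(\ref{eq:surrogate}). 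Interchanging $\nabla_{\bm\theta}$ with the ${\bm\theta}$-independent expectations $\E_{\rho^{\pi_E}}$ and $\E_{\rho^{\pi_\rold}}$ — legitimate because $r_{\bm\theta}$ is differentiable — then gives, for every ${\bm\theta}$,
\[
  \nabla_{\bm\theta}\ell_{{\bm\theta}_\rold}({\bm\theta}) = \E_{\rho^{\pi_E}}[\nabla_{\bm\theta} r_{\bm\theta}(\rvs,\rva)] - \E_{\rho^{\pi_\rold}}[\nabla_{\bm\theta} r_{\bm\theta}(\rvs,\rva)].
\]
Evaluating at ${\bm\theta} = {\bm\theta}_\rold$ and substituting $\pi_\rold = \pi_{{\bm\theta}_\rold}$ reproduces exactly the right-hand side of the gradient formula Eq.~(\ref{eq:grad}) in Proposition~\ref{prop:equiv}, i.e. $\nabla_{\bm\theta}\ell({\bm\theta})|_{{\bm\theta}={\bm\theta}_\rold}$, which is the second underbraced identity.

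\textbf{Where the real content sits.} The only nontrivial input is the gradient formula Eq.~(\ref{eq:grad}) for the \emph{true} objective $\ell({\bm\theta})$: a priori one expects an extra term from differentiating through the moving optimal policy $\pi_{\bm\theta}$, but that term cancels because $\pi_{\bm\theta}$ maximizes $J(\pi, r_{\bm\theta})$ over $\pi$ (an envelope-theorem phenomenon). Since Proposition~\ref{prop:equiv} already supplies this, I would simply cite it; were I to reprove it, I would invoke the soft-policy-iteration optimality of $\pi_{\bm\theta}$ recalled in Sec.~\ref{sec:pre}. The remaining obstacle is purely technical — justifying the differentiation-under-the-expectation steps — which follows from differentiability of $r_{\bm\theta}$ plus standard dominated-convergence regularity (and is immediate when $\gS\times\gA$ is finite).
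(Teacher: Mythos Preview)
Your argument is correct and is essentially the paper's own proof spelled out: the paper simply writes ``See annotated equivalence relationships above,'' meaning both identities follow by inspecting the underbraced common expressions in Eq.~(\ref{eq:local-approx}) together with the gradient formula Eq.~(\ref{eq:grad}) from Proposition~\ref{prop:equiv}. Your added remarks (the envelope-theorem intuition, the affine dependence of $J(\pi_\rold,r_{\bm\theta})$ on $r_{\bm\theta}$) are helpful elaborations but not additional content beyond what the paper intends.
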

\addtocounter{equation}{2}
\begin{proof}
    See annotated equivalence relationships above.
\end{proof}
Proposition~\ref{prop:local-approx} implies that a sufficiently small step  ${\bm\theta}_{\rold} \to {\bm\theta}_{\rnew}$, which increases $\ell_{{\bm\theta}_\rold}({\bm\theta})$, will also increases $\ell({\bm\theta})$. However, it still does not provide guidance on the suitable step size for this update. Our theorem below addresses this by deriving an explicit lower bound on $\ell({\bm\theta}_\rnew)$ in terms of $\ell_{{\bm\theta}_\rold}({\bm\theta}_\rnew)$ and the difference between \( r_{{\bm\theta}_\rold} \) and \( r_{{\bm\theta}_\rnew} \). 

\begin{theorem}\label{thm:bound}
   Let $\epsilon_{{\bm\theta}_\rold}({\bm\theta}_\rnew) \coloneqq \max_{\rvs,\rva} |r_{{\bm\theta}_\rnew}(\rvs,\rva) - r_{{\bm\theta}_\rold}(\rvs,\rva)|$. 
   Assume $|\gA|<\infty$ and 
   $|r_{{\bm\theta}_\rnew}(\rvs,\rva)|\leq R, \forall \rvs\in\gS, \rva\in\gA$. Then, the following inequality holds: 
    \begin{equation}\label{eq:bound}
    \begin{aligned}
         &\ell({\bm\theta}_\rnew) \geq \ell_{{\bm\theta}_\rold}({\bm\theta}_\rnew) - C \epsilon_{{\bm\theta}_\rold}({\bm\theta}_\rnew),\; \text{where the constant}\\
          &C =  \frac{2|\gA|}{(1-\gamma)^2}  + \frac{(5-\gamma)|\gA|R + (\gamma - \gamma^2+2)|\gA|\log|\gA|}{(1-\gamma)^4}.
    \end{aligned}
    \end{equation}
\end{theorem}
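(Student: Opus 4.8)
The plan is to strip the statement down to a soft-return regret bound. Abbreviate $r_\rold \coloneqq r_{{\bm\theta}_\rold}$, $r_\rnew \coloneqq r_{{\bm\theta}_\rnew}$, let $\pi_\rnew \coloneqq \pi_{{\bm\theta}_\rnew}$ be the energy-based optimal policy for $r_\rnew$, and write $\epsilon \coloneqq \epsilon_{{\bm\theta}_\rold}({\bm\theta}_\rnew) = \max_{\rvs,\rva}|r_\rnew(\rvs,\rva) - r_\rold(\rvs,\rva)|$. Since $\ell({\bm\theta})$ in Eq.~(\ref{eq:equiv}) and $\ell_{{\bm\theta}_\rold}({\bm\theta})$ in Eq.~(\ref{eq:surrogate}) share the term $J(\pi_E, r_{\bm\theta})$, evaluating both at ${\bm\theta}_\rnew$ gives
\[
\ell({\bm\theta}_\rnew) - \ell_{{\bm\theta}_\rold}({\bm\theta}_\rnew) \;=\; -\bigl(J(\pi_\rnew, r_\rnew) - J(\pi_\rold, r_\rnew)\bigr).
\]
As $\pi_\rnew$ maximizes $J(\cdot, r_\rnew)$ over $\Pi$, the bracket is $\ge 0$, so the theorem is equivalent to the upper bound $J(\pi_\rnew, r_\rnew) - J(\pi_\rold, r_\rnew) \le C\epsilon$ on the soft-return regret of using the $r_\rold$-optimal policy against reward $r_\rnew$.

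\textbf{A short argument that already suffices.} Insert the old reward and split the regret:
\[
\bigl[J(\pi_\rnew, r_\rnew) - J(\pi_\rnew, r_\rold)\bigr] + \bigl[J(\pi_\rnew, r_\rold) - J(\pi_\rold, r_\rold)\bigr] + \bigl[J(\pi_\rold, r_\rold) - J(\pi_\rold, r_\rnew)\bigr].
\]
The middle bracket is $\le 0$ because $\pi_\rold$ maximizes $J(\cdot, r_\rold)$. In each of the other two brackets the entropy $\gH(\cdot)$ cancels (same policy), leaving $\pm\E_{\rho^\pi}[r_\rnew - r_\rold]$ with $\pi \in \{\pi_\rnew, \pi_\rold\}$; by H\"older's inequality and $\|\rho^\pi\|_1 = \tfrac{1}{1-\gamma}$ (the occupancy measure here is unnormalized), each such term is at most $\tfrac{\epsilon}{1-\gamma}$ in absolute value. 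Hence $J(\pi_\rnew, r_\rnew) - J(\pi_\rold, r_\rnew) \le \tfrac{2}{1-\gamma}\epsilon$, and since $|\gA| \ge 1$ and $\gamma \in (0,1)$ force $\tfrac{2}{1-\gamma} \le \tfrac{2|\gA|}{(1-\gamma)^2} \le C$, the claimed inequality follows.

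\textbf{The TRPO-style route, which the stated constant appears to track.} The presence of $|\gA|$, $R$, $\log|\gA|$ and $(1-\gamma)^{-4}$ in $C$ suggests the intended proof instead routes through the reward-induced \emph{policy} shift, paralleling TRPO's monotonic-improvement lemma: (i) the optimal soft Bellman backup (Eq.~(\ref{eq:SBE})) is a $\gamma$-contraction, so $\|Q^{\pi_\rnew}_{r_\rnew} - Q^{\pi_\rold}_{r_\rold}\|_\infty, \|V^{\pi_\rnew}_{r_\rnew} - V^{\pi_\rold}_{r_\rold}\|_\infty \le \tfrac{\epsilon}{1-\gamma}$, and via $\pi = \exp(Q - V)$ (Eq.~(\ref{eq:energy})) this gives $|\log\pi_\rnew - \log\pi_\rold| \le \tfrac{2\epsilon}{1-\gamma}$ pointwise and hence $\|\pi_\rnew(\cdot|\rvs) - \pi_\rold(\cdot|\rvs)\|_1 \le \tfrac{2|\gA|}{1-\gamma}\epsilon$ (here $\le 1$ on the probabilities converts the log-gap into an $\ell_1$ gap, introducing the $|\gA|$); (ii) a soft performance-difference identity — telescoping $J(\pi_\rnew,r_\rnew) - J(\pi_\rold,r_\rnew)$ against $V^{\pi_\rold}_{r_\rnew}$ and using the Soft Bellman equation — writes the regret as $\E_{\rho^{\pi_\rnew}}[g(\rvs)]$ with per-state gap $g(\rvs) = \sum_\rva(\pi_\rnew - \pi_\rold)\,Q^{\pi_\rold}_{r_\rnew} + \bigl(\gH_\rvs(\pi_\rnew) - \gH_\rvs(\pi_\rold)\bigr)$; (iii) from the Soft Bellman equation with $|r_\rnew| \le R$ and $\gH \le \log|\gA|$ one gets crude sup-norm bounds $\|Q^{\pi_\rold}_{r_\rnew}\|_\infty = O\!\big(\tfrac{R+\log|\gA|}{1-\gamma}\big)$ and $\max_{\rvs,\rva}|\log\pi_\rold(\rva|\rvs)| = O\!\big(\tfrac{R+\log|\gA|}{1-\gamma}\big)$ (the latter via $-\log\pi_\rold = V^{\pi_\rold}_{r_\rold} - Q^{\pi_\rold}_{r_\rold}$ together with $V^{\pi_\rold}_{r_\rold} \le \log|\gA| + \max_\rva Q^{\pi_\rold}_{r_\rold}$); (iv) bound $|g(\rvs)|$ by the product of the $\ell_1$ policy gap and these sup norms — the entropy difference $\gH_\rvs(\pi_\rnew) - \gH_\rvs(\pi_\rold)$ splitting into a $\log\pi$-weighted $\ell_1$ term and a pointwise-log term handled the same way — and finally multiply by $\|\rho^{\pi_\rnew}\|_1 = \tfrac{1}{1-\gamma}$. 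Carrying the $(1-\gamma)^{-1}$, $|\gA|$, $R$, $\log|\gA|$ factors through (i)--(iv) and collecting terms yields exactly the constant in Eq.~(\ref{eq:bound}).

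\textbf{Main obstacle.} In the short argument there is essentially none. In the TRPO-style route the one delicate point is the entropy/log-probability bookkeeping: $-\log\pi$ is a priori unbounded, so it must be re-expressed as soft value minus soft Q-value through the Soft Bellman equation and then controlled via $R$ — precisely where the finite-action assumption enters, since it makes $V^*=\log\sum_\rva\exp Q$ finite and the $\ell_1$ sums over actions meaningful. Everything else is standard telescoping/H\"older machinery; the only genuinely new ingredient relative to forward-RL TRPO is step (i), quantifying how a bounded perturbation of the reward propagates to the induced policy through the energy-based model.
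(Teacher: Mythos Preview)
Your short argument is correct and already proves the theorem, with a much sharper constant $\tfrac{2}{1-\gamma}$ in place of the paper's $C$. This is a genuinely different route from the paper. The paper does \emph{not} use the simulation trick of inserting $r_\rold$ and exploiting optimality of $\pi_\rold$; instead it first drops a nonpositive $Q$-difference to reduce to bounding $V^{\pi_\rnew}_{r_\rnew}(\rvs)-V^{\pi_\rold}_{r_\rnew}(\rvs)$, then splits this into a reward part and an entropy part and controls each by a time-expansion: it proves a state-marginal drift bound $\TV(d_t^{\pi_\rnew},d_t^{\pi_\rold})\le t\,D_{\mathrm{TV}}^{\max}(\pi_\rnew,\pi_\rold)$ and sums $\sum_t \gamma^t t=\gamma/(1-\gamma)^2$, which is precisely where the extra $(1-\gamma)^{-1}$ enters and produces the $(1-\gamma)^{-4}$ in $C$. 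Your TRPO-style sketch is closer in spirit to the paper than your short argument, but even it differs in one structural respect: you telescope against $V^{\pi_\rold}_{r_\rnew}$ to get an exact per-state gap $g(\rvs)$ under $\rho^{\pi_\rnew}$, whereas the paper compares the two discounted sums directly and absorbs the occupancy mismatch through the drift lemma. Because of this, carrying out your steps (i)--(iv) would in fact give a leading order of $(1-\gamma)^{-3}$, not $(1-\gamma)^{-4}$, so the parenthetical claim that it ``yields exactly the constant in Eq.~(\ref{eq:bound})'' is not quite right --- though the discrepancy is in your favor. In summary: your elementary argument buys a dramatically better bound with no lemmas at all; the paper's approach buys nothing extra here, but its intermediate lemmas (policy TV, log-policy gap, value sup-norms) are of independent interest as sensitivity bounds for the energy-based policy with respect to reward perturbations.
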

\begin{proof}
    In Appendix~\ref{app:thm:bound}.
\end{proof}

Since 
$\epsilon_{{\bm\theta}_\rold}({\bm\theta}_\rold)=0$, 
by continuity, there exists a ${\bm\theta}_\rnew$ in the neighborhood of ${\bm\theta}_\rold$ such that $\ell({\bm\theta}_\rnew) \geq \ell_{{\bm\theta}_\rold}({\bm\theta}_\rnew) - C \epsilon_{{\bm\theta}_\rold}({\bm\theta}_\rnew)$. 
This implies that maximizing the lower bound in Theorem~\ref{thm:bound} guarantees an  increase (or at least no decrease) on $\ell({\bm\theta})$,  which leads to the following  procedure that alternates 
between policy and reward update:
\begin{equation}\label{eq:prox-rwd-update}
\begin{aligned}
\pi &= \argmax\nolimits_\pi J(\pi, r_{{\bm\theta}_\rold})
 \mathop{\rightleftharpoons}^{\pi \to {{\blue\pi_{\rold}}}}_{{\bm\theta}_\rold \leftarrow {\bm\theta}_\rnew} \;
{\bm\theta}_\rnew &= \argmax\nolimits_{\bm\theta} \ell_{{\bm\theta}_\rold}({\bm\theta})-C\epsilon_{{\bm\theta}_\rold}({\bm\theta}).
    \end{aligned}\tag{{\myorange TRRO}}
\end{equation}
This implies the following theoretical guarantee on stability.
\begin{corollary}
Assume exact policy optimization. Staring from an arbitrary reward parameter ${\bm\theta}_0$, {\em (\ref{eq:prox-rwd-update})} will yield a sequence of reward functions $r_{{\bm\theta}_0}, r_{{\bm\theta}_1}, r_{{\bm\theta}_2}$, \ldots such that the corresponding likelihood of expert demonstrations {\bf\em monotonically increases}: 
\(
\ell({\bm\theta}_0) \leq \ell({\bm\theta}_1) \leq \ell({\bm\theta}_2) \leq \ldots.
\)
\end{corollary}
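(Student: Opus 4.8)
The plan is to establish the chain $\ell({\bm\theta}_0)\le\ell({\bm\theta}_1)\le\cdots$ by induction on $i$, reading one iteration of {\em (\ref{eq:prox-rwd-update})} as a single Minorization-Maximization step whose minorizing surrogate is exactly the lower bound of Theorem~\ref{thm:bound}. The base case is vacuous. For the inductive step, I would fix the current reward parameter ${\bm\theta}_i$ and, using the \emph{exact policy optimization} assumption, take $\pi_i = \argmax_{\pi} J(\pi, r_{{\bm\theta}_i})$ to be the corresponding optimal soft policy; this $\pi_i$ is the ``${\bm\theta}_\rold$-policy'' appearing in $\ell_{{\bm\theta}_i}(\cdot)$ (Eq.~(\ref{eq:surrogate})) and in $\epsilon_{{\bm\theta}_i}(\cdot)$ (Theorem~\ref{thm:bound}). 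The reward update then produces ${\bm\theta}_{i+1} = \argmax_{\bm\theta} \big(\ell_{{\bm\theta}_i}({\bm\theta}) - C\,\epsilon_{{\bm\theta}_i}({\bm\theta})\big)$.

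First I would observe that ${\bm\theta}_i$ is itself admissible for this maximization --- the update is unconstrained in ${\bm\theta}$, with the penalty $C\,\epsilon_{{\bm\theta}_i}(\cdot)$ playing the role of a soft trust region --- so optimality of ${\bm\theta}_{i+1}$ gives $\ell_{{\bm\theta}_i}({\bm\theta}_{i+1}) - C\,\epsilon_{{\bm\theta}_i}({\bm\theta}_{i+1}) \ge \ell_{{\bm\theta}_i}({\bm\theta}_i) - C\,\epsilon_{{\bm\theta}_i}({\bm\theta}_i)$. Next I would simplify the right-hand side with two facts already in the text: $\epsilon_{{\bm\theta}_i}({\bm\theta}_i) = \max_{\rvs,\rva}|r_{{\bm\theta}_i}(\rvs,\rva) - r_{{\bm\theta}_i}(\rvs,\rva)| = 0$, and, by the first equality in Proposition~\ref{prop:local-approx} --- which holds precisely because $\pi_i$ is optimal for $r_{{\bm\theta}_i}$, so that $V^{\pi_i}_{r_{{\bm\theta}_i}}$ equals the optimal soft value --- $\ell_{{\bm\theta}_i}({\bm\theta}_i) = \ell({\bm\theta}_i)$. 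Hence $\ell_{{\bm\theta}_i}({\bm\theta}_{i+1}) - C\,\epsilon_{{\bm\theta}_i}({\bm\theta}_{i+1}) \ge \ell({\bm\theta}_i)$. Finally I would invoke Theorem~\ref{thm:bound} with ${\bm\theta}_\rold = {\bm\theta}_i$ and ${\bm\theta}_\rnew = {\bm\theta}_{i+1}$, giving $\ell({\bm\theta}_{i+1}) \ge \ell_{{\bm\theta}_i}({\bm\theta}_{i+1}) - C\,\epsilon_{{\bm\theta}_i}({\bm\theta}_{i+1})$; chaining the two inequalities yields $\ell({\bm\theta}_{i+1}) \ge \ell({\bm\theta}_i)$, which closes the induction and hence the full monotone chain.

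The step I expect to require the most care is not any individual inequality --- each is immediate given the earlier results --- but the bookkeeping that makes the induction legitimate. I would need to ensure (i) that the $\argmax$ defining ${\bm\theta}_{i+1}$ is actually attained (e.g.\ by assuming the reward family is parameterized over a compact set, or that the penalized surrogate is coercive in ${\bm\theta}$), and (ii) that the hypotheses of Theorem~\ref{thm:bound}, namely $|\gA| < \infty$ and a uniform reward bound $|r_{\bm\theta}| \le R$, hold for every ${\bm\theta}_i$ along the trajectory --- which is automatic whenever $\gR$ is a uniformly bounded function class. I would also remark that the exact-policy-optimization hypothesis enters the argument \emph{only} through Proposition~\ref{prop:local-approx}: under merely approximate policy optimization the tightness equality $\ell_{{\bm\theta}_i}({\bm\theta}_i) = \ell({\bm\theta}_i)$ weakens to an inequality carrying a policy-suboptimality error term, so the conclusion degrades to monotonicity up to that error --- the very slack that PIRO's adaptive step sizes are designed to manage in practice.
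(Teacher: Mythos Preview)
Your proposal is correct and is exactly the Minorization--Maximization argument the paper has in mind: the paper does not give a separate formal proof of the corollary but simply observes (in the text surrounding it and in the paragraph following) that $\ell_{{\bm\theta}_\rold}({\bm\theta}) - C\epsilon_{{\bm\theta}_\rold}({\bm\theta})$ minorizes $\ell({\bm\theta})$ and touches it at ${\bm\theta}_\rold$, so maximizing the surrogate cannot decrease $\ell$. Your write-up just makes this chain of inequalities explicit, and your caveats about attainment of the $\argmax$ and the uniform reward bound are fair points the paper leaves implicit.
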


As illustated in Fig.~\ref{fig:proximal}, TRRO is a type of Minorization-Maximization (MM) algorithms \citep{hunter2004tutorial}, where \( \ell_{{\bm\theta}_\rold}({\bm\theta}) - C \epsilon_{{\bm\theta}_\rold}({\bm\theta}) \) is a surrogate that minorizes \( \ell({\bm\theta}) \) and matches it at \( {\bm\theta} = {\bm\theta}_\rold \).~\footnote{If \( \ell_{{\bm\theta}_\rold}({\bm\theta}) - C\epsilon_{{\bm\theta}_\rold}({\bm\theta}) \) reaches a local maximum at \( {\bm\theta}_\rold \), a wider search range is needed -- a known limitation of MM algorithms. This, however, is out of the scope of this paper.} Maximizing the surrogate ensures progress on the original objective. In light of this, TRRO plays a role in inverse RL analogous to Trust Region Policy Optimization (TRPO) \citep{schulman2015trust} in forward RL: while TRPO's theoretical framework uses the MM algorithm to ensure monotonic policy improvement in expcted return with respect to a fixed reward function, our TRRO ensures monotonic expected return gap (equivalent to the likelihood) reduction with respect to the given the expert behavior.

\begin{figure}[t]
\centering
\begin{minipage}{.48\textwidth}
    \includegraphics[width=\linewidth]{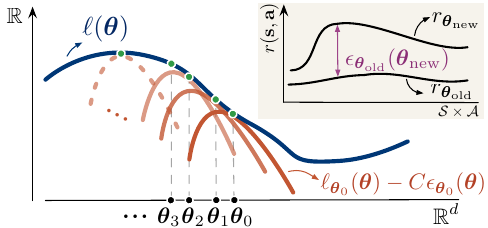}
\end{minipage}\hfill
\begin{minipage}{.48\textwidth}
    \caption{\small{\bf Illustration of the mechanism of Trust Region Reward Optimization (TRRO).} 
The reward optimization follows a Minorization-Maximization process, iteratively optimizing a surrogate function that minorizes the original likelihood objective, thereby guaranteeing monotonic improvement in the likelihood of expert demonstrations  (assuming exact policy optimization).}
    \label{fig:proximal}
\end{minipage}
\end{figure}

\section{Proximal Inverse Reward Optimization Algorithm}\label{sec:PRO}
In this section, we develop a practical algorithm, {\em Proximal Inverse Reward Optimization} (PIRO). It approximates the theoretical guarantee of TRRO, enabling adpatively larger reward update steps, efficient policy optimization and accommodating continuous state-action spaces. It operates under realistic constraint of finite expert demonstrations \(\gD_E = \{(\rvs_0,\rva_0,\ldots)\}\). 
\smallskip

\noindent{\bf Adaptive reward update.} The original scale factor $C$ is often too large, leading to excessively small reward updates.~\footnote{See Appendix~\ref{app:C} for an experiment for the performance comparison between theoretical and adaptive $C$.} To mitigate this, we introduce an customizable coefficient $\mu > 0$ to relax the scale. Another issue is that $\epsilon_{{\bm\theta}_\rold}({\bm\theta})$ is indifferentiable due to its definition as the maximum norm. To address this, we replace $\epsilon_{{\bm\theta}_\rold}({\bm\theta})$ with the differentiable $L^2$ norm of reward differences and calculate it on the state-action space for the tabular cases or, more generally, estimate it on a subset $\hat{\gD}_E \subset 
\gD_E$ and a set of rollouts { $\gD_S$} sampled from { $\pi_\rold$} for continuous control:
\begin{equation}\label{eq:e2}
    \bar{\epsilon}_{{\bm\theta}_\rold}({\bm\theta}) \coloneqq  \left(\sum\nolimits_{(\rvs, \rva) \in \hat{\gD}_E \cup { \gD_S}} (r_{{\bm\theta}_\rold}(\rvs, \rva) - r_{\bm\theta}(\rvs, \rva))^2\right)^{1/2}.
\end{equation}
Note that through $\bar{\epsilon}_{{\bm\theta}_\rold}({\bm\theta})$, we also implicitly penalize the magnitude of the reward function (the $L^2$ norm $\|r_{\bm\theta}(\rvs, \rva)\|_2$), similar to the reward sparsity regularization in SQIL \citep{reddysqil}, which discourages assigning high rewards to state-action pairs absent in demonstrations.

The above approximations yield the following objective for each reward update step:
\begin{equation}
    {\bm\theta}_\rnew =   \argmax\nolimits_{\bm\theta} L_{{\bm\theta}_\rold}({\bm\theta}) \coloneqq \ell_{{\bm\theta}_\rold}({\bm\theta}) -  \mu \bar{\epsilon}_{{\bm\theta}_\rold}( {\bm\theta}).
\end{equation}
We minimize $L_{{\bm\theta}_\rold}({\bm\theta})$ using gradient descent by estimating
\begin{equation}\label{eq:gradL}
\begin{aligned}
    \nabla_{\bm\theta} L_{{\bm\theta}_\rold}({\bm\theta}) = \E_{\hat{\gD}_E}[\nabla_{\bm\theta} r_{\bm\theta}(\rvs, \rva)] -
    \E_{{ \gD_S}}[\nabla_{\bm\theta} r_{\bm\theta}(\rvs, \rva)] - \mu \nabla_{\bm\theta} \bar{\epsilon}_{{\bm\theta}_\rold}( {\bm\theta}).
\end{aligned}
\end{equation}
We adaptively adjust the coefficient $\mu$ as follows:
\begin{equation}\label{eq:coef}
    \begin{aligned}
        &\text{ If } \bar{\epsilon}_{{\bm\theta}_\rold}( {\bm\theta}) > \bar{\epsilon}^{\text{target}} \times x, \text{ then } \mu \leftarrow \mu \times y; 
        &\text{ If } \bar{\epsilon}_{{\bm\theta}_\rold}( {\bm\theta}) < \bar{\epsilon}^{\text{target}} / x, \text{ then }  \mu \leftarrow \mu / y,
    \end{aligned}
\end{equation}
where $\bar{\epsilon}^{\text{target}}>0$, $x,y>1$ 
are predefined hyperparameters. The updated $\mu$ is used for the next reward update step. Sensitivity tests for $x,y,\bar{\epsilon}^{\text{target}}$ are in Sec.~\ref{subsec:sensitivity}. 

\noindent{\bf Practical policy optimization.}
In practice, we cannot expect exact policy optimization. For efficiency, similar to the setting in ML-IRL \citep{zeng2022maximum,zeng2023demonstrations}, we calculate $\pi_\rold$ by performing several rounds of soft policy iterations through Soft Actor-Critic \citep{haarnoja2018soft} under $r_{{\bm\theta}_\rold}$ and $\pi_\rold$. 

\noindent{\bf Final practical algorithm.} Finally, we obtain the following practical iterative procedure for PIRO: 
\begin{equation*}\label{eq:PRO}
\begin{aligned}
&\pi \leftarrow k\; \mathrm{SAC} \text{ rounds with }r_{{\bm\theta}_\rold},{ \pi_\rold} 
\mathop{\rightleftharpoons}^{{ \pi \to \pi_{\rold}} }_{{\bm\theta}_\rold \leftarrow {\bm\theta}_\rnew} 
&{\bm\theta}_\rnew \leftarrow  n \text{ grad. steps with }  \nabla_{\bm\theta} L_{{\bm\theta}_\rold}({\bm\theta}). 
    \end{aligned}\tag{{\myorange PIRO}}
\end{equation*}

\begin{wrapfigure}{r}{.6\textwidth}
\vspace{-2.2em}
\begin{minipage}{0.6\textwidth}
\begin{algorithm}[H]
    \caption{Proximal Inverse Reward Optimization (PIRO)} \label{alg:PRO-short}
    \small
    \begin{algorithmic}[1]
       \STATE {\bfseries Input:} Expert demostrations $\mathcal{D}_E$; Initialized reward parameter ${\bm\theta}_\rold$, policy $\pi_\rold$; Targets $\bar{\epsilon}^{\text{target}}$, coefficient $\mu$ and scalars $x,y > 1$; Loop control parameters $m,k,n>0$.
       \FOR{$i = 1$ to $m$}
         \STATE $\pi_\rold \leftarrow$ $k$ rounds of SAC based on $r_{{\bm\theta}_\rold}$ and $\pi_\rold$.
         \FOR{$j = 1$ to $n$}
           \STATE Sample a batch $\hat{\gD}_E \subset \mathcal{D}_E$.
           \STATE Rollout $\pi_{\rold}$ to sample a set of transitions $\gD_S$.
           \STATE Estimate $\nabla_{\bm\theta} L_{{\bm\theta}_{\rold}}({\bm\theta})$ on $\hat{\gD}_E$ and $\gD_S$. \hfill $\rhd$ Eq.~(\ref{eq:gradL})
           \STATE Update ${\bm\theta}$ to increase $L_{{\bm\theta}_{\rold}}({\bm\theta})$ via $\nabla_{\bm\theta} L_{{\bm\theta}_{\rold}}({\bm\theta})$.
         \ENDFOR
         \STATE  Adjust $\mu$ and Set ${\bm\theta}_\rold \leftarrow {\bm\theta}$. \hfill $\rhd$ Eq.~(\ref{eq:coef})
       \ENDFOR
       \STATE {\bfseries Output:} reward $r_{{\bm\theta}_\rold}$ and policy $\pi_\rold$.
    \end{algorithmic}
\end{algorithm}
\end{minipage}
\end{wrapfigure}
Note that (\ref{eq:PRO}) degrades into Alg.~\ref{alg:direct} (the general procedure of non-adversarial IRL) if $k=n=1$ and $\mu=0$. This indicates that, in theory, PIRO improves stability at the cost of more frequent  updates. However, our empirical evaluation in the next section (Tab.~\ref{tab:reward} and Fig.~\ref{fig:curves}) reveals that this added computational effort does not compromise time efficiency, as the improved stability leads to faster convergence, effectively offsetting the additional update overhead. 

To summarize, we show the training procedure of PIRO in Alg.~\ref{alg:PRO-short}. 

\section{Related Work}\label{sec:related}
{\bf Adversarial IRL.} 
Predominant IRL methods follow an adversarial learning paradigm (see GAIL \citep{ho2016generative} and discriminator-actor-critic (DAC) \citep{kostrikovdiscriminator}), with AIRL variants \citep{finn2016connection,finn2016guided,fu2018learning} and extensions \citep{yu2019meta,yu2019multi,chen2023adversarial,chen2024meta} as key representatives. As argued in \citep{ren2024hybrid}, this also includes methods that do not explicitly adopt a min-max game formulation but implicitly learn from its adversarial dynamics, such as classic approaches like Apprenticeship Learning \citep{abbeel2004apprenticeship,abbeel2005exploration} and Max-Ent IRL \citep{ziebart2008maximum,ziebart2010modeling}. Recent work \citep{swamy2021moments} unifies these adversarial methods through the concept of Moment Matching (a.k.a. Integral Probability Metric) \citep{li2015generative}, offering a broader perspective on their underlying principles.
Building on this, recent methods further improve adversarial IRL by providing sample-efficient policy update mechnisms such as FILTER \citep{swamy2023inverse} (resets the learner to expert states) and HyPE \citep{ren2024hybrid} (a hybrid-RL based IRL algorithm that trains on a mixture of online and expert data to curtail unnecessary exploration in policy updates).  In contrast to all these methods, our approach is non-adversarial and features principled stable reward learning. 

\noindent{\bf Non-adversarial IRL.}  We expand the discussion on non-adversarial IRL methods in the introduction and Sec.~\ref{sec:unified}. Coherent Soft Imitation Learning (CSIL) \citep{watson2023coherent} simplifies the idea of non-adversarial IRL with a two-stage procedure: it first extracts a reward function from a max-likelihood policy with a reference policy and then trains a policy based on this reward. BC-IRL \citep{szotbc} minimizes the mean squared loss rather than maximizing the likelihood, but with no guarantee on stability. Least-squares inverse Q-learning (LSIQ) \citep{alleast} penalizes the reward function magnititude and give its theoretical support; PIRO does so implicitly in its practical implementation of reward update constraints. To handle distributional shift due to limited state-action coverage, some methods adopt the model-based paradigm and conservative updates --- either on the policy (Offline ML-IRL \citep{zeng2023demonstrations}) or on the reward function (CLARE \citep{yue2023clare}). 
In contrast, our PIRO is model-free and leverages online rollouts. 
Another recent method, SFM \citep{jainNonAdversarialInverseReinforcement2024}, minimizes the imitation gap by matching expert Successor Features (i.e., predictions of future state occupancies under a policy). A technically related method is P$^2$IL \citep{viano2022proximal}, which applies the proximal point method to stabilize soft Q-function learning under linear MDP assumptions. Our method addresses general MDPs with explicit rewards.

{\bf Stable Inverse Optimal Control.} A line of work in inverse optimal control uses trust-region or Lyapunov-based methods \citep{shen2022inverse,cao2023trust,tesfazgi2024stable} to ensure stability but requires knowledge of system dynamics and second-order optimization, limiting scalability. PIRO, in contrast, is model-free and relies only on first-order optimization, making it more practical for real-world applications.

\begin{table}[t]
\scriptsize 
\setlength\tabcolsep{2pt}
\renewcommand{\arraystretch}{1.0}
\caption{\small {\bf Averaged Rewards} (five independent runs) on five MuJoCo and four Gym Robotics tasks.}
\resizebox{\textwidth}{!}{%
\label{tab:reward}
\begin{tabular}{c|l||
    r| 
    rr| 
    rrrr| 
    c| 
    rrr| 
    rr| 
    c| 
    cl} 
& \multirow{2}{*}{\textbf{Task}} 
& \multirow{2}{*}{\textbf{Expert}} 
& \multicolumn{2}{c|}{\textbf{IL}} 
& \multicolumn{4}{c|}{\textbf{Adv. IRL (Online)}} 
& \multicolumn{1}{c|}{\textbf{Adv. (Offline)}} 
& \multicolumn{3}{c|}{\textbf{Non-Adv. Online}} 
& \multicolumn{2}{c|}{\textbf{Non-Adv. Offline}} 
& \multirow{2}{*}{\textbf{PIRO}} 
& \multirow{2}{*}{\mygreen {\bf\em Gain}} \\
\cline{4-9} \cline{10-15} 
& & 
& BC & GAIL 
& MM & AIRL & FILTER & HyPE 
& DAC 
& IQ & ML-IRL & $f$-IRL 
& CSIL & P$^2$IL 
& & & \\
\hline

\parbox[t]{2mm}{\multirow{5}{*}{\rotatebox[origin=c]{90}{\textbf{MuJoCo}}}} & ~~ 
Ant-v4 & 5926.2 & 1631.5 & 996.9 & -304.0 & 991.4 & -376.3 & 2800.5 & 923.8 & 3589.8 & 5382.5 & 980.4 & 420.7 & 976.6 & \textbf{5967.2} & {\mygreen  +584.7}\\ 
& ~~ Humanoid-v4 & 5501.0 & 418.1 & 508.4 & 367.2 & 281.4 & 291.7 & 717.5 & 76.3 & 1847.5 & 5573.4 & 470.4 & -- & -- & \textbf{5954.9} & {\mygreen +381.5}\\ 
& ~~ Walker2d-v4 & 5524.5 & 384.4 & 4158.1 & 70.4 & 72.8 & 77.7 & 1478.7 & -3.0 & 3023.0 & 4794.7 & 243.8 & 686.1 & 1054.0 & \textbf{5643.7} & {\mygreen +849.0}\\ 
& ~~ Hopper-v4 & 3632.8 & 1034.4 & \textbf{3535.7} & 57.8 & 13.5 & 37.3 & 2593.7 & 3321.6 & 3424.5 & 3316.4 & 361.7 & 6.7 & 25.8 & 3362.0 & {\myred -173.7}\\ 
& ~~ Halfcheetah-v4 & 12266.1 & 221.2 & 1298.8 & 20.3 & 2251.4 & 0.3 & 6473.4 & 9645.0 & 3825.5 & 11873.2 & -0.7 & -107.2 & -0.1 & \textbf{12587.4} & {\mygreen +714.2}\\ 

\hline
 \parbox[t]{2mm}{\multirow{4}{*}{\rotatebox[origin=c]{90}{\textbf{Robotics}}}} & ~~ 
AntMaze-UMazeDense-v4 & 35.6 & 8.8 & 5.2 & 5.1 & 4.5 & 6.1 & 11.9 & -- & 3.9 & 4.2 & 3.6 & -- & 3.4 & \textbf{25.7} & {\mygreen +13.8} \\  
& ~~ AntMaze-MediumDense-v4 & 26.9 & 1.1 & 1.3 & 3.4 & 2.6 & 1.9 & 3.0 & -- & 3.4 & 0.9 & 1.1 & -- & 2.9 & \textbf{9.4}  & {\mygreen +6.0}  \\  
& ~~ AntMaze-LargeDense-v4 & 11.5 & 1.1 & 0.9 & 1.7 & 3.4 & 0.6 & 1.5 & -- & 0.8 & 0.3 & 0.9 & -- & 0.2 & \textbf{8.8} &  {\mygreen +5.4} \\  
& ~~ AdroitHandePen-Human-v1 & 1062.5 & 44.1 & -8.7 & -344.3 & -593.9 & -685.4 & -866.7 & -- & -751.9 & -251.2 & -65.3 & -- & -61.2 & \textbf{254.0} & {\mygreen +209.9} \\ 
\hline 
& ~~ runtime per iteration & - & - & 3-14s & 8-79s & 5-8s & 9-41s & 11-70s & 135-142s & 7-57s & 93-166s & 16-85s & 68-90s & 20-111s & 96-178s & --\\
\end{tabular}
}
{\bf Note:} DAC, CSIL and P$^2$IL are not evaluated on certain tasks due to compatibility issues cause by version conflicts. Specifically, the current implementations of DAC and P$^2$IL are incompatible with the current Gymnasium Robotics suite, while P$^2$IL and CSIL are incompatible with the Humanoid version used in testing other algorithms.
\vspace{-1em}
\end{table}

\section{Experiments}\label{sec:experiments}
We focus on the following key performance indicators in the empirical evaluation: {\bf (1)} reward recovery and policy imitation, {\bf (2)} learning stability, {\bf (3)} sample efficiency. We also test PIRO's capability of reward transfer to new environment dynamics and learning state-only rewards 
We evaluate alogrithms on five MuJoCo locomotion and four Gym-Robotics tasks (see Tab.~\ref{tab:reward}). To examine PIRO's capability of real-world problem solving, we additionally provide a real-world case study on an animal behavior modeling task in Appendix~\ref{app:meerkat}, where PIRO shows superior performance compared to baselines.

{\bf Experimental Setup.} For MuJoCo tasks, we use the same demonstrations as $f$-IRL \citep{ni2021f} and ML-IRL \citep{zeng2022maximum}, keeping original hyperparameters except for standardized batch sizes and training steps to ensure fair comparison under identical computational budgets. 
Robotic tasks use expert trajectories from Minari Offline RL datasets \citep{minari}. We use a {\em single} expert trajectory per task in order to examine their imitation capability; the only exception is AdroitHandPen, where we use 10 expert trajectories instead of one to ensure convergence. Full implementation details, including hyperparameters, network architectures and trajectory lengths, are in Appendix~\ref{app:setup}.



\subsection{Reward Recovery and Policy Imitation}
The reward performance is shown in Tab.~\ref{tab:reward}. PIRO consistently outperforms or matches all baselines across nearly all tasks. The performance gains are especially pronounced in harder domains such as Humanoid, AntMaze, and AdroitHand, where PIRO shows substantial improvements over the best baseline. On average, PIRO demonstrates strong reward recovery and policy imitation.
Although PIRO incurs a moderately higher computation time per iteration, this reflects its principled stable reward optimization mechanism: the increased runtime stems from controlled updates that ensure {\em stable policy improvement} (justified in the next experiment).

\subsection{Learning Stability}
We investigate learning stability by analyzing the learning curves across all experimental tasks, which are shown in Fig.~\ref{fig:curves}. PIRO consistently outperforms ML-IRL and demonstrates significantly higher stability compared to other baselines throughout the learning process (except slightly weaker performance on AntMaze-MediumDense-v4). In challenging AntMaze environments, while PIRO exhibits fluctuation, it remains the only method capable of successfully imitating expert behavior, likely due to the complex environment dynamics that cause the failures of other algorithms.

\begin{figure}[t]
    \centering
    \includegraphics[width=.95\linewidth]{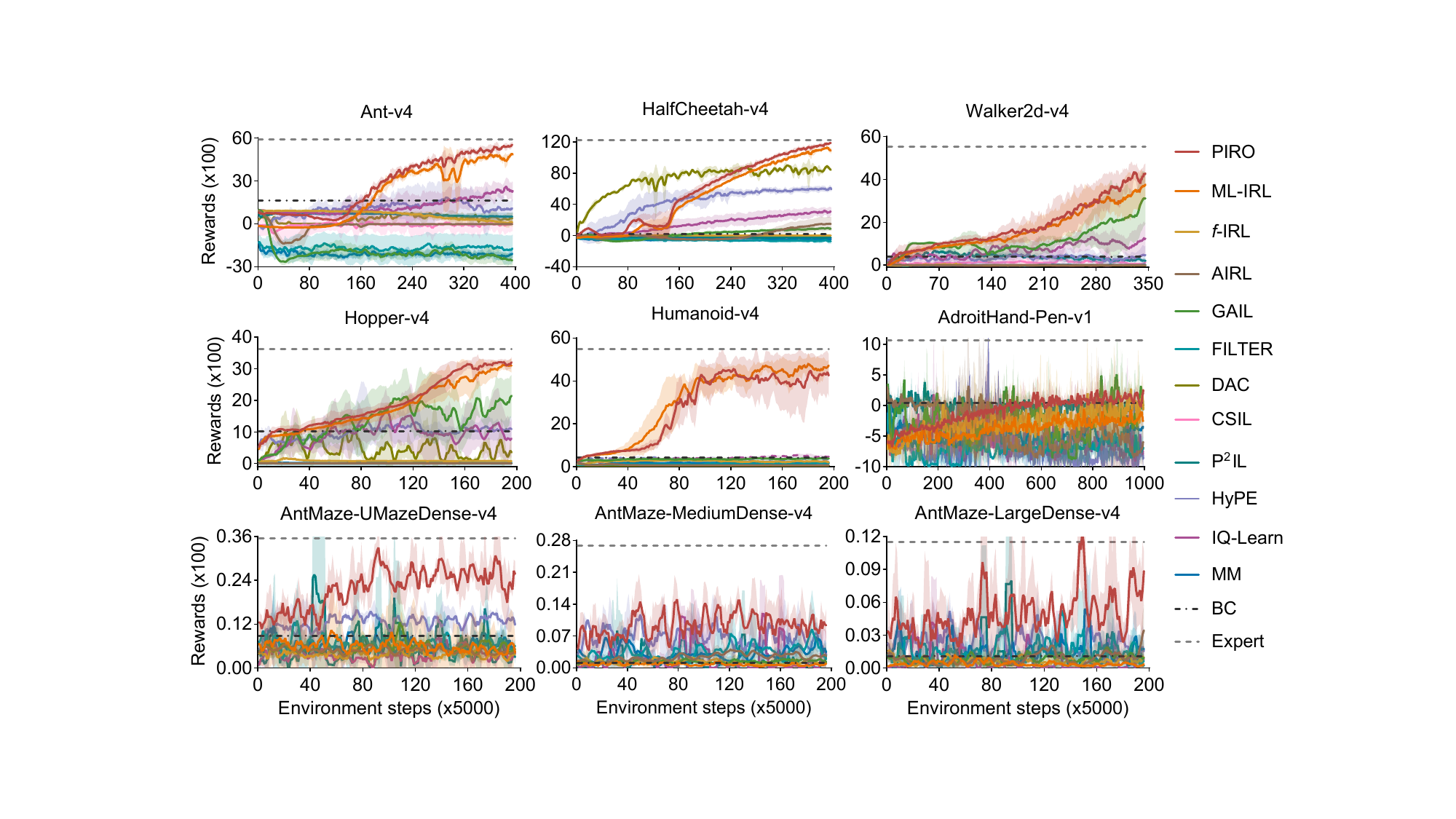}
    \caption{\small{\bf Reward curves of algorithms} on MuJoCo locomotion tasks and Gym Robotics tasks.} 
    \label{fig:curves}
\end{figure}

\subsection{Sample Efficiency}
We assess sample efficiency by analyzing the convergence speed with respect to the environment steps, which can be observed in Fig.~\ref{fig:curves}. PIRO consistently delivers competitive or faster convergence speed. Although in certain environments our method exhibits lower sample efficiency than some baselines (e.g., DAC on HalfCheetah-v4), PIRO ultimately achieves higher final rewards after convergence and approaches expert-level performancem, while most baselines are far from expert performance after convergence. Moreover, in these environments PIRO demonstrates more stable improvements throughout training.

\subsection{Learning State-only Rewards}

\begin{wrapfigure}{r}{.5\textwidth}
    \centering
    \vspace{-3.3em}
    \includegraphics[width=\linewidth]{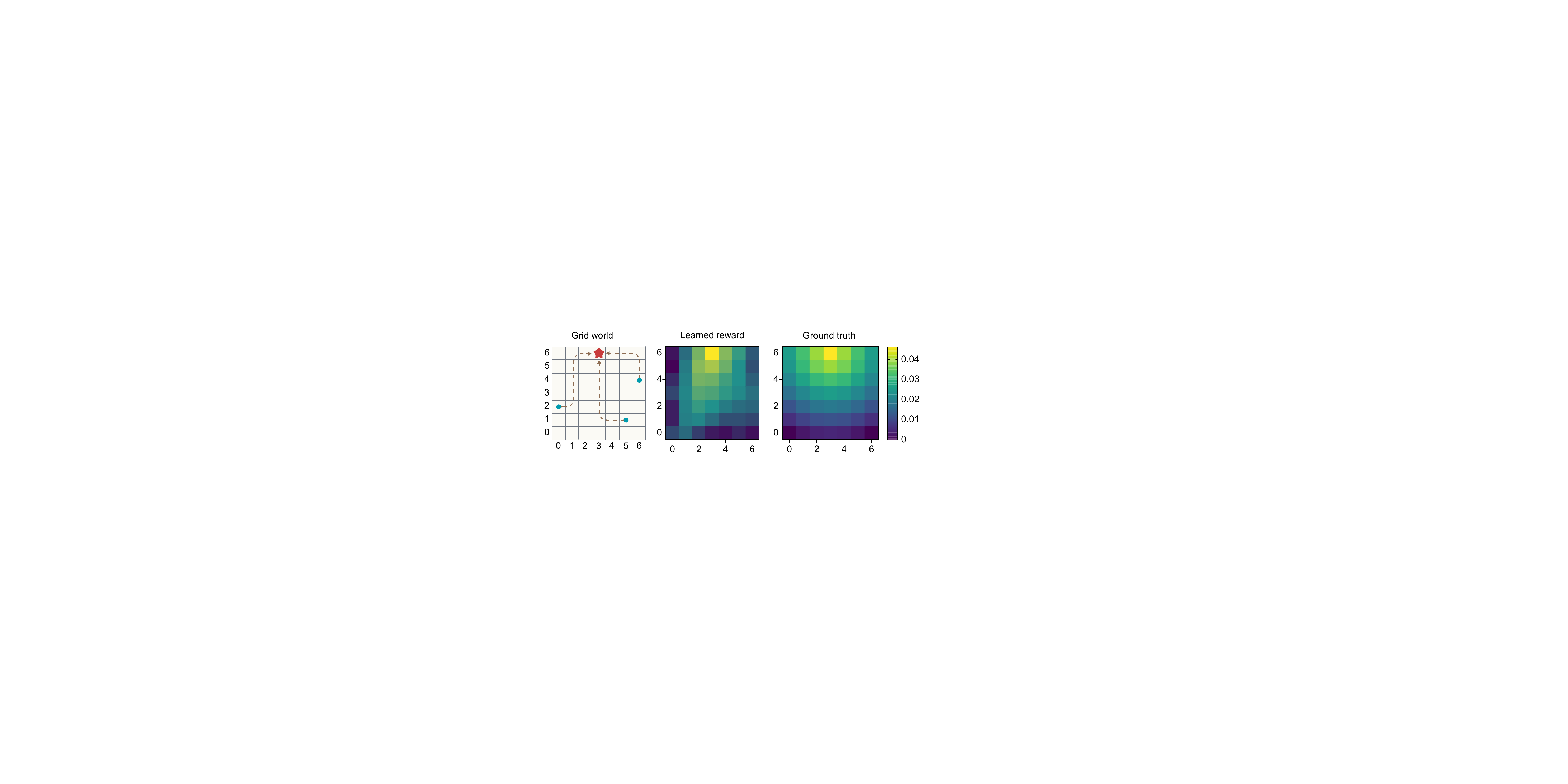}
    \vspace{-1em}
    \caption{\small {\bf Experiments on reward recovery in tasks with state-only rewards.}
    \textbf{Left:} The task is a $7\times7$ grid world, where the agent starts from a random initial position (blue circles) with the objective of reaching the target position (red star) via the shortest possible path. \textbf{Right:} The ground truth reward at each position is defined as the negative Euclidean distance to the terminal state.
    \textbf{Middle:} The reward recovered by PIRO and the ground-truth reward function is highly consistent with the ground truth reward. Cumulative rewards: $-9.24$ (expert) vs. $-8.48$ (PIRO).}
    \label{fig:state_only}
    \vspace{-2em}
\end{wrapfigure}

As explored in \citep{fu2018learning}, restricting rewards to be solely state-dependent mitigates ambiguity from {\em reward shaping} \citep{ng1999shaping}, that is, a class of reward transformations that yield the same optimal policies, making it impossible for an IRL algorithm to identify the true reward without prior knowledge of the environment. This also improves generalization across MDPs with different dynamics.
Thanks to explicit reward learning, PIRO naturally supports state-only rewards by directly parameterizing $r_{\bm\theta}(\rvs)$, without the additional modifications required by implicit reward methods \citep{garg2021iq}. Empirically, we demonstrate PIRO's effectiveness in recovering state-only ground-truth rewards in Fig.~\ref{fig:state_only}. 

\subsection{Reward Transfer}
To assess the transferability of the learned reward function, we evaluate whether a reward learned under the original environment dynamics can induce an effective policy when the dynamics change. LunarLander provides a testbed for this as we can alter its dynamics by ``adding winds'' in the simulated physical conditions.  
As shown in Fig.~\ref{fig:lunarlander_comparison}, the resulting policy performs well under the modified dynamics, demonstrating that PIRO recovers robust reward functions capable of generalizing across environmental changes.

\begin{figure}[t]
    \centering
    \includegraphics[width=.48\linewidth]{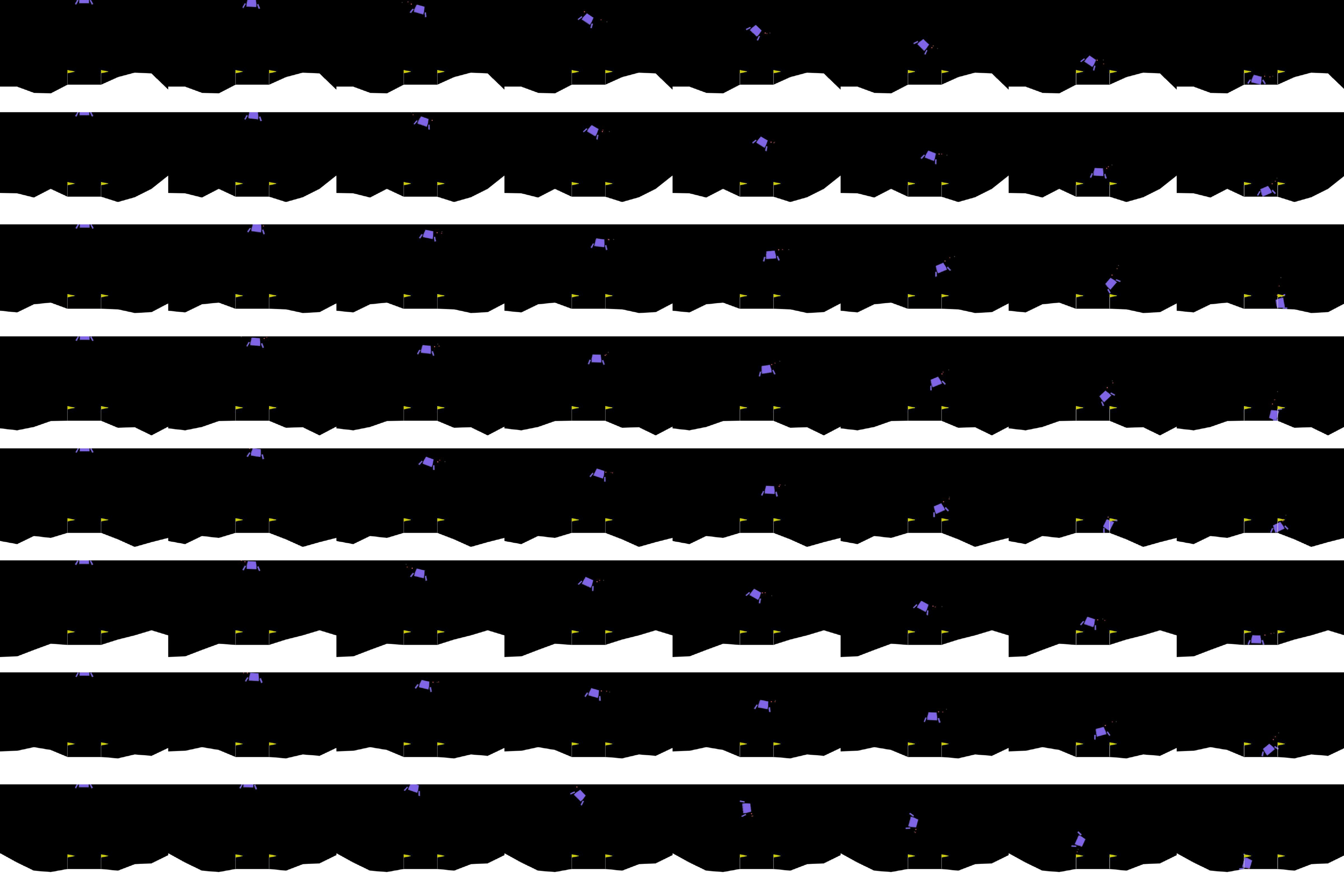}
    \includegraphics[width=.48\linewidth]{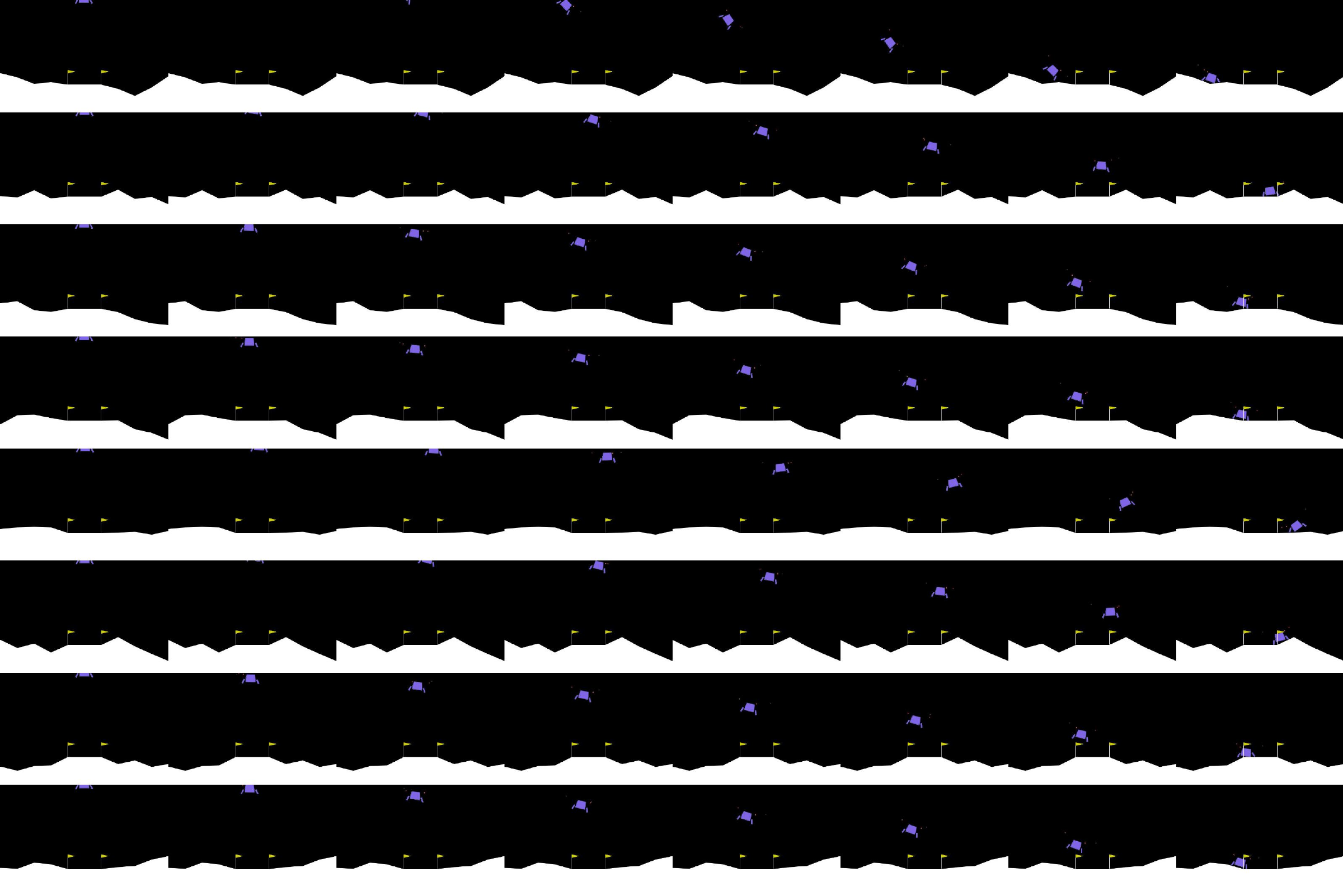}
    \caption{\small {\bf Results for reward transfer to new environments with altered dynamics.} {\bf Left panels:} Policy behavior learned by PIRO in the original LunarLander environment. PIRO succeeds in most cases. {\bf Right panels:} Policy behavior under PIRO's learned reward function in LunarLander with altered dynamics (stochastic wind added). The policy is robust in general, despite some failure cases, e.g., row 3.}
    \label{fig:lunarlander_comparison}
    \vspace{-1em}
\end{figure}

\subsection{Sensitivity Tests}\label{subsec:sensitivity}

To assess the robustness of PIRO with respect to hyperparameters controlling reward update magnitude, we conduct sensitivity tests on three key parameters: $\bar{\epsilon}^{\mathrm{target}}$ and its associated scaling factors $x$ and $y$, which govern the adaptive adjustment of the regularization coefficient $\mu$ in Eq.~(\ref{eq:coef}). Specifically, we vary one parameter at a time while keeping all others fixed. Results are reported in Fig.~\ref{fig:sensitivity}, which suggest that the algorithm is not highly sensitive to the hyperparameters $x,y$; 
both can be set within the range $(1, 2)$ without significant impact. We also observe that setting the target value $\bar{\epsilon}^{\text{target}}$ 
within the range $(0.1, 1)$ generally does not significantly affect the reward performance.

\begin{figure}[h]
    \centering
    \includegraphics[width=\linewidth]{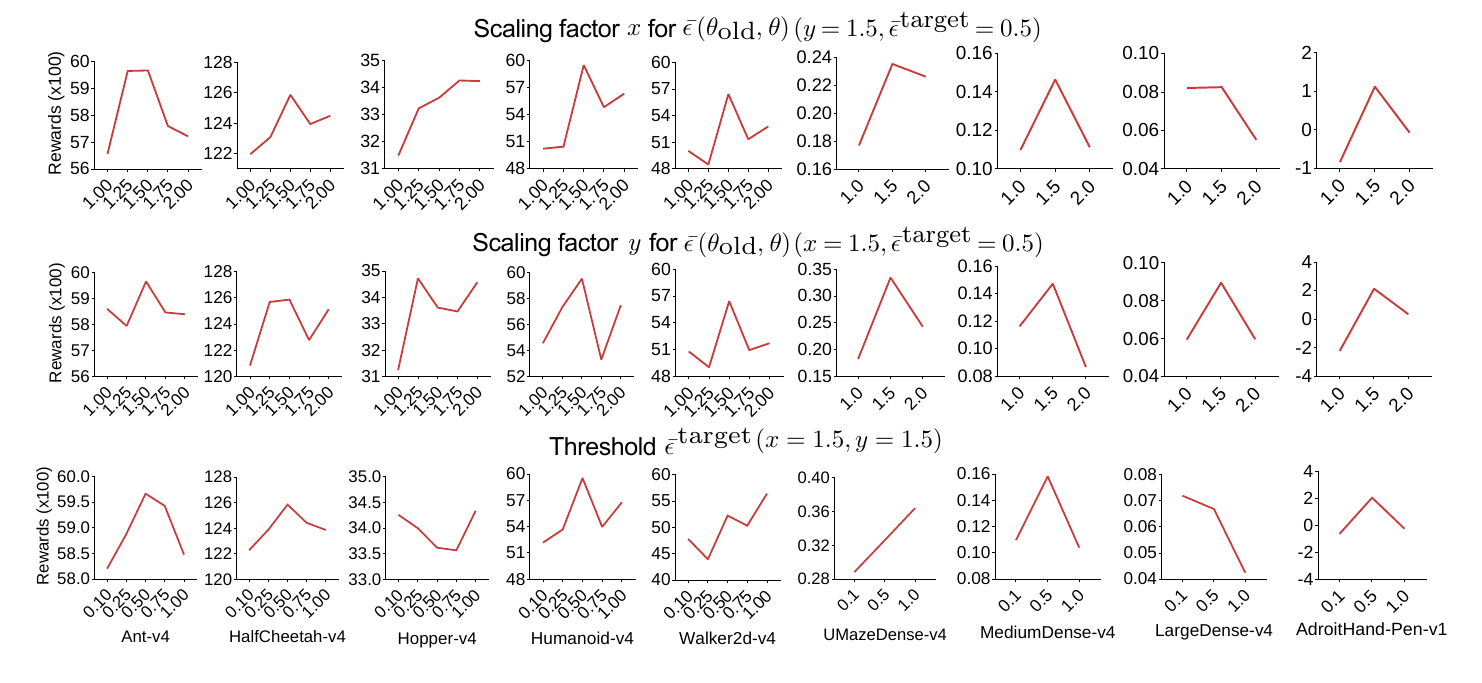}
    \vspace{-0.5em}
   \caption{\small{\bf Sensitivity test} for the parameter  $\bar{\epsilon}^{\mathrm{target}}$ and its scaling factors $x,y$.}
    \label{fig:sensitivity}
\vspace{-0.5em}
\end{figure}

\section{Conclusion}\label{sec:conclusion}

We propose Proximal Inverse Reward Optimization (PIRO), a novel non-adversarial, practical IRL algorithm that stabilizes reward learning by approximating Trust Region Reward Optimization (TRRO) -- a novel theoretical framework guaranteeing monotonic improvement in the likelihood of expert behavior. Experiments MuJoCo and Gym Robotics benchmarks show that PIRO achieves stable training, accurate and robust reward recovery, high sample efficiency, and good reward transfer capability. This work provides a theoretical foundation for stabilizing IRL, and we hope it provides a new perspective for designing more robust IRL algorithms.

\noindent{\bf Limitations.} Despite its advantages, PIRO has limitations. First, while it stabilizes reward learning, the overall training stability also depends on a stable policy optimizer, especially in high-dimensional and complex-dynamics settings. Second, the dependency on on-policy sampling may reduce sample efficiency in environment interactions, potentially limiting scalability to sample-expensive tasks. 

\noindent{\bf Future work.} First, improving the efficiency of policy optimization by incorporating resets to expert states \citep{swamy2023inverse,ren2024hybrid} may substantially reduce computational cost. Second, exploring alternative policy alignment measures beyond likelihood (e.g., statistical divergences other than KL) may open new paradigms for stable IRL. Finally, on the application side, extending PIRO to real-world scenarios --- such as learning reward models and policies for aligning large language models with human feedback --- offers a promising path to improving agent performance in practice.

\begin{ack}
This work was supported by a locally commissioned task from the Shanghai Municipal Government.
\end{ack}

\bibliographystyle{plainnat}

\medskip

\begin{small}
\bibliography{ref}
\end{small}

\newpage

\appendix

\begin{appendices}

\section{Proofs}\label{app:proofs}
\subsection{Proof of Proposition~\ref{prop:equiv}}\label{app:prop:equiv}
Let us first show that $\ell({\bm\theta}) = \E_{\rho^{\pi_E}} [\log \pi_{\bm\theta}(\rva|\rvs)] = \E_{\rho^{\pi_E}}[r_{\bm\theta}(\rvs, \rva)] - \E_{\rvs_0\sim\eta}[V_{r_{\bm\theta}}^{\pi_{\bm\theta}}(\rvs_0)] = J(\pi_E, r_{\bm\theta}) - J(\pi_{\bm\theta}, r_{\bm\theta})$ (Eq.~(\ref{eq:equiv})). Let $d_t^\pi(\rvs)$ denote the state distribution under a policy $\pi$. Note that $d_0^\pi \equiv \eta$, where $\eta$ is the fixed initial state distribution.

\begin{equation}\label{eq:proof:equiv}
    \begin{aligned}
        \ell({\bm\theta}) &= \E_{\rho^{\pi_E}} [\log \pi_{\bm\theta}(\rva|\rvs)]\\
        &= \E_{\rho^{\pi_E}} [ Q_{r_{\bm\theta}}^{\pi_{\bm\theta}}(\rvs, \rva) - V_{r_{\bm\theta}}^{\pi_{\bm\theta}}(\rvs) ]\\
        &= \E_{\rho^{\pi_E}} [ r_{\bm\theta}(\rvs, \rva) + \gamma \E_{\rvs'\sim P(\cdot|\rvs,\rva)} [V_{r_{\bm\theta}}^{\pi_{\bm\theta}}(\rvs')] - V_{r_{\bm\theta}}^{\pi_{\bm\theta}}(\rvs) ]\\
        &= \E_{\rho^{\pi_E}} [ r_{\bm\theta}(\rvs, \rva)] - \sum_{t=0}^\infty \gamma^t \E_{\rvs \sim d^{\pi_E}_t, \rva \sim \pi_E(\cdot | \rvs)}\big[V_{r_{\bm\theta}}^{\pi_{\bm\theta}}(\rvs) - \gamma \E_{\rvs'\sim P(\cdot|\rvs,\rva)} [V_{r_{\bm\theta}}^{\pi_{\bm\theta}}(\rvs)] \big] \\
        &= \E_{\rho^{\pi_E}} [ r_{\bm\theta}(\rvs, \rva)] - \Bigg( \sum_{t=0}^\infty \gamma^t \E_{\rvs \sim d^{\pi_E}_t}[V_{r_{\bm\theta}}^{\pi_{\bm\theta}}(\rvs)] - \sum_{t=0}^\infty \gamma^{t+1} \E_{\rvs \sim d^{\pi_E}_{t+1}}[V_{r_{\bm\theta}}^{\pi_{\bm\theta}}(\rvs)] \Bigg)\\
        &= \E_{\rho^{\pi_E}} [ r_{\bm\theta}(\rvs, \rva)] - \E_{\rvs\sim d^{\pi_E}_0}[V_{r_{\bm\theta}}^{\pi_{\bm\theta}}(\rvs)]\\
        &= \E_{\rho^{\pi_E}} [ r_{\bm\theta}(\rvs, \rva)] - \E_{\rvs\sim\eta}[V_{r_{\bm\theta}}^{\pi_{\bm\theta}}(\rvs)]\\
        &= J(\pi_E, r_{\bm\theta}) - J(\pi_{\bm\theta}, r_{\bm\theta}).
    \end{aligned} 
\end{equation}
Note that in Eq.~(\ref{eq:proof:equiv}), we omit the constant policy entropy $\gH(\pi_E)$ in $J(\pi_E, r_{\bm\theta})$.
\smallskip

We next show $$\nabla_{\bm\theta} \ell({\bm\theta}) =  \E_{\rho^{\pi_E}}[\nabla_{\bm\theta} r_{\bm\theta}(\rvs, \rva)] - \E_{\rho^{\pi_{\bm\theta}}}[\nabla_{\bm\theta} r_{\bm\theta}(\rvs, \rva)]$$ in Eq.~(\ref{eq:grad}). Let us begin with investigating the gradient of $Q_{r_{\bm\theta}}^{\pi_{\bm\theta}}(\rvs_t,\rva_t)$:

\begin{equation}
\begin{aligned}
   &\;{\nabla_{\bm\theta} Q_{r_{\bm\theta}}^{\pi_{\bm\theta}} (\rvs_t, \rva_t)}\\
    \overset{(a)}{=} &\; \nabla_{\bm\theta} r_{\bm\theta}(\rvs_t, \rva_t) +  \gamma   \sum_{\rvs_{t+1}}  P(\rvs_{t+1} | \rvs_t, \rva_t)  {\nabla_{\bm\theta} V_{r_{\bm\theta}}^{\pi_{\bm\theta}}(\rvs_{t+1})} \\
    \overset{(b)}{=}&\; \nabla_{\bm\theta} r_{\bm\theta}(\rvs_t, \rva_t) + \gamma \sum_{\rvs_{t+1}} P(\rvs_{t+1} | \rvs_t, \rva_t)\sum_{\rva_{t+1}} \frac{  \exp( Q_{r_{\bm\theta}}^{\pi_{\bm\theta}}(\rvs_{t+1},\rva_{t+1})) }{\sum_{\rva'} \exp ( Q_{r_{\bm\theta}}^{\pi_{\bm\theta}}(\rvs_{t+1},\rva')) } \nabla_{\bm\theta} Q_{r_{\bm\theta}}^{\pi_{\bm\theta}}(\rvs_{t+1},\rva_{t+1})  \\
    =&\; \nabla_{\bm\theta} r_{\bm\theta}(\rvs_t, \rva_t) +  \gamma   \sum_{\rvs_{t+1}}  P(\rvs_{t+1} | \rvs_t, \rva_t) \sum_{\rva_{t+1}} \pi_{\bm\theta}(\rva_{t+1} | \rvs_{t+1}) \nabla_{\bm\theta} Q_{r_{\bm\theta}}^{\pi_{\bm\theta}}(s_{t+1},a_{t+1})    \\
    =&\; \cdots
\end{aligned} 
\end{equation}
Equality $(a)$ uses the soft Bellman equation, while Equality $(b)$ follows the energy-based formulation of the policy. Notably, both $\nabla_{\bm\theta} Q_{r_{\bm\theta}}^{\pi_{\bm\theta}} (\rvs_t, \rva_t)$ and $\nabla_{\bm\theta} V_{r_{\bm\theta}}^{\pi_{\bm\theta}}(\rvs_{t})$ exhibit recursive forms, where the gradient $\nabla_{\bm\theta} r_{\bm\theta}(\rvs_t, \rva_t)$ accumulates as an expectation alongside the expansion of $Q_{r_{\bm\theta}}^{\pi_{\bm\theta}}$ and $V_{r_{\bm\theta}}^{\pi_{\bm\theta}}$. Continuing this recursive expansion, we derive:
\begin{equation}\label{eq:Q-recursive}
\begin{aligned}
    \nabla_{\bm\theta} Q_{r_{\bm\theta}}^{\pi_{\bm\theta}} (\rvs_t, \rva_t) =\E_{\scriptsize
    \begin{split}
        \rvs_{l+1} \sim P(\cdot | \rvs_l, \rva_l),\\
        \rva_{l+1} \sim \pi_{\bm\theta}(\cdot | \rvs_{l+1})
    \end{split}
    } \Bigg[ \sum_{l = t}^\infty \gamma^{l-t} \nabla_{\bm\theta} r_{\bm\theta}(\rvs_l, \rva_l) \Bigg].
\end{aligned}
\end{equation}
\begin{equation}\label{eq:V-recursive}
\begin{aligned}
    \nabla_{\bm\theta} V_{r_{\bm\theta}}^{\pi_{\bm\theta}} (\rvs_t) =\E_{\scriptsize
    \begin{split}
    \rva_{\ell} \sim \pi_{\bm\theta}(\cdot | \rvs_{\ell}),\\ \rvs_{\ell+1} \sim P(\cdot | \rvs_\ell, \rva_\ell)
    \end{split}
    } \Bigg[ \sum_{\ell = t}^\infty \gamma^{\ell-t} \nabla_{\bm\theta} r_{\bm\theta}(\rvs_\ell, \rva_\ell) \Bigg].
\end{aligned}
\end{equation}
Then, we have
\begin{equation}\label{eq:V-recursive-s0}
    \nabla_{\bm\theta} V_{r_{\bm\theta}}^{\pi_{\bm\theta}} (\rvs_0) = \E_{\scriptsize
    \begin{split}
    \rva_t \sim \pi_{\bm\theta}(\cdot | \rvs_t),\\
    \rvs_{t+1} \sim d^{\pi_{\bm\theta}}_{t+1}
    \end{split}
    } \left[ \sum_{\ell = 0}^\infty \gamma^t \nabla_{\bm\theta} r_{\bm\theta}(\rvs_t, \rva_t) \right].
\end{equation}
Finally, according to Eq.~(\ref{eq:proof:equiv}), we have
\begin{equation}
\begin{aligned}
    \nabla_{\bm\theta} \ell({\bm\theta}) &= \E_{\rho^{\pi_E}} [ \nabla_{\bm\theta} r_{\bm\theta}(\rvs, \rva)] - \E_{\rvs_0\sim\eta}[\nabla_{\bm\theta} V_{r_{\bm\theta}}^{\pi_{\bm\theta}}(\rvs_0)]\\
    &= \E_{\rho^{\pi_E}} [ \nabla_{\bm\theta} r_{\bm\theta}(\rvs, \rva)] - \E_{\rvs_0\sim\eta, \rva_t \sim \pi_{\bm\theta}(\cdot | \rvs_t), \rvs_{t+1} \sim d^{\pi_{\bm\theta}}_{t+1}} \left[ \sum_{\ell = 0}^\infty \gamma^t \nabla_{\bm\theta} r_{\bm\theta}(\rvs_t, \rva_t) \right]\\
    &= \E_{\rho^{\pi_E}}[\nabla_{\bm\theta} r_{\bm\theta}(\rvs, \rva)] - \E_{\rho^{\pi_{\bm\theta}}}[\nabla_{\bm\theta} r_{\bm\theta}(\rvs, \rva)].
\end{aligned}
\end{equation}

\subsection{Proof of Theorem~\ref{thm:bound}}\label{app:thm:bound}

We begin by presenting some useful lemmas that tell us how much the policy discrepancy (Lemma~\ref{lem:tv}), state margin discrepancy (Lemma~\ref{lemma:state-marginal}), log policy discrepancy (Lemma~\ref{lem:dlogpi}), $Q$ and $V$ functions (Lemma~\ref{lem:VQ}), log policy (Lemma~\ref{lem:logpi}), and the expected entropy discrepancy (Lemma~\ref{lem:dent}) grows based on the reward difference. In all these lemmas, we use the following notations:
\begin{itemize}
    \item $r_1(\rvs,\rva)$ and $r_2(\rvs,\rva)$ are two reward functions, 
    \item  $\pi_1(\cdot |\rvs)$  and  $\pi_2(\cdot |\rvs)$  are optimal policies under $r_1$ and $r_2$ under the MaxEnt RL framework, respectively.
    \item $\epsilon \coloneqq \max_{(\rvs,\rva)} |r_1(\rvs,\rva) - r_2(\rvs,\rva)|$ denotes the reward difference.
    \item $|r_i(\rvs,\rva)| \leq R, \forall, i\in \{1,2\}, \rvs\in \gS, \rva\in \gA.$
\end{itemize}

\begin{lemma}\label{lem:tv}
 The total variation distance between $\pi_1(\cdot | \rvs)$  and $\pi_2(\cdot | \rvs)$ is upper-bounded as follows:
    \begin{equation}\label{eq:boundedP}
    \begin{aligned}
     D_\mathrm{TV}(\pi_1(\cdot 
    s) , \pi_2(\cdot | \rvs)) = \frac{1}{2}\|\pi_1(\cdot|\rvs) - \pi_2(\cdot|\rvs)\|_1 = \frac{1}{2}\sum_{\rva \in \gA} \left| \pi_1(\rva|\rvs) - \pi_2(\rva|\rvs) \right| \leq \frac{ |\gA| \epsilon}{2 (1 - \gamma)}.
    \end{aligned}
    \end{equation}
\end{lemma}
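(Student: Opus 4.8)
The plan is to exploit that, because $\pi_1$ and $\pi_2$ are MaxEnt-optimal under $r_1$ and $r_2$, each is the softmax of the corresponding optimal soft $Q$-function: by Eqs.~(\ref{eq:energy})--(\ref{eq:SBE}), $\pi_i(\rva|\rvs) = \exp\!\big(Q^{\pi_i}_{r_i}(\rvs,\rva) - V^{\pi_i}_{r_i}(\rvs)\big)$ with $V^{\pi_i}_{r_i}(\rvs) = \log\sum_{\rva}\exp Q^{\pi_i}_{r_i}(\rvs,\rva)$. So the argument factors into two parts: (i) show the two optimal soft $Q$-functions differ by at most $\epsilon/(1-\gamma)$ in sup-norm; (ii) convert a uniform bound on the $Q$-gap (hence on the $V$-gap and the log-policy gap) into an $\ell_1$ bound on $\pi_1(\cdot|\rvs) - \pi_2(\cdot|\rvs)$, i.e.\ a total-variation bound.

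For step (i) I would use the soft Bellman optimality operator $\mathcal{T}_r$ (the policy-free, soft-max version of $\mathcal{B}^\pi_r$ in Eq.~(\ref{eq:SBO}), whose fixed point satisfies Eq.~(\ref{eq:SBE})), which is a $\gamma$-contraction in $\|\cdot\|_\infty$ with unique fixed point $Q^{\pi_i}_{r_i}$ \citep{haarnoja2018soft}. Since $\mathcal{T}_{r_1}$ and $\mathcal{T}_{r_2}$ differ only through the additive reward term, $\|\mathcal{T}_{r_1}Q - \mathcal{T}_{r_2}Q\|_\infty \le \|r_1 - r_2\|_\infty \le \epsilon$ for every $Q$. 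The standard fixed-point perturbation estimate then gives $\|Q^{\pi_1}_{r_1} - Q^{\pi_2}_{r_2}\|_\infty = \|\mathcal{T}_{r_1}Q^{\pi_1}_{r_1} - \mathcal{T}_{r_2}Q^{\pi_2}_{r_2}\|_\infty \le \gamma\|Q^{\pi_1}_{r_1} - Q^{\pi_2}_{r_2}\|_\infty + \epsilon$, which rearranges to $\|Q^{\pi_1}_{r_1} - Q^{\pi_2}_{r_2}\|_\infty \le \epsilon/(1-\gamma)$. Because $\log\sum_{\rva}\exp(\cdot)$ is $1$-Lipschitz w.r.t.\ $\|\cdot\|_\infty$, the same bound holds for $|V^{\pi_1}_{r_1}(\rvs) - V^{\pi_2}_{r_2}(\rvs)|$, so $|\log\pi_1(\rva|\rvs) - \log\pi_2(\rva|\rvs)| = |(Q^{\pi_1}_{r_1} - Q^{\pi_2}_{r_2})(\rvs,\rva) - (V^{\pi_1}_{r_1} - V^{\pi_2}_{r_2})(\rvs)|$ is controlled by a constant multiple of $\epsilon/(1-\gamma)$.

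For step (ii), since each $\pi_i(\rva|\rvs) \le 1$ and the two probabilities are log-close, a mean-value step (on $t \mapsto e^t$ at non-positive arguments, or on $\log$) gives $|\pi_1(\rva|\rvs) - \pi_2(\rva|\rvs)| \le |\log\pi_1(\rva|\rvs) - \log\pi_2(\rva|\rvs)|$; summing the $|\gA|$ per-action estimates and multiplying by $\tfrac12$ yields $D_\mathrm{TV}(\pi_1(\cdot|\rvs),\pi_2(\cdot|\rvs)) \le \tfrac{|\gA|\epsilon}{2(1-\gamma)}$. The delicate point — and the one I expect to be the main obstacle — is the constant bookkeeping in this conversion: a careless triangle-inequality split of $\log\pi_1 - \log\pi_2$ into the $Q$-gap plus the $V$-gap loses a factor of two, so one should exploit that $V^{\pi_i}_{r_i}$ is itself the log-sum-exp of $Q^{\pi_i}_{r_i}$ (writing $V^{\pi_1}_{r_1}(\rvs) - V^{\pi_2}_{r_2}(\rvs) = \log \E_{\rva\sim\pi_2(\cdot|\rvs)}[e^{g(\rva)}]$ with $g \coloneqq Q^{\pi_1}_{r_1} - Q^{\pi_2}_{r_2}$, which lies between $\min_\rva g$ and $\max_\rva g$ so the $Q$- and $V$-perturbations partially cancel), or exploit $\sum_{\rva}\pi_i(\rva|\rvs) = 1$ to tighten the per-action bound. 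Everything else is a routine contraction/Lipschitz calculation.
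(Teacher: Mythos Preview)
Your step~(i) is correct and identical to the paper's: both use the $\gamma$-contraction of the soft Bellman optimality operator (equivalently, unroll Eq.~(\ref{eq:SBE}) and use $1$-Lipschitzness of $\log\sum\exp$) to obtain $\|Q_1 - Q_2\|_\infty \le \epsilon/(1-\gamma)$.

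Your step~(ii), however, takes a different route from the paper and does not quite reach the stated constant. The paper does \emph{not} pass through $\log\pi$: it applies the Lipschitz property of the softmax map directly, writing $|\pi_1(\rva|\rvs) - \pi_2(\rva|\rvs)| \le |Q_1(\rvs,\rva) - Q_2(\rvs,\rva)|$ and then summing over $\rva\in\gA$ to get $\|\pi_1(\cdot|\rvs)-\pi_2(\cdot|\rvs)\|_1 \le |\gA|\,\|Q_1 - Q_2\|_\infty$, which yields the constant in the lemma immediately. Your detour through $\log\pi$ followed by the mean-value bound on $e^t$ is valid, but the factor-of-two loss you anticipate is real and your sketched remedies do not remove it: the log-policy gap genuinely satisfies only $\|\log\pi_1 - \log\pi_2\|_\infty \le 2\epsilon/(1-\gamma)$ (the paper records exactly this bound separately, as Lemma~\ref{lem:dlogpi}), and the cancellation idea fails in the worst case --- for instance, with $g(\rva_0)=-\delta$, $g(\rva)=\delta$ for $\rva\neq\rva_0$, and $\pi_2$ concentrated on $\rva_0$, one gets $c=\log\E_{\pi_2}[e^g]=-\delta$ and hence $\sum_{\rva}|g(\rva)-c|=2(|\gA|-1)\delta$, exceeding $|\gA|\delta$ whenever $|\gA|\ge 3$. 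So if you want the constant exactly as stated, bypass $\log\pi$ and invoke softmax Lipschitz directly; if a factor of two is acceptable (and it is, since this feeds only into the absorbing constant $C$ of Theorem~\ref{thm:bound}), your argument works as written.
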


\begin{proof}
We start by analyzing the sensitivity of the policy to changes in soft $Q$-function. The difference in  $\pi_1(\rva|\rvs)$  and  $\pi_2(\rva|\rvs)$  arises from the difference in their respective soft $Q$-functions,  $Q_1(\rvs,\rva)$  and $Q_2(\rvs,\rva)$. Expanding the policies gives:
\begin{equation}
\begin{aligned}
    \pi_1(\rva|\rvs) - \pi_2(\rva|\rvs) =&\; \frac{\exp\left(Q_1(\rvs,\rva)\right)}{\sum_{a'} \exp\left( Q_1(\rvs,\rva')\right)} - \frac{\exp\left(Q_2(\rvs,\rva)\right)}{\sum_{\rva'} \exp\left( Q_2(\rvs,\rva')\right)}.
\end{aligned}    
\end{equation}
This softmax-like function is $\frac{1}{\alpha}$-Lipschitz continuous \citep{gao2017properties} with $\alpha$ being the temperature in the energy-based model (w.l.o.g., we assume $\alpha = 1$ in this paper).  This means small changes in $Q$ lead to proportionally small changes in the softmax output. This allows us to approximate the policy difference for small deviations in $Q$. Thus, the policy difference can be bounded as:
    \begin{equation}\label{eq:Lipschitz}
        \left| \pi_1(\rva|\rvs) - \pi_2(\rva|\rvs) \right| \leq  \left| Q_1(\rvs,\rva) - Q_2(\rvs,\rva) \right|.
    \end{equation}

Summing over actions, the (doubled) total variation distance becomes:
    \begin{equation}\label{eq:lem1-1}
    \begin{aligned}
        \| \pi_1(\cdot|\rvs) - \pi_2(\cdot|\rvs)\|_1 &= \sum_{\rva \in \gA} \left| \pi_1(\rva|\rvs) - \pi_2(\rva|\rvs) \right| 
        \leq &  \sum_{\rva \in \gA} \left| Q_1((\rvs,\rva)) - Q_2((\rvs,\rva)) \right| .
    \end{aligned}
    \end{equation}
    
We bound $\left| Q_1(\rvs,\rva) - Q_2(\rvs,\rva) \right|$ by:
\begin{equation}\label{eq:boundedQ}
    \begin{aligned}
        &\max_{\rvs,\rva} \left| Q_1(\rvs,\rva) - Q_2(\rvs,\rva) \right|\\
        \leq&\; \epsilon + \gamma \max_{(\rvs,\rva)} \E_{\rvs' \sim P(\cdot|(\rvs,\rva))} \left[ V_1(\rvs') - V_2(\rvs') \right]\\
        \leq&\; \epsilon +  \gamma  \max_{\rvs',\rva'} \Bigg| \log \sum_{\rva'} \exp\left( Q_1(\rvs',\rva') \right) - \log \sum_{\rva'} \exp\left( Q_2(\rvs',\rva')\right) \Bigg| \\
        \overset{(a)}{\leq}&\; \epsilon + \gamma \max_{(\rvs,\rva)} \left| Q_1(\rvs,\rva) - Q_2(\rvs,\rva) \right|,
    \end{aligned}
    \end{equation}
where inequality $(a)$ uses the fact that for any two sets of values $\{x_i\}$ and $\{y_i\}$, 
\begin{equation}\label{eq:logmax}
\left| \log \sum_i \exp(x_i) - \log \sum_i \exp(y_i) \right| \leq \max_i |x_i - y_i|.
\end{equation}

Rearranging Eq.~(\ref{eq:boundedQ}) and performing some algebra yields: $$\max_{(\rvs,\rva)} \left| Q_1(\rvs,\rva) - Q_2(\rvs,\rva) \right| \leq \frac{\epsilon}{1 -\gamma}.$$

Finally, according to Eq.~(\ref{eq:lem1-1}) summing over action space introduces scaling:
\begin{equation}
\begin{aligned}
    \|\pi_1(\cdot|\rvs) - \pi_2(\cdot|\rvs)\|_1 &\leq  |\gA| \left| Q_1(\rvs, \rva) - Q_2(\rvs, \rva) \right| 
    &\leq \frac{ |\gA| \epsilon}{1 - \gamma}.
\end{aligned}
\end{equation}
\end{proof}

\begin{lemma}\label{lemma:state-marginal}
 Let \(d_t^{\pi_1}(\rvs)\) and \(d_t^{\pi_2}(\rvs)\) denote the state marginal distributions at time $t$ under each policy, starting from the same initial distribution $\eta$. Then, for any $t \geq 0$,
\begin{equation}
 \TV(d_t^{\pi_1}, d_t^{\pi_2}) = \frac{1}{2} \| d_t^{\pi_2} - d_t^{\pi_1} \|_1 \leq t D_{\mathrm{TV}}^{\max}(\pi_1, \pi_2),
\end{equation}
where
\begin{equation}
\begin{aligned}
D_{\mathrm{TV}}^{\max}(\pi_1, \pi_2) &\coloneqq \max_{\rvs} D_{\mathrm{TV}}(\pi_1(\cdot|\rvs), \pi_2(\cdot|\rvs))= \frac{\epsilon}{1-\gamma}
~~~~~\text{\em (Lemma~\ref{lem:tv})}
\end{aligned}
\end{equation}
is the worst-case total variation distance between \(\pi_2\) and \(\pi_1\) over all states.
\end{lemma}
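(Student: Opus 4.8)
The plan is to prove Lemma~\ref{lemma:state-marginal} by induction on $t$ using a one-step coupling/telescoping argument, exactly the device used in the TRPO analysis of \citep{schulman2015trust}. The base case $t=0$ is immediate since $d_0^{\pi_1} = d_0^{\pi_2} = \eta$, so $\TV(d_0^{\pi_1}, d_0^{\pi_2}) = 0$. For the inductive step, I would introduce the state-to-state transition kernel induced by a policy $\pi$, namely $T^\pi(\rvs'\mid\rvs) \coloneqq \sum_{\rva} \pi(\rva\mid\rvs)\, P(\rvs'\mid\rvs,\rva)$, so that $d_{t+1}^\pi(\rvs') = \sum_{\rvs} d_t^\pi(\rvs)\, T^\pi(\rvs'\mid\rvs)$, and then split the one-step difference into a ``change the policy'' piece and a ``change the state distribution'' piece:
\begin{equation*}
d_{t+1}^{\pi_1}(\rvs') - d_{t+1}^{\pi_2}(\rvs') = \sum_{\rvs} d_t^{\pi_1}(\rvs)\big(T^{\pi_1}(\rvs'\mid\rvs) - T^{\pi_2}(\rvs'\mid\rvs)\big) + \sum_{\rvs} \big(d_t^{\pi_1}(\rvs) - d_t^{\pi_2}(\rvs)\big) T^{\pi_2}(\rvs'\mid\rvs).
\end{equation*}

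Next I would take $\ell_1$ norms over $\rvs'$ and apply the triangle inequality, bounding the two pieces separately. For the first piece, the inner sum is $\|T^{\pi_1}(\cdot\mid\rvs) - T^{\pi_2}(\cdot\mid\rvs)\|_1$; since $T^{\pi_i}(\cdot\mid\rvs)$ is the push-forward of $\pi_i(\cdot\mid\rvs)$ through the \emph{common} Markov kernel $P(\cdot\mid\rvs,\cdot)$, the data-processing inequality for total variation gives $\|T^{\pi_1}(\cdot\mid\rvs) - T^{\pi_2}(\cdot\mid\rvs)\|_1 \le \|\pi_1(\cdot\mid\rvs) - \pi_2(\cdot\mid\rvs)\|_1 \le 2\, D_{\mathrm{TV}}^{\max}(\pi_1,\pi_2)$, and because $\sum_{\rvs} d_t^{\pi_1}(\rvs) = 1$ this piece is at most $2\, D_{\mathrm{TV}}^{\max}(\pi_1,\pi_2)$. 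For the second piece, summing $T^{\pi_2}(\rvs'\mid\rvs)$ over $\rvs'$ gives $1$, so it equals exactly $\|d_t^{\pi_1} - d_t^{\pi_2}\|_1$. Combining yields the recursion $\|d_{t+1}^{\pi_1} - d_{t+1}^{\pi_2}\|_1 \le 2\, D_{\mathrm{TV}}^{\max}(\pi_1,\pi_2) + \|d_t^{\pi_1} - d_t^{\pi_2}\|_1$, which unrolled from the base case gives $\|d_t^{\pi_1} - d_t^{\pi_2}\|_1 \le 2t\, D_{\mathrm{TV}}^{\max}(\pi_1,\pi_2)$, i.e. the stated bound after dividing by $2$. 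The closed form $D_{\mathrm{TV}}^{\max}(\pi_1,\pi_2) = \epsilon/(1-\gamma)$ then follows directly from Lemma~\ref{lem:tv}.

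The only genuinely non-mechanical step is the decomposition together with the justification that marginalizing out the action through the shared dynamics $P$ does not expand total variation (data processing); the rest is bookkeeping. Two points nevertheless require care: the consistent factor of $2$ between the $\ell_1$ norm and the TV distance throughout the chain of inequalities, and the fact that $d_t^\pi$ here is a genuine probability distribution (its mass sums to one), in contrast to the unnormalized occupancy measure $\rho^\pi$ used elsewhere in the paper — it is precisely this normalization that makes the $\sum_{\rvs} d_t^{\pi_1}(\rvs) = 1$ step valid.
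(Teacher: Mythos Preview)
Your proof is correct and essentially identical to the paper's: both use induction on $t$, the same add-and-subtract decomposition of $d_{t+1}^{\pi_1}-d_{t+1}^{\pi_2}$ into a policy-change term and a state-distribution-change term, and the same one-step recursion $\TV(d_{t+1}^{\pi_1},d_{t+1}^{\pi_2})\le \TV(d_t^{\pi_1},d_t^{\pi_2}) + D_{\mathrm{TV}}^{\max}(\pi_1,\pi_2)$. The only cosmetic difference is that you package the dynamics via the induced kernel $T^\pi$ and invoke the data-processing inequality explicitly, whereas the paper works directly at the $(\rvs,\rva)$ level and absorbs $P(\rvs'\mid\rvs,\rva)$ by summing over $\rvs'$; these are the same computation.
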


\begin{proof}
We proceed by induction on $t$.

\textbf{Base case} ($t=0$):
At $t=0$, \(d_0^{\pi_2} = d_0^{\pi_1} = \eta\) (the initial distribution), so
\begin{equation}
\| d_0^{\pi_2} - d_0^{\pi_1} \|_1 = 0,
\end{equation}
which satisfies the bound.

\textbf{Inductive step}:
Suppose that at time $t$,
\begin{equation}
\TV(d_t^{\pi_1}, d_t^{\pi_2}) \leq t D^{\max}_{\mathrm{TV}}(\pi_1, \pi_2).
\end{equation}
We now show that the same holds at time $t+1$.

The state marginals evolve according to
\[d_{t+1}^{\pi}(\rvs') = \sum_{\rvs} d_t^{\pi}(\rvs) \sum_{\rva} \pi(\rva|\rvs) P(\rvs'|\rvs,\rva). \]
Thus,
\begin{equation}
\begin{aligned}
d_{t+1}^{\pi_2}(\rvs') - &d_{t+1}^{\pi_1}(\rvs') = \sum_{\rvs}\sum_\rva \big( d_t^{\pi_2}(\rvs) \pi_2(\rva|\rvs) -d_t^{\pi_1}(\rvs) \pi_1(\rva|\rvs) \big) P(\rvs'|\rvs,\rva).
\end{aligned}
\end{equation}

Taking the $L^1$ norm and using the triangle inequality,
\begin{equation}
\begin{aligned}
&\;\| d_{t+1}^{\pi_2} - d_{t+1}^{\pi_1} \|_1
\leq&\; \sum_{\rvs} \left\| d_t^{\pi_2}(\rvs) \pi_2(\cdot|\rvs) - d_t^{\pi_1}(\rvs) \pi_1(\cdot|\rvs) \right\|_1.
\end{aligned}
\end{equation}

Now expand the difference inside:
\begin{equation}
\begin{aligned}
d_t^{\pi_2}(\rvs) \pi_2(\rva|\rvs) - d_t^{\pi_1}(\rvs) \pi_1(a|\rvs)
= (d_t^{\pi_2}(\rvs) - d_t^{\pi_1}(\rvs)) \pi_2(\rva|\rvs) + 
d_t^{\pi_1}(\rvs) \left( \pi_2(\rva|\rvs) - \pi_1(\rva|\rvs) \right).
\end{aligned}
\end{equation}

Using triangle inequality again:
\begin{equation}
\begin{aligned}
\left\| d_t^{\pi_2}(\rvs) \pi_2(\cdot|\rvs) - d_t^{\pi_1}(\rvs) \pi_1(\cdot|\rvs) \right\|_1 
\leq |d_t^{\pi_2}(\rvs) - d_t^{\pi_1}(\rvs)| + d_t^{\pi_1}(\rvs) \|\pi_2(\cdot|\rvs) - \pi_1(\cdot|\rvs)\|_1.
\end{aligned}
\end{equation}

Thus,
\begin{equation}
\begin{aligned}
\| d_{t+1}^{\pi_2} - d_{t+1}^{\pi_1} \|_1
\leq \sum_{\rvs} |d_t^{\pi_2}(\rvs) - d_t^{\pi_1}(\rvs)| 
	+ \sum_{\rvs} d_t^{\pi_1}(\rvs) \|\pi_2(\cdot|\rvs) - \pi_1(\cdot|\rvs)\|_1.
\end{aligned}
\end{equation}

The first term of the right-hand side  is simply
\begin{equation}
\|d_t^{\pi_2} - d_t^{\pi_1}\|_1 = 2\TV(d_t^{\pi_1}, d_t^{\pi_2}),
\end{equation}
and the second term is at most
\begin{equation}
2 D_{\mathrm{TV}}^{\max}(\pi_2, \pi_1),
\end{equation}
since \(d_t^{\pi_1}\) is a distribution and \(\|\pi_2(\cdot|\rvs) - \pi_1(\cdot|\rvs)\|_1 \leq 2 D_{\mathrm{TV}}^{\max}(\pi_2, \pi_1)\) for all $\rvs$.

Therefore,
\begin{equation}
\begin{aligned}
\TV(\rho_{t+1}^{\pi_1}, d_{t+1}^{\pi_2}) &=  \frac{1}{2}\|d_{t+1}^{\pi_2} - d_{t+1}^{\pi_1}\|_1 
\leq &\TV(d_t^{\pi_1}, d_t^{\pi_2}) + D_{\mathrm{TV}}^{\max}(\pi_2, \pi_1).
\end{aligned}
\end{equation}

Applying the inductive hypothesis:
\begin{equation}
\TV(d_{t}^{\pi_1}, d_{t}^{\pi_2}) \leq t D_{\mathrm{TV}}^{\max}(\pi_1, \pi_2),
\end{equation}
we conclude
\begin{equation}
\TV(d_{t+1}^{\pi_1}, d_{t+1}^{\pi_2})
\leq (t+1) D_{\mathrm{TV}}^{\max}(\pi_1, \pi_2).
\end{equation}
Thus, the claim holds for $t+1$, completing the induction.
\end{proof}

\begin{lemma}\label{lem:dlogpi}
    Under MaxEnt RL, let $\pi_1(\rva|\rvs)$ and $\pi_2(\rva|\rvs)$ be two policies defined over a finite action set $\gA$, induced by reward functions $r_1(\rvs,\rva)$ and $r_2(\rvs,\rva)$ respectively. Assume that for all $\rvs,\rva$, the rewards are uniformly bounded by a constant $R > 0$, i.e., $|r_i(\rvs,\rva)| \leq R$,  for $i=1,2.$ Let $\epsilon=\max_{\rvs,\rva} |r_1(\rvs,\rva) - r_2(\rvs,\rva)|$. Then, the log-policy difference is bounded as:
    \begin{equation}
        \|\log \pi_1(\cdot|\rvs) - \log \pi_2(\cdot|\rvs)\|_\infty \leq \frac{2\epsilon}{1-\gamma}.
    \end{equation}
\end{lemma}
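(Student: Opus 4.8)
The goal is to bound $\|\log \pi_1(\cdot|\rvs) - \log \pi_2(\cdot|\rvs)\|_\infty$ by $\frac{2\epsilon}{1-\gamma}$, where $\pi_i$ are MaxEnt-optimal policies under rewards $r_i$. From the energy-based form (Eq.~\eqref{eq:energy}), we have $\log \pi_i(\rva|\rvs) = Q_i(\rvs,\rva) - V_i(\rvs)$. So the log-policy difference is
\[
\log \pi_1(\rva|\rvs) - \log \pi_2(\rva|\rvs) = \big(Q_1(\rvs,\rva) - Q_2(\rvs,\rva)\big) - \big(V_1(\rvs) - V_2(\rvs)\big).
\]
Taking absolute values and applying the triangle inequality reduces the task to bounding $\max_{\rvs,\rva}|Q_1 - Q_2|$ and $\max_\rvs |V_1 - V_2|$ separately.

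The plan is as follows. First, I would reuse the contraction argument already carried out inside the proof of Lemma~\ref{lem:tv}: via the Soft Bellman Equation and the log-sum-exp inequality~\eqref{eq:logmax}, one gets $\max_{\rvs,\rva}|Q_1(\rvs,\rva) - Q_2(\rvs,\rva)| \leq \frac{\epsilon}{1-\gamma}$. Second, since $V_i(\rvs) = \log\sum_{\rva}\exp(Q_i(\rvs,\rva))$, the same log-sum-exp inequality~\eqref{eq:logmax} gives $|V_1(\rvs) - V_2(\rvs)| \leq \max_{\rva}|Q_1(\rvs,\rva) - Q_2(\rvs,\rva)| \leq \frac{\epsilon}{1-\gamma}$ for every $\rvs$. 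Combining the two bounds through the triangle inequality on the displayed identity yields $|\log\pi_1(\rva|\rvs) - \log\pi_2(\rva|\rvs)| \leq \frac{\epsilon}{1-\gamma} + \frac{\epsilon}{1-\gamma} = \frac{2\epsilon}{1-\gamma}$, which holds for all $(\rvs,\rva)$ and hence for the $L^\infty$ norm.

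There is essentially no main obstacle here: the only nontrivial ingredient is the $Q$-difference contraction bound, and that has already been established within the proof of Lemma~\ref{lem:tv}, so it can simply be cited. The one point requiring minor care is that the boundedness hypothesis $|r_i| \leq R$ is not actually needed for this particular bound (it becomes relevant for the later lemmas on $V$, $Q$ magnitudes and entropy); the bound on $\|\log\pi_1 - \log\pi_2\|_\infty$ follows purely from the reward \emph{difference} $\epsilon$ and the contraction factor $\gamma$. I would state the proof in three short steps — the energy-based decomposition, the cited $Q$-bound, the log-sum-exp $V$-bound — and conclude by the triangle inequality.
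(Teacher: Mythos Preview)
Your proposal is correct and follows essentially the same route as the paper: decompose $\log\pi_i = Q_i - V_i$ with $V_i(\rvs)=\log\sum_\rva\exp(Q_i(\rvs,\rva))$, cite the $\|Q_1-Q_2\|_\infty \le \epsilon/(1-\gamma)$ bound from the proof of Lemma~\ref{lem:tv}, use the log-sum-exp Lipschitz property for the $V$-difference, and conclude by the triangle inequality. Your side remark that the hypothesis $|r_i|\le R$ is unused here is also accurate.
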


\begin{proof}
We start from the softmax policy expression:
    \begin{equation}
    \begin{aligned}
        \log \pi_i(\rva|\rvs) = Q_i(\rvs,\rva) - \log \sum_{\rva'\in\gA} \exp(Q_i(\rvs,\rva')),
        \quad i = 1, 2.
    \end{aligned}
    \end{equation}

So the difference is:
\begin{equation}
\begin{aligned}
    \log \pi_1(\rva|\rvs) - \log \pi_2(\rva|\rvs) = Q_1(\rvs,\rva) - Q_2(\rvs,\rva) -\\ \left[ \log \sum_{\rva'} \exp(Q_1(\rvs,\rva')) - \log \sum_{\rva'} \exp(Q_2(\rvs,\rva')) \right].
\end{aligned}
\end{equation}

Following from the Lipschitz continuity of the $\log\sum \exp(\cdot)$ function with Lipschitz constant 1 under $L^\infty$-norm, we have
\begin{equation}
\begin{aligned}
    &\left|\log \sum_\rva \exp(Q_1(\rvs,\rva)) - \log \sum_\rva \exp(Q_2(\rvs,\rva))\right|
    \leq \|Q_1 - Q_2\|_\infty = \frac{\epsilon}{1-\gamma} ~~~~\text{(Lemma~\ref{lem:tv})}.
\end{aligned}
\end{equation}

Combining everything:
\begin{equation}
\begin{aligned}
     &\;|\log \pi_1(\rva|\rvs) - \log \pi_2(\rva|\rvs)|\\
     \leq&\; |Q_1(\rvs,\rva) - Q_2(\rvs,\rva)| + \left|\log \sum_\rva \exp(Q_1(\rvs,\rva)) - \log \sum_\rva \exp(Q_2(\rvs,\rva))\right| \\
    \leq&\; 2 \|Q_1 - Q_2\|_\infty\\
    =&\;\frac{2\epsilon}{1-\gamma}.
\end{aligned}
\end{equation}
\end{proof}

\begin{lemma}\label{lem:VQ}
Under MaxEnt RL, we have
\begin{equation}\label{eq:Qbound}
    \|Q\|_\infty \leq \frac{R + \gamma\log|\gA|}{1-\gamma}
\end{equation}
\begin{equation}\label{eq:Vbound}
    \|V\|_\infty \leq \frac{R + \log|\gA|}{1-\gamma}
\end{equation}
\end{lemma}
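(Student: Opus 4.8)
The plan is to extract a self-referential scalar inequality from the Soft Bellman Equation~(\ref{eq:SBE}) and close it using the elementary two-sided estimate for the log-sum-exp operator. First I would record the a priori fact that, because $|\gA|<\infty$ and $|r|\le R$, the soft Bellman operator maps bounded functions to bounded functions and is a $\gamma$-contraction in $\|\cdot\|_\infty$ (as already noted in the preliminaries, citing \citep{haarnoja2018soft}); hence its fixed point $Q$ is bounded and $V$ is finite. This is a small but necessary point: without it, the final rearrangement would be vacuous.

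Next I would establish the sandwich $\max_{\rva} Q(\rvs,\rva) \le V(\rvs) \le \max_{\rva} Q(\rvs,\rva) + \log|\gA|$ for every $\rvs$, directly from $V(\rvs)=\log\sum_{\rva}\exp Q(\rvs,\rva)$ --- drop all but the largest summand for the lower bound, and bound each of the $|\gA|$ summands by the exponential of the maximum for the upper bound. Taking absolute values and then $\sup_{\rvs}$ yields $\|V\|_\infty \le \|Q\|_\infty + \log|\gA|$.

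Then I would combine $Q(\rvs,\rva) = r(\rvs,\rva) + \gamma\,\E_{\rvs'\sim P(\cdot|\rvs,\rva)}[V(\rvs')]$ with $|r|\le R$ to get $\|Q\|_\infty \le R + \gamma\|V\|_\infty \le R + \gamma(\|Q\|_\infty + \log|\gA|)$, solve for $\|Q\|_\infty$ using $\gamma\in(0,1)$ and the finiteness from the first step to obtain~(\ref{eq:Qbound}), and finally substitute that bound back into the sandwich inequality and put the terms over the common denominator $1-\gamma$ to obtain $\|V\|_\infty \le \frac{R+\gamma\log|\gA|+(1-\gamma)\log|\gA|}{1-\gamma} = \frac{R+\log|\gA|}{1-\gamma}$, which is~(\ref{eq:Vbound}).

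The argument is essentially routine contraction/fixed-point bookkeeping; the only places that need minor care are (i) asserting finiteness of $\|Q\|_\infty$ before rearranging the inequality, and (ii) getting the coefficient of $\log|\gA|$ in the log-sum-exp upper bound exactly right so that the stated constants come out as claimed. I do not anticipate a genuine obstacle here --- this lemma is a standard ingredient subsequently used to control the $Q$- and $V$-differences feeding into Theorem~\ref{thm:bound}.
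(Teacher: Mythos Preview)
Your proposal is correct and follows essentially the same route as the paper: bound $V$ via the log-sum-exp inequality $V(\rvs)\le\|Q\|_\infty+\log|\gA|$, feed this into the soft Bellman relation $Q=r+\gamma\E[V]$ to obtain the self-referential inequality $\|Q\|_\infty\le R+\gamma(\|Q\|_\infty+\log|\gA|)$, rearrange, and then back-substitute for $\|V\|_\infty$. Your additional care points (finiteness of $\|Q\|_\infty$ before rearranging, and the two-sided sandwich) are small refinements that the paper's terse proof leaves implicit.
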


\begin{proof}
\begin{equation}
    \begin{aligned}
        &V(\rvs) \leq \log \sum_\rva \exp(Q(\rvs,\rva)) \leq \|Q\|_\infty + \log |\gA|.\\
    \end{aligned}
\end{equation}
Since $Q(\rvs,\rva) = r(\rvs,\rva) + \gamma \E_{\rvs'\sim P}[V(\rvs')]$, we have
\begin{equation}
    \|Q\|_\infty \leq R + \gamma (\|Q\|_\infty + \log |\gA|).
\end{equation}
Rearranging, we obtain Eq.~(\ref{eq:Qbound}) and hence Eq.~(\ref{eq:Vbound}).
\end{proof}

\begin{lemma}\label{lem:logpi}
Under MaxEnt RL, we have
\begin{equation}
   \| \log \pi \|_\infty \leq \frac{2R + (1+\gamma) \log |\gA|}{1 - \gamma}.
\end{equation}
\end{lemma}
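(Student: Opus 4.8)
The plan is to combine the energy-based characterization of the MaxEnt-optimal policy with the bounds on $Q$ and $V$ already established in Lemma~\ref{lem:VQ}. First, I would invoke Eq.~(\ref{eq:energy}), which gives $\pi(\rva\vert\rvs) = \exp\bigl(Q(\rvs,\rva) - V(\rvs)\bigr)$, so that $\log \pi(\rva\vert\rvs) = Q(\rvs,\rva) - V(\rvs)$ pointwise. Taking absolute values and applying the triangle inequality yields $\lvert \log \pi(\rva\vert\rvs)\rvert \leq \lVert Q\rVert_\infty + \lVert V\rVert_\infty$ for every $(\rvs,\rva)$, hence $\lVert \log \pi\rVert_\infty \leq \lVert Q\rVert_\infty + \lVert V\rVert_\infty$.

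Second, I would substitute the two bounds from Lemma~\ref{lem:VQ}, namely $\lVert Q\rVert_\infty \leq \frac{R + \gamma\log\lvert\gA\rvert}{1-\gamma}$ and $\lVert V\rVert_\infty \leq \frac{R + \log\lvert\gA\rvert}{1-\gamma}$. Adding these over the common denominator $1-\gamma$ gives $\frac{2R + (1+\gamma)\log\lvert\gA\rvert}{1-\gamma}$, which is exactly the claimed bound.

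There is essentially no obstacle here: the lemma is a direct corollary of Lemma~\ref{lem:VQ} once one recognizes that $\log\pi$ is the advantage $Q-V$ in the MaxEnt framework. The only point requiring a line of care is ensuring the $V$ used in $\log\pi = Q - V$ is the soft value function $V(\rvs) = \log\sum_{\rva}\exp(Q(\rvs,\rva))$ consistent with the normalization in Eq.~(\ref{eq:SBE}), which is indeed the convention of Lemma~\ref{lem:VQ}, so the two bounds compose cleanly.
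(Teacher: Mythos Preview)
Your proposal is correct and matches the paper's proof essentially verbatim: the paper also writes $|\log\pi(\rva|\rvs)| = |Q(\rvs,\rva) - V(\rvs)| \leq |Q(\rvs,\rva)| + |V(\rvs)|$ and then invokes Lemma~\ref{lem:VQ}.
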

\begin{proof}
    This directly follows Lemma~\ref{lem:VQ} because $$|\log\pi(\rva|\rvs)| = |Q(\rvs,\rva) - V(\rvs)| \leq |Q(\rvs,\rva)| + |V(\rvs)|.$$
\end{proof}

\begin{lemma}\label{lem:dent}
The discounted entropy difference is bounded by
\begin{equation}\label{eq:Dlogpi}
\begin{aligned}
    &\;\left| \E_{\pi_2}\left[\sum_{t=0}^\infty \gamma^t \log \pi_2(\rva_t|\rvs_t)\right] - \E_{\pi_1}\left[\sum_{t=0}^\infty \gamma^t \log \pi_1(\rva_t|\rvs_t)\right] \right|\\ \leq&\; \frac{2|\gA|\epsilon}{(1-\gamma)^2} + \frac{(2R+(1+\gamma)\log|\gA|)|\gA|\epsilon}{(1-\gamma)^3}  +
     \frac{(2R+(1+\gamma)\log|\gA|)|\gA|\gamma\epsilon}{(1-\gamma)^4}.
\end{aligned}
\end{equation}
\end{lemma}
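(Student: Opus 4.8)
The plan is to rewrite the two discounted log-policy sums as pairings with the (unnormalized) occupancy measures, then peel off the discrepancy in two telescoping stages. Writing $d_t^{\pi_i}$ for the time-$t$ state marginal under $\pi_i$ (so $d_0^{\pi_1}=d_0^{\pi_2}=\eta$) and $\rho^{\pi_i}(\rvs,\rva)=\sum_{t\ge 0}\gamma^t d_t^{\pi_i}(\rvs)\pi_i(\rva|\rvs)$, one has $\E_{\pi_i}\bigl[\sum_{t\ge 0}\gamma^t\log\pi_i(\rva_t|\rvs_t)\bigr]=\sum_{\rvs,\rva}\rho^{\pi_i}(\rvs,\rva)\log\pi_i(\rva|\rvs)$ for $i\in\{1,2\}$. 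Inserting the cross term $\sum_{\rvs,\rva}\rho^{\pi_1}(\rvs,\rva)\log\pi_2(\rva|\rvs)$ and applying the triangle inequality splits the quantity of interest as
\begin{equation*}
\Bigl|\textstyle\sum_{\rvs,\rva}\bigl(\rho^{\pi_2}-\rho^{\pi_1}\bigr)(\rvs,\rva)\log\pi_2(\rva|\rvs)\Bigr|\;+\;\Bigl|\textstyle\sum_{\rvs,\rva}\rho^{\pi_1}(\rvs,\rva)\bigl(\log\pi_2(\rva|\rvs)-\log\pi_1(\rva|\rvs)\bigr)\Bigr|,
\end{equation*}
and I would bound each term by H\"older's inequality, pairing an $L^1$ factor with an $L^\infty$ factor.

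For the second term, $\sum_{\rvs,\rva}\rho^{\pi_1}(\rvs,\rva)=\frac{1}{1-\gamma}$, and Lemma~\ref{lem:dlogpi} gives $\sup_\rvs\|\log\pi_2(\cdot|\rvs)-\log\pi_1(\cdot|\rvs)\|_\infty\le\frac{2\epsilon}{1-\gamma}$, so this term is at most $\frac{2\epsilon}{(1-\gamma)^2}\le\frac{2|\gA|\epsilon}{(1-\gamma)^2}$---the first term of the claimed bound. For the first term, Lemma~\ref{lem:logpi} bounds $\|\log\pi_2\|_\infty\le\frac{2R+(1+\gamma)\log|\gA|}{1-\gamma}$, so it remains to control $\|\rho^{\pi_2}-\rho^{\pi_1}\|_1$. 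Here I would telescope once more, writing at each $t$
\[
d_t^{\pi_2}(\rvs)\pi_2(\rva|\rvs)-d_t^{\pi_1}(\rvs)\pi_1(\rva|\rvs)=\bigl(d_t^{\pi_2}-d_t^{\pi_1}\bigr)(\rvs)\,\pi_2(\rva|\rvs)+d_t^{\pi_1}(\rvs)\bigl(\pi_2(\rva|\rvs)-\pi_1(\rva|\rvs)\bigr),
\]
whence $\|\rho^{\pi_2}-\rho^{\pi_1}\|_1\le\sum_{t\ge 0}\gamma^t\bigl(\|d_t^{\pi_2}-d_t^{\pi_1}\|_1+\sup_\rvs\|\pi_2(\cdot|\rvs)-\pi_1(\cdot|\rvs)\|_1\bigr)$. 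Plugging in Lemma~\ref{lemma:state-marginal} ($\|d_t^{\pi_2}-d_t^{\pi_1}\|_1\le 2t\frac{\epsilon}{1-\gamma}$) and Lemma~\ref{lem:tv} ($\|\pi_2(\cdot|\rvs)-\pi_1(\cdot|\rvs)\|_1\le\frac{|\gA|\epsilon}{1-\gamma}$), and using $\sum_{t\ge 0}\gamma^t=\frac{1}{1-\gamma}$, $\sum_{t\ge 0}t\gamma^t=\frac{\gamma}{(1-\gamma)^2}$, gives $\|\rho^{\pi_2}-\rho^{\pi_1}\|_1\le\frac{|\gA|\epsilon}{(1-\gamma)^2}+\frac{2\gamma\epsilon}{(1-\gamma)^3}$. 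Multiplying by $\|\log\pi_2\|_\infty$ produces the second and third terms of the bound (using $2\le|\gA|$ in the last one), and summing the three pieces closes the argument.

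The two H\"older steps and the geometric summations are routine; the part that needs care is the nested telescoping for $\|\rho^{\pi_2}-\rho^{\pi_1}\|_1$---specifically, keeping the linearly growing state-marginal drift $\|d_t^{\pi_2}-d_t^{\pi_1}\|_1\le 2t\epsilon/(1-\gamma)$ separate from the $t$-independent per-state policy gap, so that the resulting $\sum_{t}t\gamma^t$ contribution lands with the correct power of $(1-\gamma)$---together with the bookkeeping that matches the constants to the stated form via the harmless bounds $1\le|\gA|$ and $2\le|\gA|$. I note that Lemmas~\ref{lem:tv}, \ref{lem:dlogpi}, \ref{lem:logpi}, and \ref{lemma:state-marginal} apply verbatim, since $\pi_1$ and $\pi_2$ are precisely the MaxEnt-optimal policies induced by $r_1$ and $r_2$, with the reward gap $\epsilon$ and the bound $R$ fixed at the start of this subsection.
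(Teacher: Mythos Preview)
Your proposal is correct and follows essentially the same approach as the paper's proof: both arguments rewrite the discounted log-policy sums via the state marginals $d_t^{\pi_i}$, insert a cross term, and control the resulting three pieces using Lemmas~\ref{lem:tv}, \ref{lemma:state-marginal}, \ref{lem:dlogpi}, and \ref{lem:logpi} together with the sums $\sum_t\gamma^t$ and $\sum_t t\gamma^t$. The only differences are organizational: the paper works time-slice by time-slice and inserts the cross term $\E_{\rvs\sim d_t^{\pi_2}}[\sum_\rva \pi_1\log\pi_1]$ (splitting $\pi_2\log\pi_2-\pi_1\log\pi_1$ first), whereas you aggregate into occupancy measures and insert $\sum_{\rvs,\rva}\rho^{\pi_1}\log\pi_2$ (splitting $\rho^{\pi_2}-\rho^{\pi_1}$ first); your route yields sharper raw constants that you then relax via $1\le|\gA|$ and $2\le|\gA|$ to match the stated form, while the paper obtains the $|\gA|$ factors directly through slightly looser intermediate bounds.
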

\begin{proof}
We express the expected discounted sum as
\begin{equation*}
    \small
    \E_{\pi}\left[\sum_{t=0}^\infty \gamma^t \log \pi(\rva_t|\rvs_t)\right] = \sum_{t=0}^\infty \gamma^t \E_{\rvs \sim d_t^\pi} \left[\sum_\rva \pi(\rva|\rvs) \log \pi(\rva|\rvs)\right].
\end{equation*}

Now consider the difference:
\begin{equation*}
\begin{aligned}
&\sum_{t=0}^\infty \gamma^t \Bigg( \E_{\rvs \sim d_t^{\pi_2}} \left[\sum_\rva \pi_2(\rva|\rvs) \log \pi_2(\rva|\rvs)\right] - \E_{\rvs \sim d_t^{\pi_1}} \left[\sum_\rva \pi_1(\rva|\rvs) \log \pi_1(\rva|\rvs)\right] \Bigg) \\
&= \sum_{t=0}^\infty \gamma^t \left( \underbrace{\E_{\rvs \sim d_t^{\pi_2}} \left[\sum_\rva (\pi_2(\rva|\rvs) \log \pi_2(\rva|\rvs) - \pi_1(\rva|\rvs) \log \pi_1(\rva|\rvs))\right]}_{\mathrm{(first\;term)}} \right. 
\end{aligned}
\end{equation*}
\begin{equation}
\small
    \left. + \underbrace{\left(\E_{\rvs \sim d_t^{\pi_2}} - \E_{\rvs \sim d_t^{\pi_1}}\right) \left[\sum_\rva \pi_1(\rva|\rvs) \log \pi_1(\rva|\rvs)\right]}_{\mathrm{(second\; term)}} \right).
\end{equation}

We bound each term:

\noindent$\bullet$ The first term is bounded by 
    \begin{equation}
        \begin{aligned}
            \mathrm{(first\;term)} \leq &\; \E_{\rvs \sim d_t^{\pi_2}} \big[ \|\pi_2(\cdot|\rvs) - \pi_1(\cdot|\rvs)\|_1 \cdot \|\log \pi_2\|_\infty  +\\
            &~~~~~~~~~~~~~~~\|\log \pi_2(\rva|\rvs) - \log \pi_1(\rva|\rvs)\|_1 \big]\\
            \leq &\; \frac{|\gA|\epsilon}{1-\gamma} \frac{2R+(1+\gamma)\log|\gA|}{1-\gamma} + \frac{2|\gA|\epsilon}{1-\gamma} ~~~~~\text{(Lemmas~\ref{lem:tv}, \ref{lem:logpi}, \ref{lem:dlogpi})}\\
        \leq &\; \frac{(2R+(1+\gamma)\log|\gA|)|\gA|\epsilon}{(1-\gamma)^2} + \frac{2|\gA|\epsilon}{1-\gamma}.
        \end{aligned}
    \end{equation}
    $\bullet$ The second term is bounded by 
    \begin{equation}
    \begin{aligned}
    \mathrm{(second\;term)}
        \leq&\;\|d_t^{\pi_1} - d_t^{\pi_2}\|_1 \cdot |\gA|\|\log\pi_1\|_\infty\\
        \leq&\; \frac{t \epsilon}{1-\gamma} \cdot |\gA| \cdot \frac{2R+(1+\gamma)\log|\gA|}{1-\gamma} ~~~~~\text{(Lemmas~\ref{lemma:state-marginal}, \ref{lem:logpi})} \\
        \leq&\; \frac{(2R+(1+\gamma)\log|\gA|)|\gA\epsilon}{(1-\gamma)^2} \cdot t.
    \end{aligned}
    \end{equation}

    Summing over $t$ and applying $\sum_{t=0}^\infty \gamma^t = \frac{1}{1 - \gamma}$ and $\sum_{t=0}^\infty \gamma^t t = \frac{\gamma}{(1 - \gamma)^2}$ completes the proof.
\end{proof}

\noindent{\bf We next prove Theorem~\ref{thm:bound}.}
\begin{proof}
Substracting $\ell({\bm\theta}_\rnew)$ from $\ell_{{\bm\theta}_\rold}({\bm\theta}_\rnew)$ gives
\begin{equation}\label{eq:maindiff}
\begin{aligned}
    \ell_{{\bm\theta}_\rold}({\bm\theta}_\rnew) - \ell({\bm\theta}_\rnew) 
    =&\; \E_{\rho^{\pi_E}} \Bigg[ V_{r_{{\bm\theta}_\rnew}}^{\pi_{{\bm\theta}_\rnew}}(\rvs) - V_{r_{{\bm\theta}_\rnew}}^{\pi_{{\bm\theta}_\rold}}(\rvs) + \underbrace{Q_{r_{{\bm\theta}_\rnew}}^{\pi_{{\bm\theta}_\rold}}(\rvs,\rva)  - Q_{r_{{\bm\theta}_\rnew}}^{\pi_{{\bm\theta}_\rnew}}(\rvs,\rva)}_{\leq 0 \text{ because $\pi_{{\bm\theta}_\rnew}$ is optimal to $r_{{\bm\theta}_\rnew}$}} \Bigg]\\
    \leq&\; \E_{\rvs \sim \rho^{\pi_E}} \left[ V_{r_{{\bm\theta}_\rnew}}^{\pi_{{\bm\theta}_\rnew}}(\rvs) - V_{r_{{\bm\theta}_\rnew}}^{\pi_{{\bm\theta}_\rold}}(\rvs) \right].
\end{aligned}
\end{equation}

To bound $\ell_{{\bm\theta}_\rold}({\bm\theta}_\rnew) - \ell({\bm\theta}_\rnew)$, it is suffices to bound $V_{r_{{\bm\theta}_\rnew}}^{\pi_{{\bm\theta}_\rnew}}(\rvs) - V_{r_{{\bm\theta}_\rnew}}^{\pi_{{\bm\theta}_\rold}}(\rvs)$. To do so, let us first investigate the definition of $V_{r_{\bm\theta}}^{\pi_{\bm\theta}}(\rvs)$ with $\pi_{\bm\theta}$ optimal to $r_{\bm\theta}$:

\begin{equation*}
V_{r}^\pi(\rvs) = \E_\pi \left[ \sum_{t=0}^\infty \gamma^t \left( r(\rvs_t, \rva_t) -  \log \pi(\rva_t|\rvs_t) \right) \Big| \rvs_0 = \rvs \right],
\end{equation*}
which indicates that the value function $V_{r_{\bm\theta}}^{\pi_{\bm\theta}} (\rvs)$ can be split into two terms:
\begin{enumerate}
    \item {\bf Reward term:} $\E_{\pi_{\bm\theta}} \left[ \sum_{t=0}^\infty \gamma^t r_{\bm\theta}(\rvs_t, \rva_t) \right]$.
    \item {\bf Entropy term:}  $-\E_{\pi_{\bm\theta}} \left[ \sum_{t=0}^\infty \gamma^t \log \pi_{\bm\theta}(\rva_t|\rvs_t) \right]$.
\end{enumerate}

Thus, we can decompose $V_{r_{{\bm\theta}_\rnew}}^{\pi_{{\bm\theta}_\rnew}}(\rvs) - V_{r_{{\bm\theta}_\rnew}}^{\pi_{{\bm\theta}_\rold}}(\rvs)$ into two terms: 
\begin{equation}\label{eq:r+e}
    V_{r_{{\bm\theta}_\rnew}}^{\pi_{{\bm\theta}_\rnew}}(\rvs) - V_{r_{{\bm\theta}_\rnew}}^{\pi_{{\bm\theta}_\rold}}(\rvs) = \Delta_\text{reward}(\rvs) + \Delta_\text{entropy}(\rvs),
\end{equation}
where
\begin{equation}
\begin{aligned}
    \Delta_\text{reward}(\rvs) \coloneqq \E_{\pi_{{\bm\theta}_\rnew}} \left[ \sum_{t=0}^\infty \gamma^t r_{{\bm\theta}_\rnew}(\rvs_t, \rva_t) \Big | \rvs_0 = \rvs \right] - \E_{\pi_{{\bm\theta}_\rold}} \left[ \sum_{t=0}^\infty \gamma^t r_{{\bm\theta}_\rnew}(\rvs_t, \rva_t) \Big | \rvs_0 = \rvs\right],
\end{aligned}
\end{equation}

\begin{equation}
\begin{aligned}
    \Delta_\text{entropy}(\rvs) \coloneqq \E_{\pi_{{\bm\theta}_\rold}} \left[ \sum_{t=0}^\infty \gamma^t \log \pi_{{\bm\theta}_\rold}(\rva_t|\rvs_t) \Big | \rvs_0 = \rvs \right] - \E_{\pi_{{\bm\theta}_\rnew}} \left[ \sum_{t=0}^\infty \gamma^t \log \pi_{{\bm\theta}_\rnew}(\rva_t|\rvs_t) \Big | \rvs_0 = \rvs \right].
\end{aligned}    
\end{equation}

We first bound the term $\Delta_\text{reward}(\rvs)$:

\begin{equation*}
\begin{aligned}
    &~~~~~~\Delta_{\text{reward}}(\rvs)\\ 
    &= \sum_{t=0}^{\infty} \gamma^t \Bigg( \sum_\rvs d_t^{\pi_{{\bm\theta}_\rnew}}(\rvs) \sum_\rva \pi_{{\bm\theta}_\rnew}(\rva|\rvs) r_{{\bm\theta}_\rnew}(\rvs,\rva) -\sum_\rvs d_t^{\pi_{{\bm\theta}_\rold}}(\rvs) \sum_\rva \pi_{{\bm\theta}_\rold}(\rva|\rvs) r_{{\bm\theta}_\rnew}(\rvs,\rva) \Bigg)\\
    &\leq  \sum_{t=0}^{\infty} \gamma^t \left| \sum_\rvs d_t^{\pi_{{\bm\theta}_\rnew}}(\rvs) \left( \sum_\rva \left(\pi_{{\bm\theta}_\rnew}(\rva|\rvs) - \pi_{{\bm\theta}_\rold}(\rva|\rvs)\right) r_{{\bm\theta}_\rnew}(\rvs,\rva) \right)\right| +\\
    &~~~~~\sum_{t=0}^{\infty} \gamma^t \left| \sum_\rvs \left( d_t^{\pi_{{\bm\theta}_\rnew}}(\rvs) - d_t^{\pi_{{\bm\theta}_\rold}}(\rvs)\right) \sum_\rva \pi_{{\bm\theta}_\rold}(\rva|\rvs) r_{{\bm\theta}_\rnew}(\rvs,\rva)\right| ~~~~~\text{(triangle inequality)}\\
    & \leq \sum_{t=0}^{\infty} \gamma^t  \sum_\rvs d_t^{\pi_{{\bm\theta}_\rnew}}(\rvs)  \cdot 2R D_{\mathrm{TV}}(\pi_{{\bm\theta}_\rnew}(\cdot|\rvs), \pi_{{\bm\theta}_\rold}(\cdot|\rvs))  + \sum_{t=0}^{\infty} \gamma^t  R \| d_t^{\pi_{{\bm\theta}_\rnew}} - d_t^{\pi_{{\bm\theta}_\rold}} \|_1 ~~~~~ \text{(Lemma~\ref{lemma:state-marginal})}\\
    & \leq \sum_{t=0}^{\infty} \gamma^t \left( 2RD_{\mathrm{TV}}^{\max}(\pi_{{\bm\theta}_\rnew}, \pi_{{\bm\theta}_\rold})  + 2tRD_{\mathrm{TV}}^{\max}(\pi_{{\bm\theta}_\rnew}, \pi_{{\bm\theta}_\rold})  \right)\\
    & \leq \sum_{t=0}^{\infty} \gamma^t \left( 2(t+1)RD_{\mathrm{TV}}^{\max}(\pi_{{\bm\theta}_\rnew}, \pi_{{\bm\theta}_\rold})   \right)\\
    & \leq \sum_{t=0}^{\infty} \gamma^t \left( (t+1)\frac{R|\gA|\epsilon_{{\bm\theta}_\rold}({\bm\theta}_\rnew)}{1-\gamma}   \right)\\
    & = \frac{R|\gA|\epsilon_{{\bm\theta}_\rold}({\bm\theta}_\rnew)}{1-\gamma} \left( \sum_{t=0}^{\infty} \gamma^t t + \sum_{t=0}^{\infty} \gamma^t \right)\\
    & = \frac{R|\gA|\epsilon_{{\bm\theta}_\rold}({\bm\theta}_\rnew)}{1-\gamma} \left(\frac{\gamma}{(1-\gamma)^2} + \frac{1}{1-\gamma}\right)
\end{aligned}
\end{equation*}
\begin{equation}\label{eq:boundR}
    = \frac{R|\gA|\epsilon_{{\bm\theta}_\rold}({\bm\theta}_\rnew)}{(1-\gamma)^3}.~~~~~~~~~~~~~~~~~~~~~~~~~~~~~~~~~~~~~~~~~~~~~~~~~~~~~~~~~~~~~~~~~~~~~~~~~~~~~~~~~~~~~~~~~~~~~~~~~~~~~~~~~~~~~~~~~~~~~~~
\end{equation}

The bound of the term $\Delta_\text{entropy}(\rvs)$ directly follows Lemma~\ref{lem:dent}:

\begin{equation}\label{eq:boundE}
\begin{aligned}
    \Delta_\text{entropy}(\rvs) &\leq \Bigg(\frac{2|\gA|}{(1-\gamma)^2} + \frac{(2R+(1+\gamma)\log|\gA|)|\gA|}{(1-\gamma)^3} \\
    &+  \frac{(2R+(1+\gamma)\log|\gA|)|\gA|\gamma}{(1-\gamma)^4}\Bigg)\epsilon_{{\bm\theta}_\rold}({\bm\theta}_\rnew).
\end{aligned}
\end{equation}

Finally, combining Eq.~(\ref{eq:boundR}) and Eq.~(\ref{eq:boundE}), we complete the proof by
\begin{equation}
    \begin{aligned}
        & \ell_{{\bm\theta}_\rold}({\bm\theta}_\rnew) - \ell({\bm\theta}_\rnew)\\
        \leq&\; \E_{\rvs \sim \rho^{\pi_E}} \left[ V_{r_{{\bm\theta}_\rnew}}^{\pi_{{\bm\theta}_\rnew}}(\rvs) - V_{r_{{\bm\theta}_\rnew}}^{\pi_{{\bm\theta}_\rold}}(\rvs) \right]\\
         \leq&\;  \Delta_\text{reward}(\rvs) + \Delta_\text{entropy}(\rvs) ~~~~~\text{(Eq.~(\ref{eq:boundE})$+$Eq.~(\ref{eq:boundR}))}\\
         =&\; \left(\frac{2|\gA|}{(1-\gamma)^2}  + \frac{(5-\gamma)R|\gA| + (\gamma - \gamma^2+2)|\gA|\log|\gA|}{(1-\gamma)^4}\right) \epsilon_{{\bm\theta}_\rold}({\bm\theta}_\rnew).
    \end{aligned}
\end{equation}
\end{proof}

\section{A Unified View of Non-adversarial IRL}\label{app:unify}
Let $\Cov_{p(\rvx)}(\kappa_1 (\rvx),\kappa_2(\rvx)) \coloneqq \E_{p(\rvx)}[\kappa_1(\rvx) \cdot \kappa_2(\rvx)] - \E_{p(\rvx)}[\kappa_1(\rvx)]\cdot\E_{p(\rvx)}[\kappa_2(\rvx)]$ denote the covariance of two functions $\kappa_1(\rvx),\kappa_2(\rvx)$ under the distribution $p(\rvx)$.
We first show an equivalent expression of $\ell({\bm\theta})$.
\begin{lemma}
The likelihood objective has the following equivalent expression:
\begin{equation}\label{eq:cov}
\begin{aligned}
    \ell({\bm\theta}) &= \E_{\rho^{\pi_E}}[r_{\bm\theta}(\rvs,\rva)] - \E_{\rho^{\pi_{\bm\theta}}}[r_{\bm\theta}(\rvs,\rva)]\\
    &= \E_{\rho^{\pi_{\bm\theta}}}\left[\frac{\rho^{\pi_E}(\rvs,\rva)}{\rho^{\pi_{\bm\theta}}(\rvs,\rva)} r_{\bm\theta}(\rvs,\rva)\right] - \underbrace{\E_{\rho^{\pi_{\bm\theta}}}\left[\frac{\rho^{\pi_E}(\rvs,\rva)}{\rho^{\pi_{\bm\theta}}(\rvs,\rva)} \right]}_{\equiv 1} \times \E_{\rho^{\pi_{\bm\theta}}}[r_{\bm\theta}(\rvs,\rva)]\\
    &= \Cov_{\rho^{\pi_{\bm\theta}}(\rvs,\rva)}\left( \frac{\rho^{\pi_E}(\rvs,\rva) }{ \rho^{\pi_{\bm\theta}}(\rvs, \rva)}, \nabla_{\bm\theta} r_{\bm\theta}(\rvs, \rva)\right).
\end{aligned}
\end{equation}
\end{lemma}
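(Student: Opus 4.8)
At its core this identity is a change-of-measure (importance reweighting) followed by the definition of the covariance $\Cov_{p(\rvx)}(\kappa_1,\kappa_2) = \E_{p(\rvx)}[\kappa_1\kappa_2] - \E_{p(\rvx)}[\kappa_1]\E_{p(\rvx)}[\kappa_2]$, so the only input needed is Proposition~\ref{prop:equiv}. I would prove the general fact that for any bounded $f:\gS\times\gA\to\sR$,
\[
\E_{\rho^{\pi_E}}[f(\rvs,\rva)] - \E_{\rho^{\pi_{\bm\theta}}}[f(\rvs,\rva)] \;=\; \Cov_{\rho^{\pi_{\bm\theta}}(\rvs,\rva)}\!\Big(\tfrac{\rho^{\pi_E}(\rvs,\rva)}{\rho^{\pi_{\bm\theta}}(\rvs,\rva)},\, f(\rvs,\rva)\Big),
\]
and then instantiate it: taking $f = r_{\bm\theta}$ recovers the $\ell$-level statement via Eq.~(\ref{eq:equiv}), and taking $f = \nabla_{\bm\theta} r_{\bm\theta}$ recovers the gradient-level statement via Eq.~(\ref{eq:grad}) (the last line of the display is the $\nabla_{\bm\theta} r_{\bm\theta}$ instance, which is the one consistent with the covariance being a $d$-vector).

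First I would record the reweighting step: since in MaxEnt RL the optimal policy $\pi_{\bm\theta}$ is a strictly positive softmax (Eq.~(\ref{eq:energy})), $\rho^{\pi_{\bm\theta}}$ is supported on every reachable state-action pair and hence dominates $\rho^{\pi_E}$, so $\E_{\rho^{\pi_E}}[f] = \sum_{\rvs,\rva}\rho^{\pi_{\bm\theta}}(\rvs,\rva)\,\tfrac{\rho^{\pi_E}(\rvs,\rva)}{\rho^{\pi_{\bm\theta}}(\rvs,\rva)}\,f(\rvs,\rva) = \E_{\rho^{\pi_{\bm\theta}}}\!\big[\tfrac{\rho^{\pi_E}}{\rho^{\pi_{\bm\theta}}} f\big]$ (replace the sum by an integral and ``reachable'' by ``in $\mathrm{supp}\,\rho^{\pi_E}$'' for continuous spaces). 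This turns the first term into the second displayed line. Next I would verify the underbraced normalization $\E_{\rho^{\pi_{\bm\theta}}}[\rho^{\pi_E}/\rho^{\pi_{\bm\theta}}] = \sum_{\rvs,\rva}\rho^{\pi_E}(\rvs,\rva) = 1$, treating the occupancy measures as normalized distributions. Finally I would assemble: with $p = \rho^{\pi_{\bm\theta}}$, $\kappa_1 = \rho^{\pi_E}/\rho^{\pi_{\bm\theta}}$, $\kappa_2 = \nabla_{\bm\theta} r_{\bm\theta}$, the covariance's first term equals $\E_{\rho^{\pi_E}}[\nabla_{\bm\theta} r_{\bm\theta}]$ by the reweighting step and its second term equals $1\cdot\E_{\rho^{\pi_{\bm\theta}}}[\nabla_{\bm\theta} r_{\bm\theta}]$ by the normalization, so the difference is exactly $\nabla_{\bm\theta}\ell({\bm\theta})$ (Eq.~(\ref{eq:grad})), completing the chain of equalities.

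There is no substantive obstacle here; the only points meriting a line of care are (i) well-definedness of the ratio $\rho^{\pi_E}/\rho^{\pi_{\bm\theta}}$, which I would dispatch via the full-support property of MaxEnt optimal policies, or, more defensively, by observing that the ratio only ever appears multiplied by $\rho^{\pi_{\bm\theta}}$ so its values off $\mathrm{supp}\,\rho^{\pi_{\bm\theta}}$ are immaterial; and (ii) consistency with the $\tfrac{1}{1-\gamma}$ normalization convention noted in the preliminaries, since with unnormalized occupancy measures the underbraced quantity would be $\tfrac{1}{1-\gamma}$ rather than $1$ and the identity holds after the corresponding rescaling, so the cleanest reading is to interpret every $\E_{\rho^\pi}[\cdot]$ in the lemma as an expectation under the normalized occupancy distribution. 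Everything else is the definition of covariance.
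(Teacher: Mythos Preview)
Your proposal is correct and matches the paper's approach: the paper offers no proof beyond the displayed chain of equalities itself (with the underbrace annotation), and your plan simply supplies the justifications for each step---importance reweighting, the normalization $\E_{\rho^{\pi_{\bm\theta}}}[\rho^{\pi_E}/\rho^{\pi_{\bm\theta}}]=1$, and the covariance definition. Your observations about the $r_{\bm\theta}$ versus $\nabla_{\bm\theta} r_{\bm\theta}$ inconsistency in the display and the normalization convention are accurate readings of the statement as written.
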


We next show that the KL-based $f$-IRL \citep{ni2021f} essentially maximizes the likelihood of expert demonstrations (minimize the imitation gap). Recall from the main text that $f$-IRL assumes a state-only reward function, $r_{\bm\theta}(\rvs)$, and seeks to match the expert's state marginal distribution by minimizing an $f$-divergence objective: 
\begin{equation}
    L_f({\bm\theta}) \coloneqq D_f(\rho^{\pi_E}(\rvs) \| \rho^{\pi_{\bm\theta}}(\rvs)),
\end{equation}
where $\rho^{\pi}(\rvs)$ denotes the state marginal of the occupancy measure such that $\rho^{\pi}(\rvs, \rva) = \rho^{\pi}(\rvs) \pi(\rva|\rvs)$. It has been shown in \citep[Appendix~A2]{ni2021f} that if $D_f$ is taken as the KL divergence, then  $\nabla_{\bm\theta} L_f({\bm\theta})$ can be reduced to the following analytical form: 
\begin{equation}\label{eq:f-IRL-grad}
\begin{aligned}
    \nabla_{\bm\theta} L_f({\bm\theta}) = &\;\E_{\tau \sim \rho^{\pi_{\bm\theta}}(\tau)}\left[\E_{\rvs \sim \tau} \left[\frac{\rho^{\pi_E}(\rvs)}{\rho^{\pi_{\bm\theta}}(\rvs)}\right] \E_{\rvs \sim \tau}[r_{\bm\theta}(\rvs)] \right] -\\&\;\E_{\tau \sim \rho^{\pi_{\bm\theta}}(\tau)}\left[ \E_{\rvs \sim \tau} \left[\frac{\rho^{\pi_E}(\rvs)}{\rho^{\pi_{\bm\theta}}(\rvs)}\right] \right] \times  \E_{\tau \sim \rho^{\pi_{\bm\theta}}(\tau)}\left[ \E_{\rvs \sim \tau}[r_{\bm\theta}(\rvs)] \right],
\end{aligned}
\end{equation}
where $\rho^{\pi_\theta}(\tau)$ denotes the trajectory distribution under the reward $r_{\bm\theta}$ and $\E_{\rvs \sim \tau}[\cdot]$ denotes the expectation w.r.t. states over the cumulative state visitation frequency determined by a given trajectory. To show that KL-based $f$-IRL essentially maximizes $\ell({\bm\theta})$, it suffices to show that Eq.~(\ref{eq:f-IRL-grad}) is propotional to Eq.~(\ref{eq:cov}). To proceed, we first notice that $\E_{\tau \sim \rho^{\pi_{\bm\theta}}(\tau)}\left[ \E_{\rvs \sim \tau}[\cdot] \right] \equiv (1-\gamma)\E_{\rvs \sim \rho^{\pi_{\bm\theta}}(\rvs)}[\cdot]$ as both represent the state marginal of the occupancy measure. Given this equivalence relationship, we can reduce the second term in the right-hand side of Eq.~(\ref{eq:f-IRL-grad}) to the following term:
\begin{equation}\label{eq:Lf-left}
\begin{aligned}
    &\;\E_{\tau \sim \rho^{\pi_{\bm\theta}}(\tau)}\left[ \E_{\rvs \sim \tau} \left[\frac{\rho^E(\rvs)}{\rho^{\pi_{\bm\theta}}(\rvs)}\right] \right] \times \E_{\tau \sim \rho^{\pi_{\bm\theta}}(\tau)}\left[ \E_{\rvs \sim \tau}[r_{\bm\theta}(\rvs)] \right]\\
    =&\;(1-\gamma)\E_{\rvs \sim \rho^{\pi_{\bm\theta}}(\rvs)}\left[\frac{\rho^E(\rvs)}{\rho^{\pi_{\bm\theta}}(\rvs)}\right] \times \E_{\rvs \sim \rho^{\pi_{\bm\theta}}(\rvs)}\left[ r_{\bm\theta}(\rvs) \right]\\
    =&\; (1-\gamma) \E_{\rvs \sim \rho^{\pi_{\bm\theta}}(\rvs)}\left[ r_{\bm\theta}(\rvs) \right].
\end{aligned}
\end{equation}

We next investigate the first term in the right-hand side of Eq.~(\ref{eq:f-IRL-grad}):
\begin{equation}\label{eq:Lf-right}
\begin{aligned}
    &\;\E_{\tau \sim \rho^{\pi_{\bm \theta}}(\tau)}\left[\E_{\rvs \sim \tau} \left[\frac{\rho^{\pi_E}(\rvs)}{\rho^{\pi_\theta}(\rvs)}\right] \E_{\rvs \sim \tau}[r_\theta(\rvs)] \right]\\
    =&\;\E_{\tau \sim \rho^{\pi_{\bm \theta}}(\tau)}\Bigg[ -\Cov_{\tau}\left( \frac{\rho^{\pi_E}(\rvs)}{\rho^{\pi_{\bm \theta}}(\rvs)}, \nabla_{\bm \theta}r_{\bm \theta}(\rvs) \right) + \E_{\rvs \sim \tau} \left[ \frac{\rho_E(\rvs)}{\rho^{\pi_{\bm \theta}}(\rvs)} \nabla_{\bm \theta}r_{\bm \theta}(\rvs) \right] \Bigg]\\
    =&\; \E_{\tau \sim \rho^{\pi_{\bm \theta}}(\tau)}\left[ -\Cov_{\tau}\left( \frac{\rho^{\pi^{\pi_E}}(\rvs)}{\rho^{\pi_{\bm \theta}}(\rvs)}, \nabla_{\bm \theta}r_{\bm \theta}(\rvs) \right)\right] +  \E_{\tau \sim \rho^{\pi_{\bm \theta}}(\tau)}\left[ \E_{\rvs \sim \tau} \left[ \frac{\rho^{\pi_E}(\rvs)}{\rho^{\pi_{\bm \theta}}(\rvs)} \nabla_{\bm \theta}r_{\bm \theta}(\rvs) \right] \right]\\
    =&\; -\Cov_{\rho^{\pi_{\bm \theta}}(\rvs)}\left( \frac{\rho^{\pi^{\pi_E}}(\rvs)}{\rho^{\pi_{\bm \theta}}(\rvs)}, \nabla_{\bm \theta}r_{\bm \theta}(\rvs) \right) + (1-\gamma) \E_{\rho^{\pi_E}(\rvs)}[\nabla_{\bm \theta}r_{\bm \theta}(\rvs)].
\end{aligned}
\end{equation}
Combining Eq.~(\ref{eq:Lf-left}) and Eq.~(\ref{eq:Lf-right}), we have 
\begin{equation}
\begin{aligned}
    \nabla_{\bm\theta} L_f({\bm\theta})
    = &-\Cov_{\rho^{\pi_{\bm \theta}}(\rvs)}\left( \frac{\rho^{\pi^{\pi_E}}(\rvs)}{\rho^{\pi_{\bm \theta}}(\rvs)}, \nabla_{\bm \theta}r_{\bm \theta}(\rvs) \right) + (1-\gamma) \E_{\rho^{\pi^{\pi_E}}(\rvs)}[\nabla_{\bm \theta}r_{\bm \theta}(\rvs)] -(1-\gamma)\E_{\rvs \sim \rho^{\pi_{\bm\theta}}(\rvs)}\left[ r_{\bm\theta}(\rvs) \right]\\
    =&-\Cov_{\rho^{\pi_{\bm \theta}}(\rvs)}\left( \frac{\rho^{\pi^{\pi_E}}(\rvs)}{\rho^{\pi_{\bm \theta}}(\rvs)}, \nabla_{\bm \theta}r_{\bm \theta}(\rvs) \right) +(1-\gamma)\Cov_{\rho^{\pi_{\bm \theta}}(\rvs)}\left( \frac{\rho^{\pi^{\pi_E}}(\rvs)}{\rho^{\pi_{\bm \theta}}(\rvs)}, \nabla_{\bm \theta}r_{\bm \theta}(\rvs) \right)\\
    =&-\gamma \Cov_{\rho^{\pi_{\bm \theta}}(\rvs)}\left( \frac{\rho^{\pi^{\pi_E}}(\rvs)}{\rho^{\pi_{\bm \theta}}(\rvs)}, \nabla_{\bm \theta}r_{\bm \theta}(\rvs) \right).
\end{aligned}
\end{equation}

Therefore, if the reward is state-only, \ie, $r_{\bm \theta}(\rvs)$, we have $\nabla_{\bm\theta} L_f({\bm\theta}) \propto - \nabla_{\bm\theta}\ell({\bm\theta})$. This completes the proof for Eq.~(\ref{eq:firl}) in the main text.

\section{Detailed Experimental Setup}\label{app:setup}
\subsection{Experimental Setup for PIRO}

Training procedure is given in Alg.~\ref{alg:PRO-short}. Network architecture and hyperparameter setup for each task are listed in Tab.~\ref{tab:setup_mujoco} and Tab.~\ref{tab:setup_antmaze}.

\begin{table}[h]
    \centering
    \small
    \caption{\small Network architecture and hyperparameter setup for MuJoCo tasks.}
    \label{tab:setup_mujoco}
    \begin{tabular}{lccccc}
    \toprule
      & Hopper & Walker2D & Ant & Humanoid & Cheetah \\
    \midrule
      Expert demo. ($\rvs$-$\rva$ pairs)  & 1000 & 1000 & 1000 & 1000 & 1000 \\
      Reward network (hidden layers) & 128, 128 & 128, 128 & 128, 128 & 128, 128 & 128, 128 \\
      Batch size ($\rvs$-$\rva$ pairs)     & 256 & 256 & 256 & 256 & 256 \\
      Reward learning rate           & 1e-4 & 1e-4 & 1e-4 & 1e-4 & 1e-4 \\
      SAC epochs per iteration       & 5 & 5 & 5 & 5 & 5 \\
      Entropy coefficient $\alpha$   & 0.2 & 0.2 & 0.2 & 0.2 & 0.2 \\
      Threshold $\bar{\epsilon}^{\mathrm{target}}$ & 0.5 & 0.5 & 0.5 & 0.5 & 0.5 \\
      Scaling factor $x_\epsilon$ for $\bar{\epsilon}$ & 1.5 & 1.5 & 1.5 & 1.5 & 1.5 \\
      Scaling factor $y_\epsilon$ for $\bar{\epsilon}$ & 1.5 & 1.5 & 1.5 & 1.5 & 1.5 \\
      SAC rounds per iteration ($k$) & 1 & 1 & 1 & 1 &1  \\
      Reward gradient steps per iteration ($n$) & 1& 1& 1& 1 & 1\\
    \bottomrule
    \end{tabular}
\end{table}

\begin{table*}[h]
    \centering
    \small
    \caption{\small Network architecture and hyperparameter setup for AntMaze and Adroit tasks.}
    \label{tab:setup_antmaze}
    \begin{tabular}{lcccc}
    \toprule
     & AntMaze-U & AntMaze-M & AntMaze-L & HandPen \\
    \midrule
      Expert demo. ($\rvs$-$\rva$ pairs)         & 700 & 1000 & 1000 & 2000 \\
      Reward network (hidden layers)       & 128, 128 & 128, 128 & 128, 128 & 256, 256 \\
      Batch size ($\rvs$-$\rva$ pairs)           & 256 & 256 & 256 & 256 \\
      Reward learning rate                 & 1e-4 & 1e-4 & 1e-4 & 3e-5 \\
      SAC epochs per iteration             & 5 & 5 & 5 & 5 \\
      Entropy coefficient $\alpha$         & 0.2 & 0.2 & 0.2 & 0.2\\
      Threshold $\bar{\epsilon}^{\mathrm{target}}$ & 0.5 & 0.5 & 0.5 & 0.5 \\
      Scaling factor $x_\epsilon$ for $\bar{\epsilon}$ & 1.5 & 1.5 & 1.5 & 1.5 \\
      Scaling factor $y_\epsilon$ for $\bar{\epsilon}$ & 1.5 & 1.5 & 1.5 & 1.5 \\
      SAC rounds per iteration ($k$) & 1 & 1 & 1 & 1 \\
      Reward gradient steps per iteration ($n$) & 1 & 1 & 1 & 1 \\
    \bottomrule
    \end{tabular}
\end{table*}

\subsection{Pre-trained Expert Policy Model and Expert Demonstrations}

The sources of pre-trained policy models or offline trajectory datasets for experts are provided in Tab.~\ref{tab:experts}. In MuJoCo tasks, we use these high-quality pre-trained policy models to sample expert demonstrations. In Robotic tasks, we directly use the expert trajectories from the Minari Offline Reinforcement Learning datasets \citep{minari}.

\begin{table}[h]
    \centering
    \small
    \caption{\small The sources of expert policies or demonstrations.}\label{tab:experts}
    \setlength{\tabcolsep}{6pt}
    \renewcommand{\arraystretch}{1.05}
    \begin{tabularx}{\linewidth}{lp{12cm}}
    \toprule
      Task   &  Source \\
    \midrule
    MuJoCo Tasks & Same as expert policies used in $f$-IRL \citep{ni2021f} and ML-IRL \citep{zeng2022maximum} \\
    UMazeDense    & \url{https://minari.farama.org/datasets/D4RL/antmaze/umaze-v1/} \\
    MediumDense   & \url{https://minari.farama.org/datasets/D4RL/antmaze/medium-play-v1/} \\
    LargeDense    & \url{https://minari.farama.org/datasets/D4RL/antmaze/large-play-v1/} \\
    AdroitHandPen & \url{https://minari.farama.org/datasets/D4RL/pen/human-v2/} \\
    \bottomrule
    \end{tabularx}
\end{table}

\section{Additional Experimental Results}

\subsection{Hardware Information}

Hardware specifications are provided in Tab.~\ref{tab:hardware}.

\begin{table}[h]
    \centering
    \caption{Hardware configuration used in experiments.}
    \label{tab:hardware}
    \begin{tabular}{ll}
    \toprule
      Hardware    & Specifications \\
    \midrule
         CPU & AMD EPYC 7713 64-Core Processor @ 2 GHz \\
         GPU & NVIDIA A100-SXM4-80GB @ 1215 MHz \\
         Memory & 2 TB  \\
    \bottomrule
    \end{tabular}
\end{table}

\subsection{Comparison Between Theoretical and Adaptive $C$ on CartPole}\label{app:C}

To further validate our theoretical analysis, we conduct an additional experiment on \texttt{CartPole}, where  $|\mathcal{A}| = 2$, $R = 1$, and $\gamma=0.9$. According to Eq.~(\ref{eq:bound}), we have that the exact theoretical value $C \approx 111{,}373.55$.

We compare this theoretical $C$ against the adaptive $C$ method (bounded in $[0.001, 10]$). As shown in Figure~\ref{fig:cartpole_c}, the adaptive method substantially reduces KL divergence throughout training (mean $226.3$ vs. $648.9$) while also achieving significantly higher final rewards, both undiscounted (313.6 vs. 19.1) and discounted (10.0 vs. 7.1). When using theoretical $C$, the reward performance does improve within the acceptable training range, but the progress is neither as fast nor as stable as with the adaptive $C$. 

These results highlight the practical benefit of adaptively adjusting $C$ during training, despite the theoretical guarantees provided by the closed-form expression. In particular, adaptive $C$ allows stable and sample-efficient learning while avoiding the instability caused by the overly large theoretical constant.

\begin{figure}[h]
    \centering
    \includegraphics[width=0.75\linewidth]{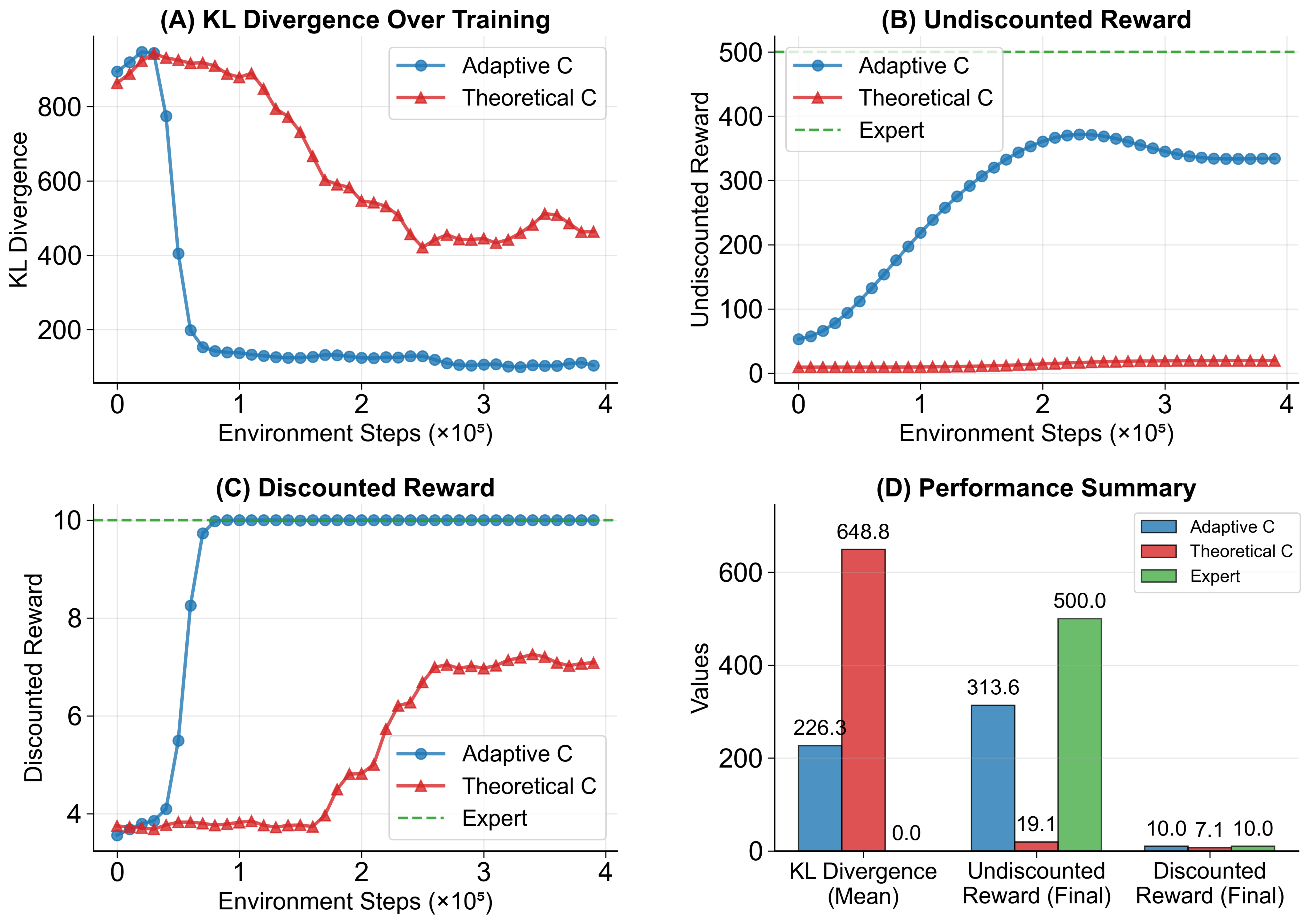}
    \caption{\small {\bf Comparison between theoretical and adaptive $C$} on \texttt{CartPole}.}
    \label{fig:cartpole_c}
\end{figure}



\section{A Real-World Case Study: Meerkat Behavior Modeling}\label{app:meerkat}

\subsection{Dataset Details}\label{app:meetkat-data-collection}

\begin{figure}[h]
\centering
\subfloat[Camera view of the entrance and foraging area]{\label{figure_c1}\includegraphics[height=0.2\textwidth]{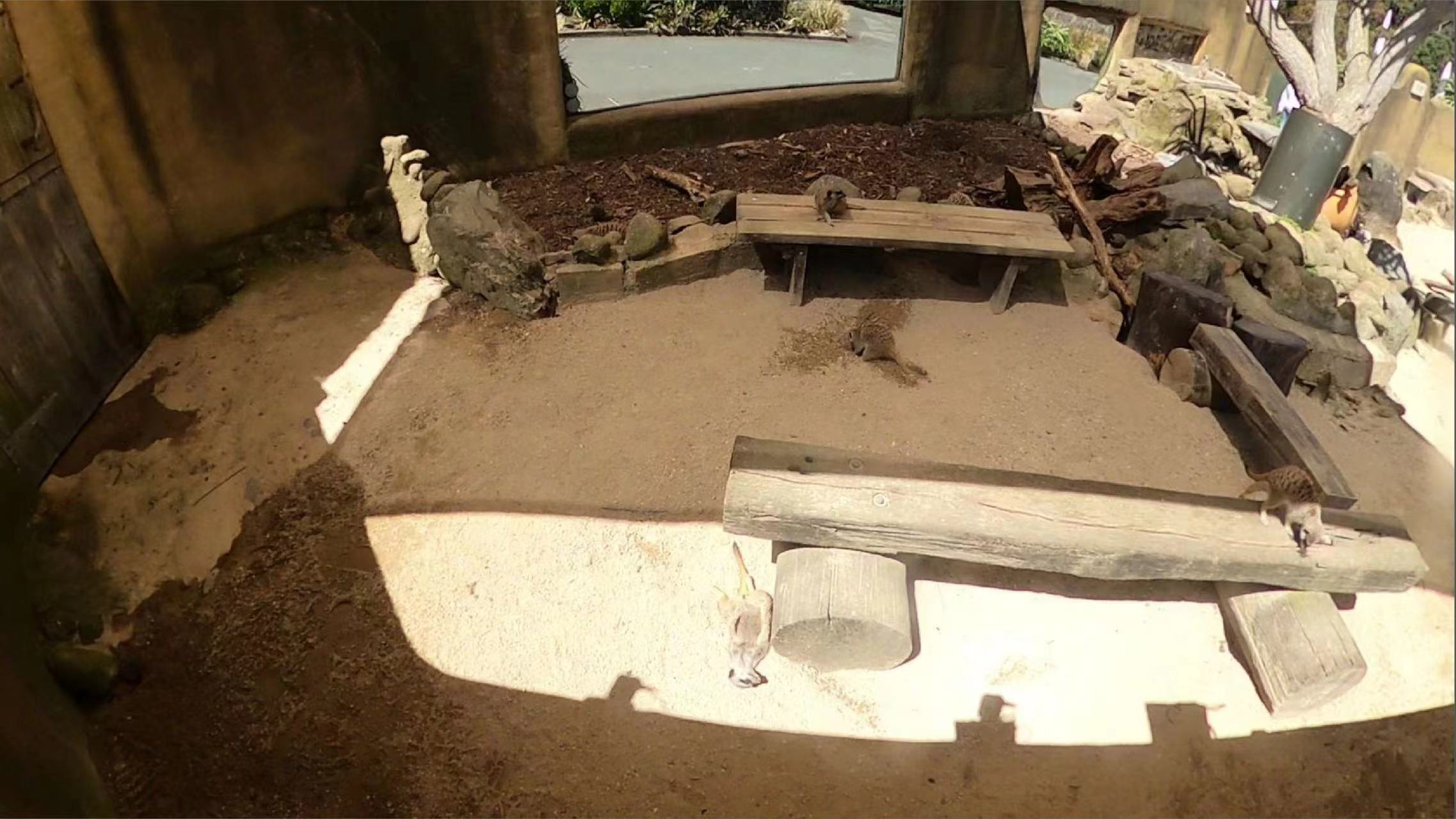}}\quad
\subfloat[Camera view of the mound and backside of the enclosure]{\label{figure_c2}\includegraphics[height=0.2\textwidth]{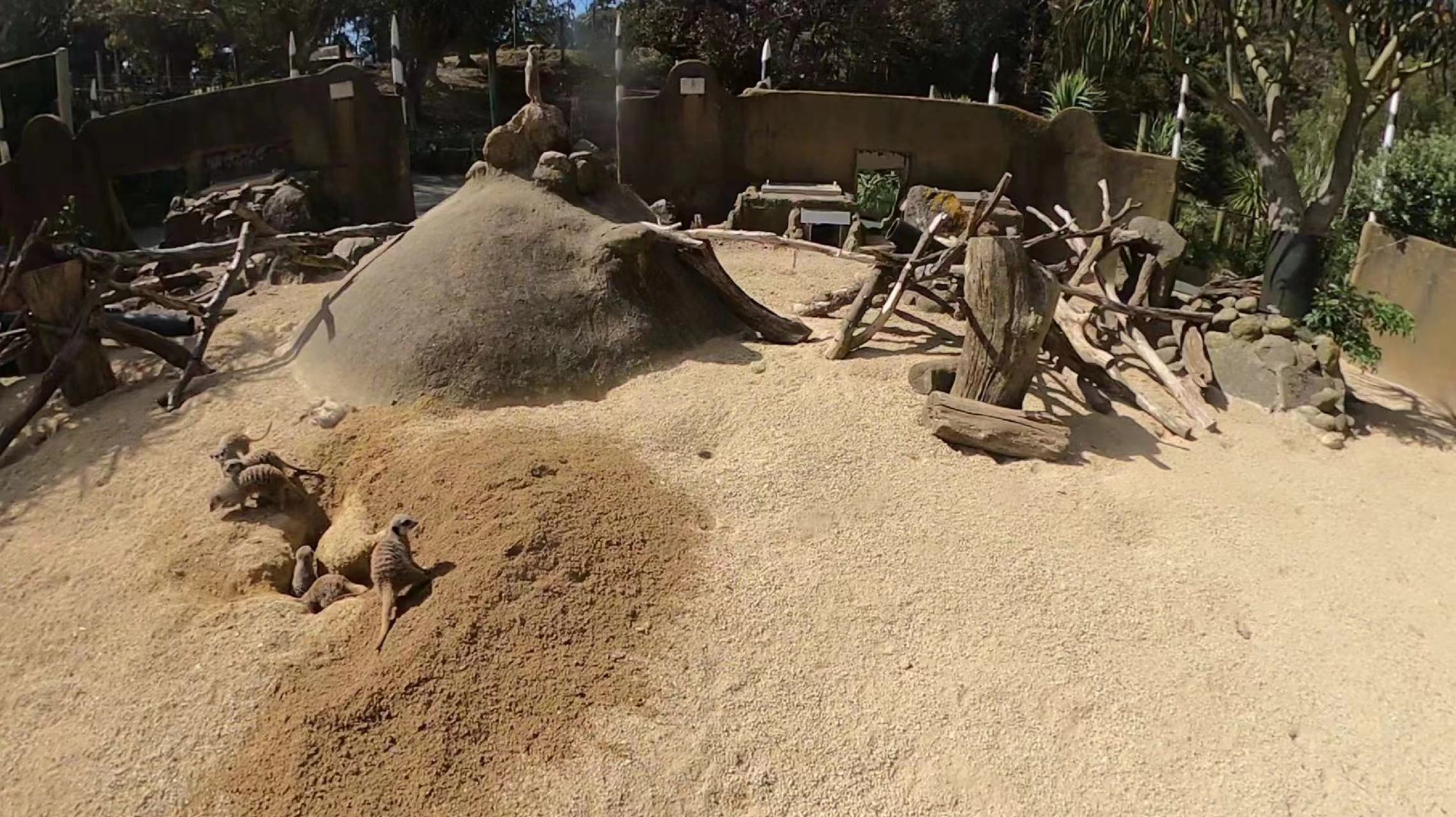}}
\caption{\small Example images of the camera views.}
\label{figure_camera}
\end{figure}

As a real-world case study, we apply PIRO to an animal behavior modeling task using a dataset of twenty 12-minute annotated videos capturing the spatial-temporal actions of a meerkat mob in a zoo habitat \citep{rogers2023meerkatbehaviourrecognitiondataset}. To obtain the meerkat behavior, Rogers et al. \citep{rogers2023meerkatbehaviourrecognitiondataset} used two GoPro Max cameras set on the back wall of the meerkat enclosure, focusing on two hubs of activity (Fig.~\ref{figure_camera}). The current zone, coordinates, and behavior of every visible meerkat are labeled for every timestep. Fig.~\ref{fig:behaviors} illustrates the full set of behaviors. In addition, each meerkat is identified by a unique identifier during a sequence, keeping track of the same individuals. The heatmap of meerkat's activity is shown in Fig. \ref{figure_frequency} and the region division for each camera is shown in Fig. \ref{figure_area}.




\begin{figure}[h]
  \centering
  \begin{subfigure}{0.24\linewidth}
    \includegraphics[width=0.48\textwidth, height=0.5\textwidth]{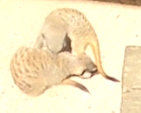}
    \includegraphics[width=0.48\textwidth, height=0.5\textwidth]{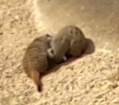}
    \caption{Allogrooming.}
    \label{fig:behaviour_allogrooming}
  \end{subfigure}
  \hfill
  \begin{subfigure}{0.24\linewidth}
    \includegraphics[width=0.48\textwidth, height=0.5\textwidth]{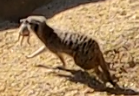}
    \includegraphics[width=0.48\textwidth, height=0.5\textwidth]{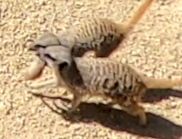}
    \caption{Carrying a pup.}
    \label{fig:behaviour_carrying}
  \end{subfigure}
  \hfill
  \begin{subfigure}{0.24\linewidth}
    \includegraphics[width=0.48\textwidth, height=0.5\textwidth]{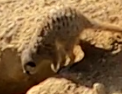}
    \includegraphics[width=0.48\textwidth, height=0.5\textwidth]{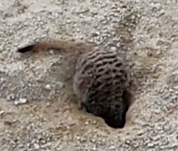}
    \caption{Digging.}
    \label{fig:behaviour_digging_burrow}
  \end{subfigure}
  \hfill
  \begin{subfigure}{0.24\linewidth}
    \includegraphics[width=0.48\textwidth, height=0.5\textwidth]{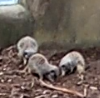}
    \includegraphics[width=0.48\textwidth, height=0.5\textwidth]{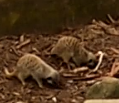}
    \caption{Foraging.}
    \label{fig:behaviour_foraging}
  \end{subfigure}
  \hfill
  \begin{subfigure}{0.24\linewidth}
    \includegraphics[width=0.48\textwidth, height=0.5\textwidth]{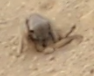}
    \includegraphics[width=0.48\textwidth, height=0.5\textwidth]{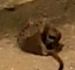}
    \caption{Grooming.}
    \label{fig:behaviour_grooming}
  \end{subfigure}
  \hfill
  \begin{subfigure}{0.24\linewidth}
    \includegraphics[width=0.48\textwidth, height=0.5\textwidth]{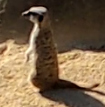}
    \includegraphics[width=0.48\textwidth, height=0.5\textwidth]{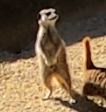}
    \caption{High sitting/standing.}
    \label{fig:behaviour_h_sitting}
  \end{subfigure}
  \hfill
  \begin{subfigure}{0.24\linewidth}
    \includegraphics[width=0.48\textwidth, height=0.5\textwidth]{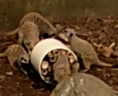}
    \includegraphics[width=0.48\textwidth, height=0.5\textwidth]{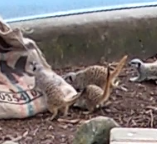}
    \caption{Interact with object.}
    \label{fig:behaviour_object_interaction}
  \end{subfigure}
  \hfill
  \begin{subfigure}{0.24\linewidth}
    \includegraphics[width=0.48\textwidth, height=0.5\textwidth]{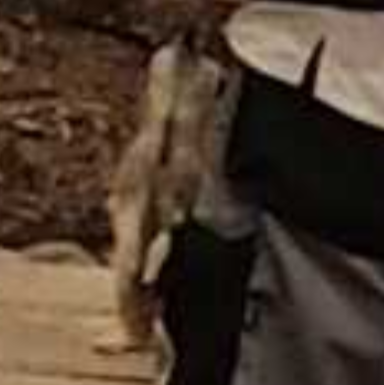}
    \includegraphics[width=0.48\textwidth, height=0.5\textwidth]{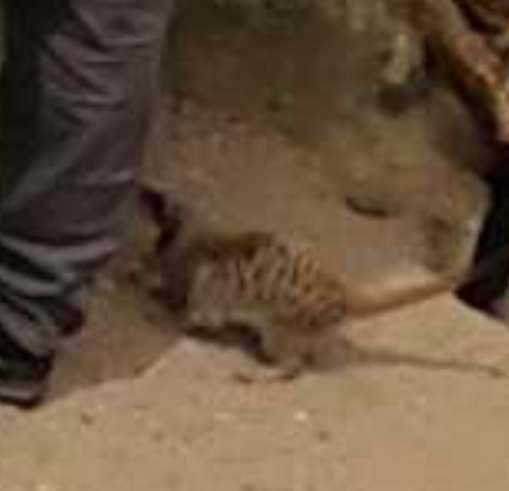}
    \caption{Interact with human.}
    \label{fig:human_interaction}
  \end{subfigure}
  \hfill
    \begin{subfigure}{0.24\linewidth}
    \includegraphics[width=0.48\textwidth, height=0.5\textwidth]{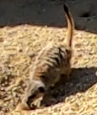}
    \includegraphics[width=0.48\textwidth, height=0.5\textwidth]{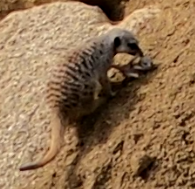}
    \caption{Interact with a pup.}
    \label{fig:behaviour_pup_interaction}
  \end{subfigure}
  \hfill
  \begin{subfigure}{0.24\linewidth}
    \includegraphics[width=0.48\textwidth, height=0.5\textwidth]{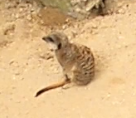}
    \includegraphics[width=0.48\textwidth, height=0.5\textwidth]{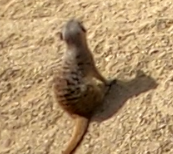}
    \caption{Low sitting/standing.}
    \label{fig:behaviour_l_sitting}
  \end{subfigure}
  \hfill
  \begin{subfigure}{0.24\linewidth}
    \includegraphics[width=0.48\textwidth, height=0.5\textwidth]{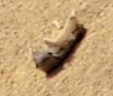}
    \includegraphics[width=0.48\textwidth, height=0.5\textwidth]{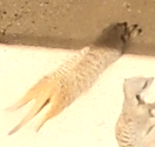}
    \caption{Lying.}
    \label{fig:behaviour_lying}
  \end{subfigure}
  \hfill
  \begin{subfigure}{0.24\linewidth}
    \includegraphics[width=0.48\textwidth, height=0.5\textwidth]{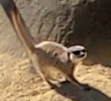}
    \includegraphics[width=0.48\textwidth, height=0.5\textwidth]{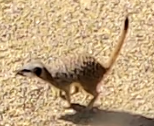}
    \caption{Moving.}
    \label{fig:behaviour_moving}
  \end{subfigure}
  \hfill
  \begin{subfigure}{0.24\linewidth}
    \includegraphics[width=0.48\textwidth, height=0.5\textwidth]{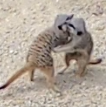}
    \includegraphics[width=0.48\textwidth, height=0.5\textwidth]{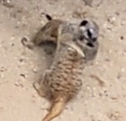}
    \caption{Playfighting.}
    \label{fig:behaviour_playfighting}
  \end{subfigure}
  \begin{subfigure}{0.24\linewidth}
    \includegraphics[width=0.48\textwidth, height=0.5\textwidth]{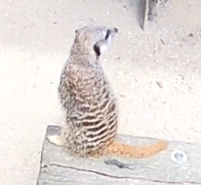}
    \includegraphics[width=0.48\textwidth, height=0.5\textwidth]{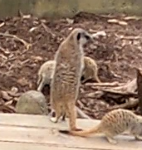}
    \caption{Raised guarding.}
    \label{fig:behaviour_raised_guarding}
  \end{subfigure}
  \begin{subfigure}{0.24\linewidth}
    \includegraphics[width=0.48\textwidth, height=0.5\textwidth]{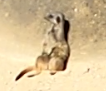}
    \includegraphics[width=0.48\textwidth, height=0.5\textwidth]{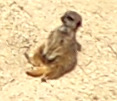}
    \caption{Sunbathing.}
    \label{fig:behaviour_sunbathing}
  \end{subfigure}
  \hfill
  \caption{\small Fifteen types of the meerkat behaviors.}
  \label{fig:behaviors}
\end{figure}

\begin{figure}[h]
\centering

\begin{minipage}[t]{0.49\textwidth}
  \begin{subfigure}[t]{0.495\linewidth}
    \centering
    \includegraphics[width=\linewidth]{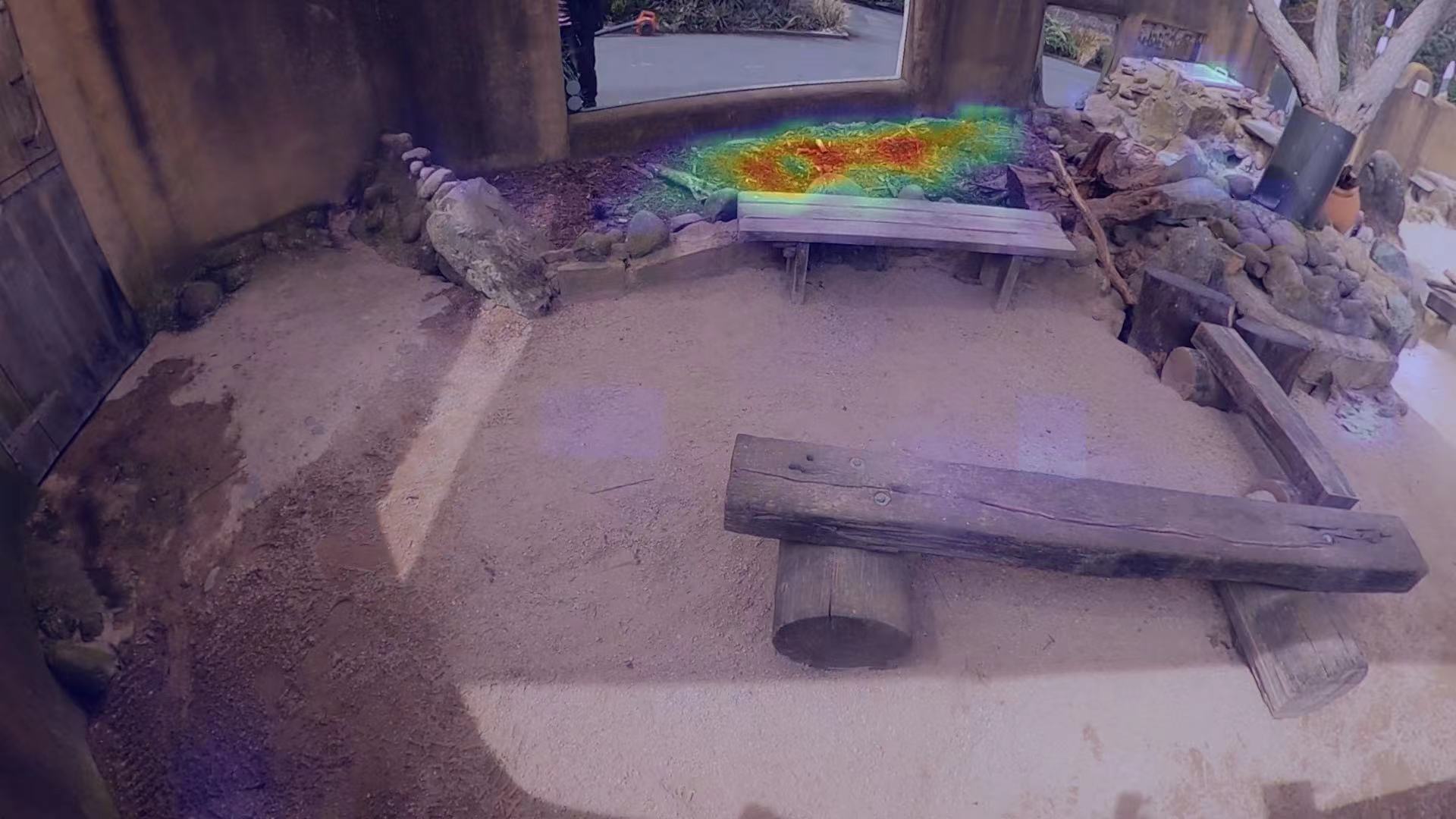}
    \subcaption{Entrance \& foraging area}
  \end{subfigure}\hfill
  \begin{subfigure}[t]{0.495\linewidth}
    \centering
    \includegraphics[width=\linewidth]{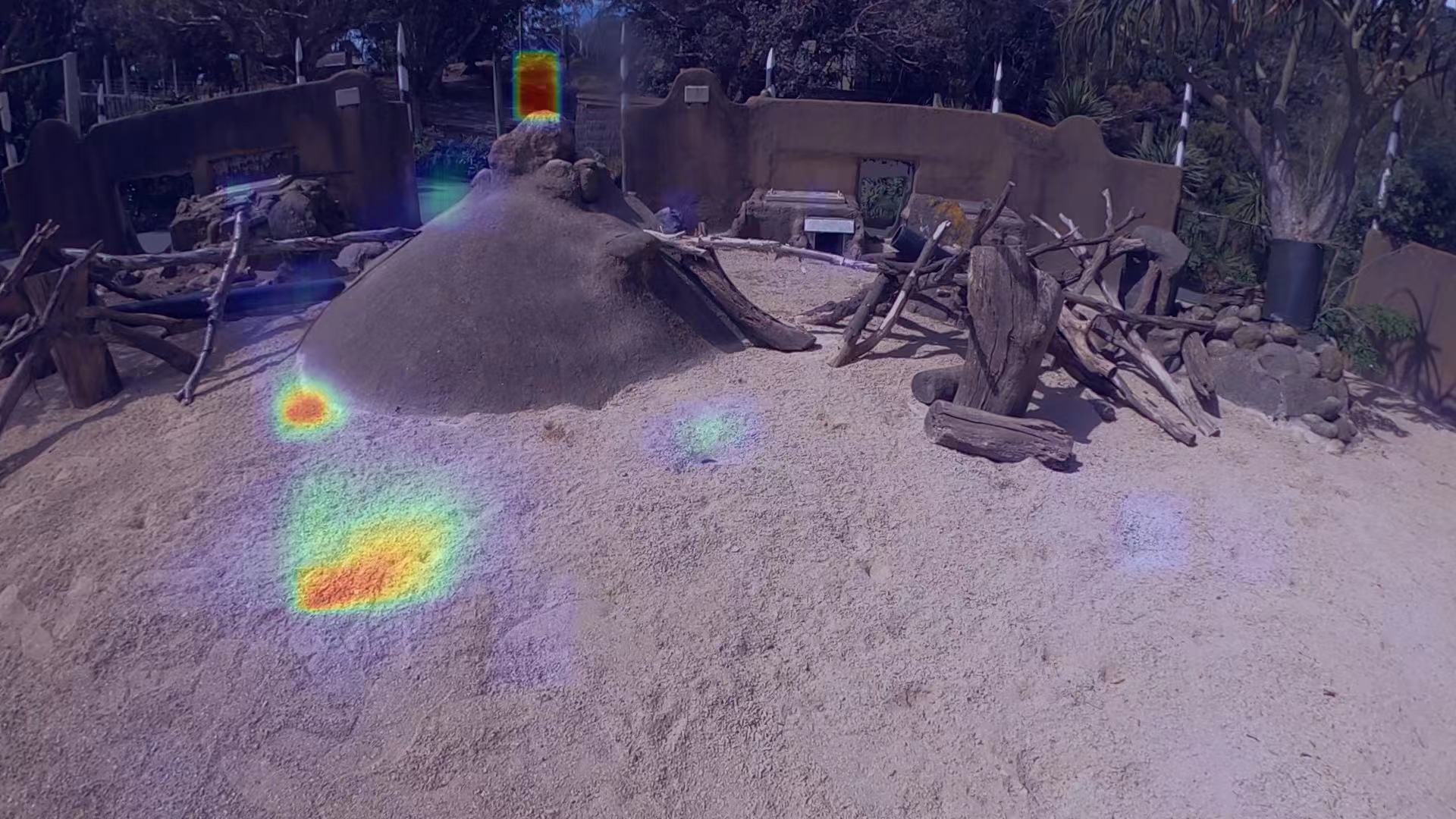}
    \subcaption{Mound \& backside}
  \end{subfigure}

  \caption{\small The frequency of meerkat activity in various regions corresponds to the heatmap from the camera perspective. The areas where meerkats are frequently active are highlighted.}
  \label{figure_frequency}
\end{minipage}
\hfill
\begin{minipage}[t]{0.49\textwidth}
  \begin{subfigure}[t]{0.495\linewidth}
    \centering
    \includegraphics[width=\linewidth]{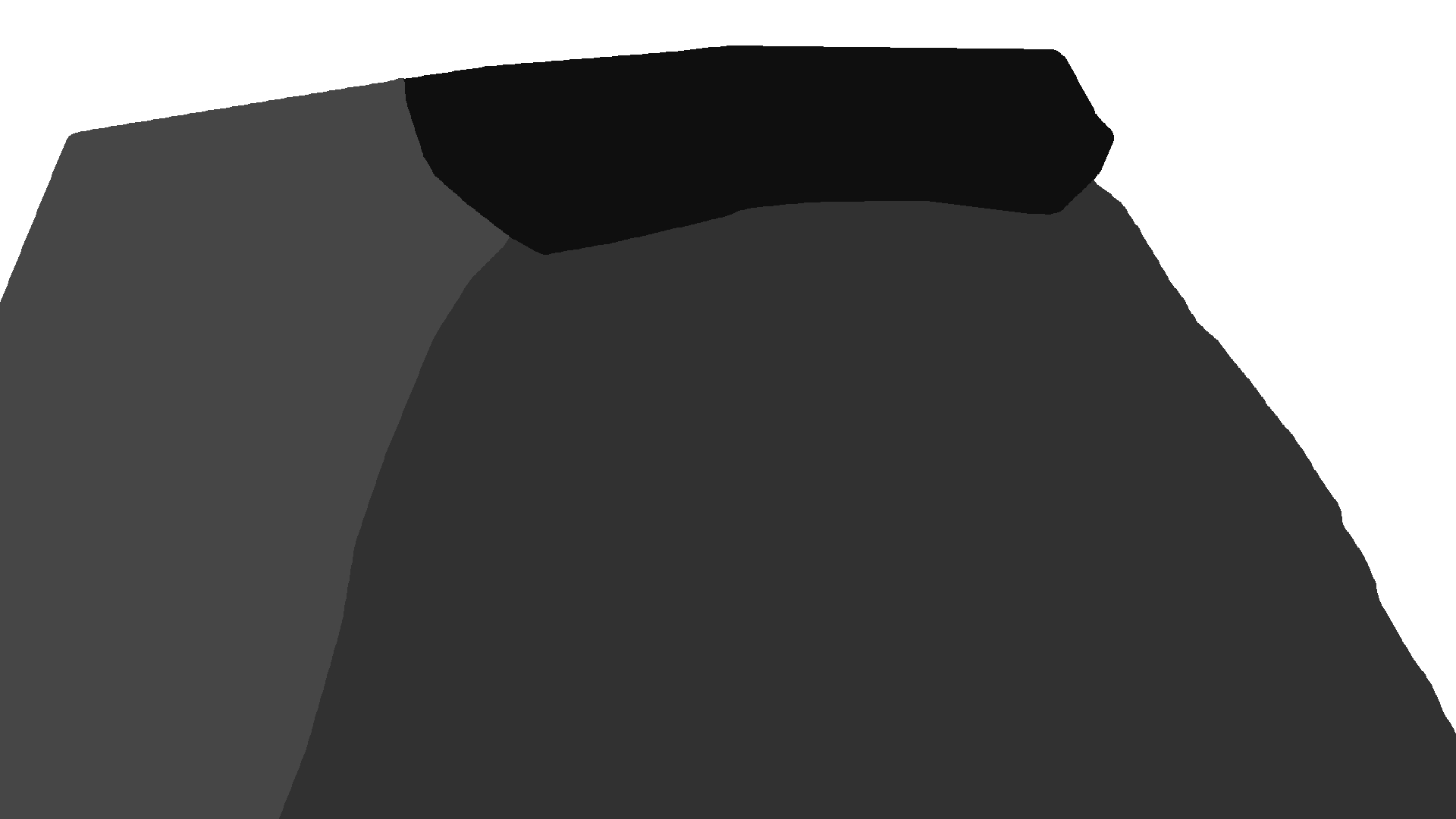}
    \subcaption{Entrance \& foraging area}
  \end{subfigure}\hfill
  \begin{subfigure}[t]{0.495\linewidth}
    \centering
    \includegraphics[width=\linewidth]{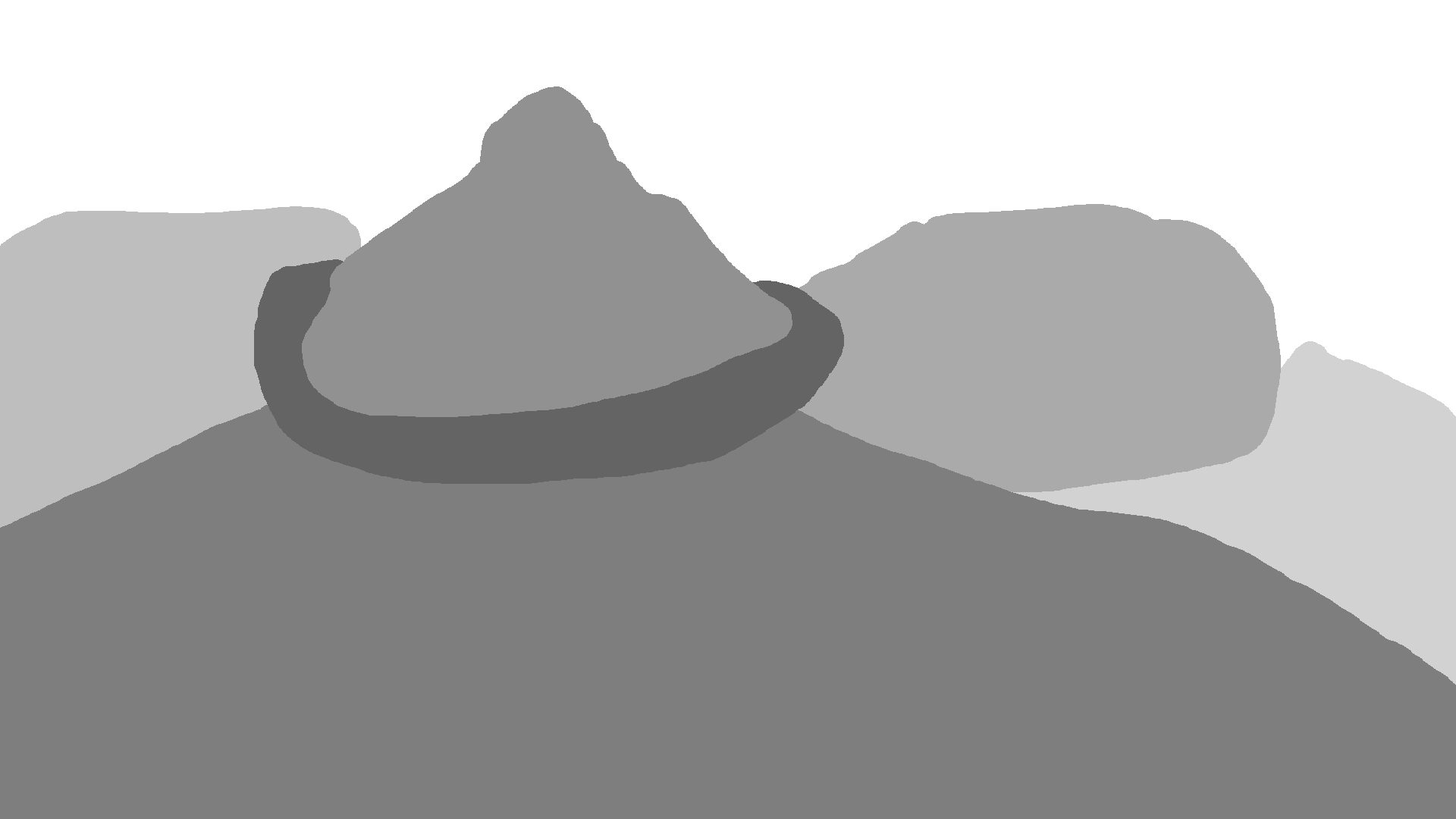}
    \subcaption{Mound \& backside}
  \end{subfigure}

  \caption{\small Different colors are labelled for each area to visually illustrate the division of meerkat activity zones.}
  \label{figure_area}
\end{minipage}
\end{figure}

\subsection{Experimental Results for Policy Divergence Reduction}\label{app:meetkat-policy-reduction}

The dataset includes 25 discrete actions (15 behaviors + 10 actions the represent moving between zones in the habitat) and state representations based on zones (10 total) and social context (counts of close and distant neighbors). The goal is to learn a behavior model that predicts the actions of an individual meerkat, assuming a shared policy across individuals \citep{gendron2023behaviour}. We extract independent demonstration trajectories of 30 consecutive transitions per individual.  

Since \textit{ground-truth rewards} are unavailable in this real-world setting, we evaluate policy imitation using frequencies of transition across habitat zones. 
Visualizations of the expert's frequencies, PIRO's outputs and baseline results are provided in Fig.~\ref{fig:transition_frequency}. 


PIRO consistently outperforms baselines in learning stability, as reflected in its lowest error rate. 
AIRL and IQ-Learn also demonstrate low errors, but these errors remain noticeably higher compared to PIRO. This highlights PIRO's capability to reproduce meerkat trajectories with high similarity.

\begin{figure}[h]
  \centering
  \begin{minipage}{0.48\linewidth}
    \centering
    \includegraphics[width=0.48\textwidth]{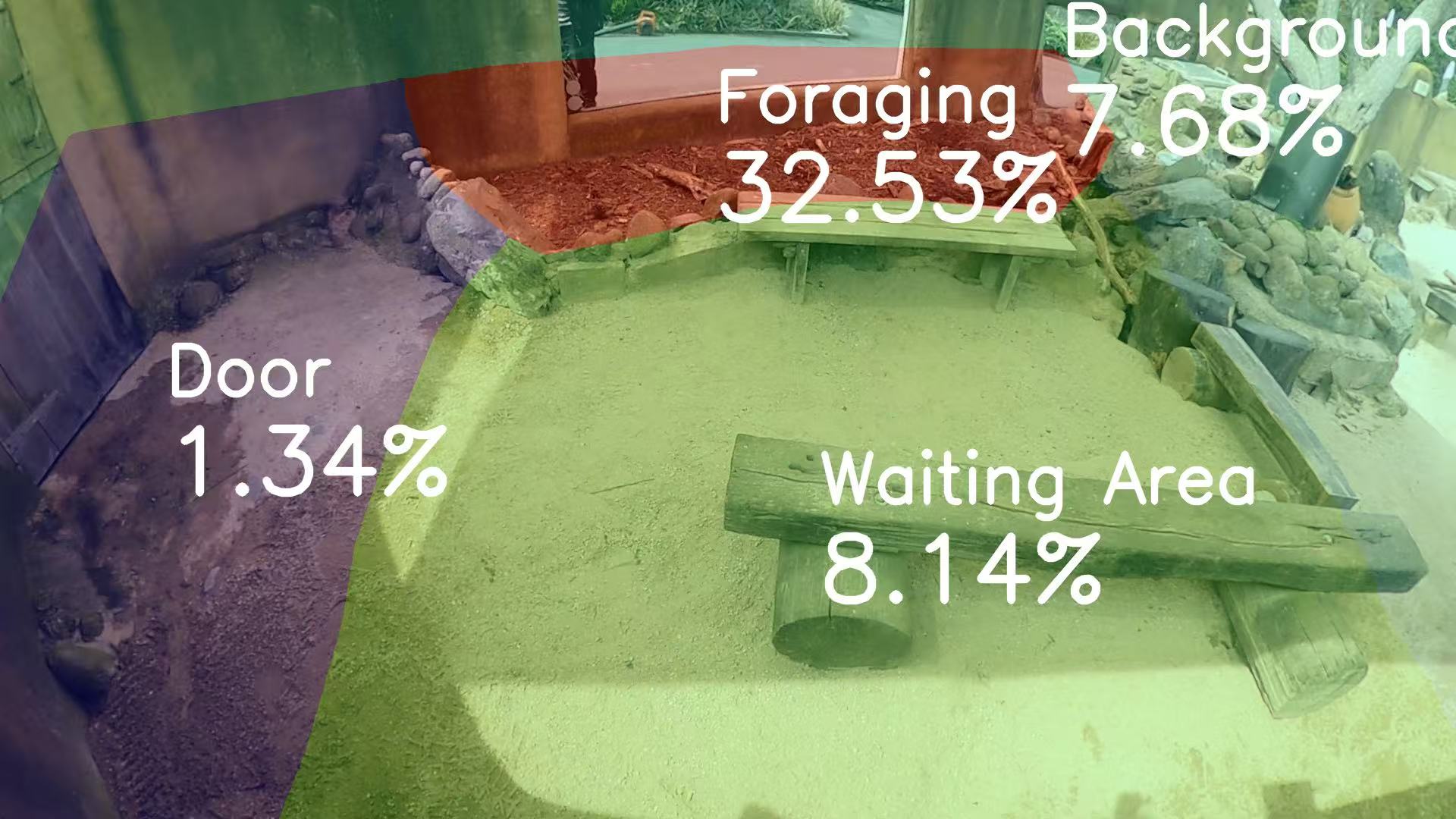}
    \includegraphics[width=0.48\textwidth]{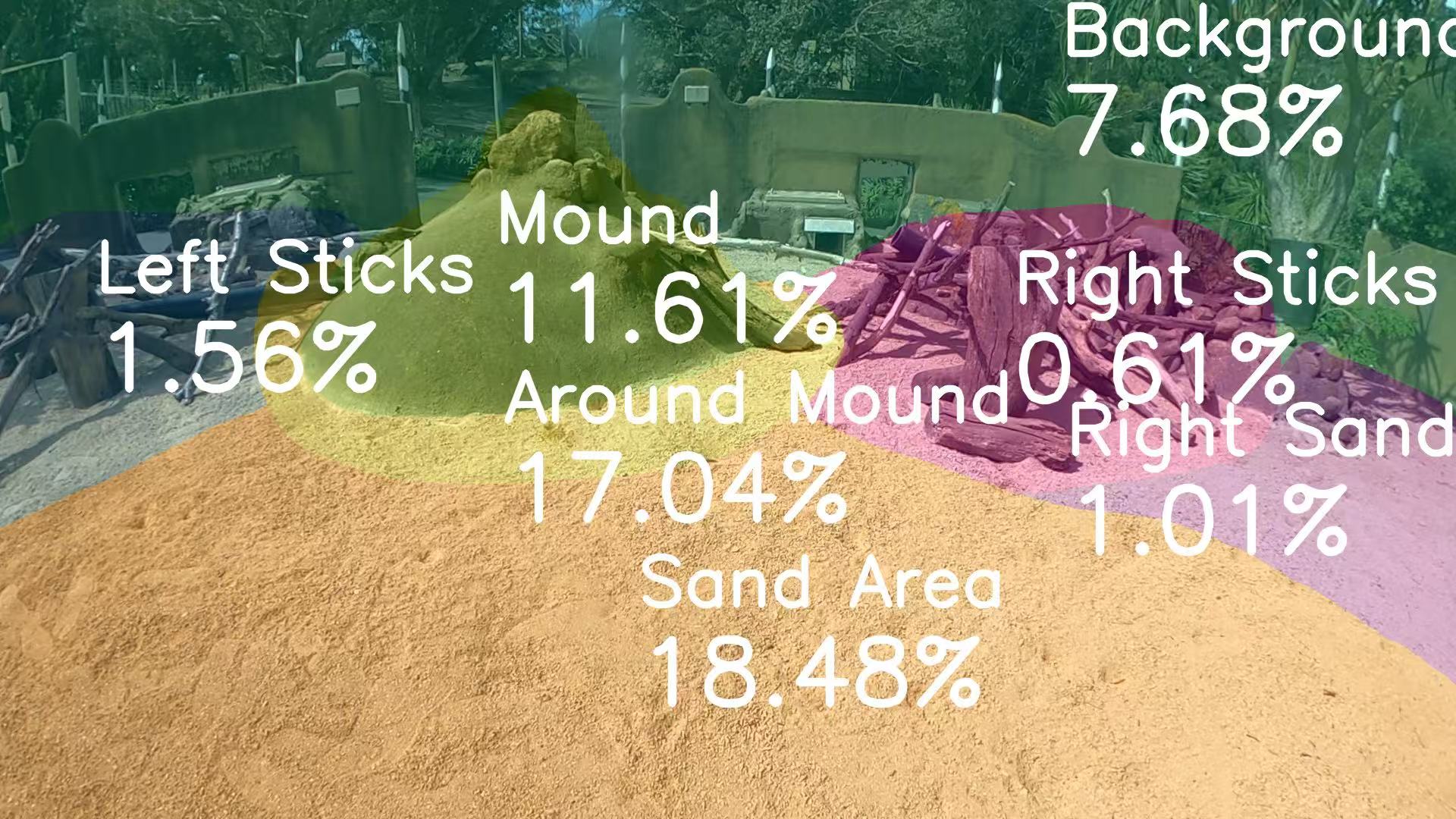}
    \par\vspace{5pt}
    \small (a) Expert
  \end{minipage}
  \hfill
  \begin{minipage}{0.48\linewidth}
    \centering
    \includegraphics[width=0.48\textwidth]{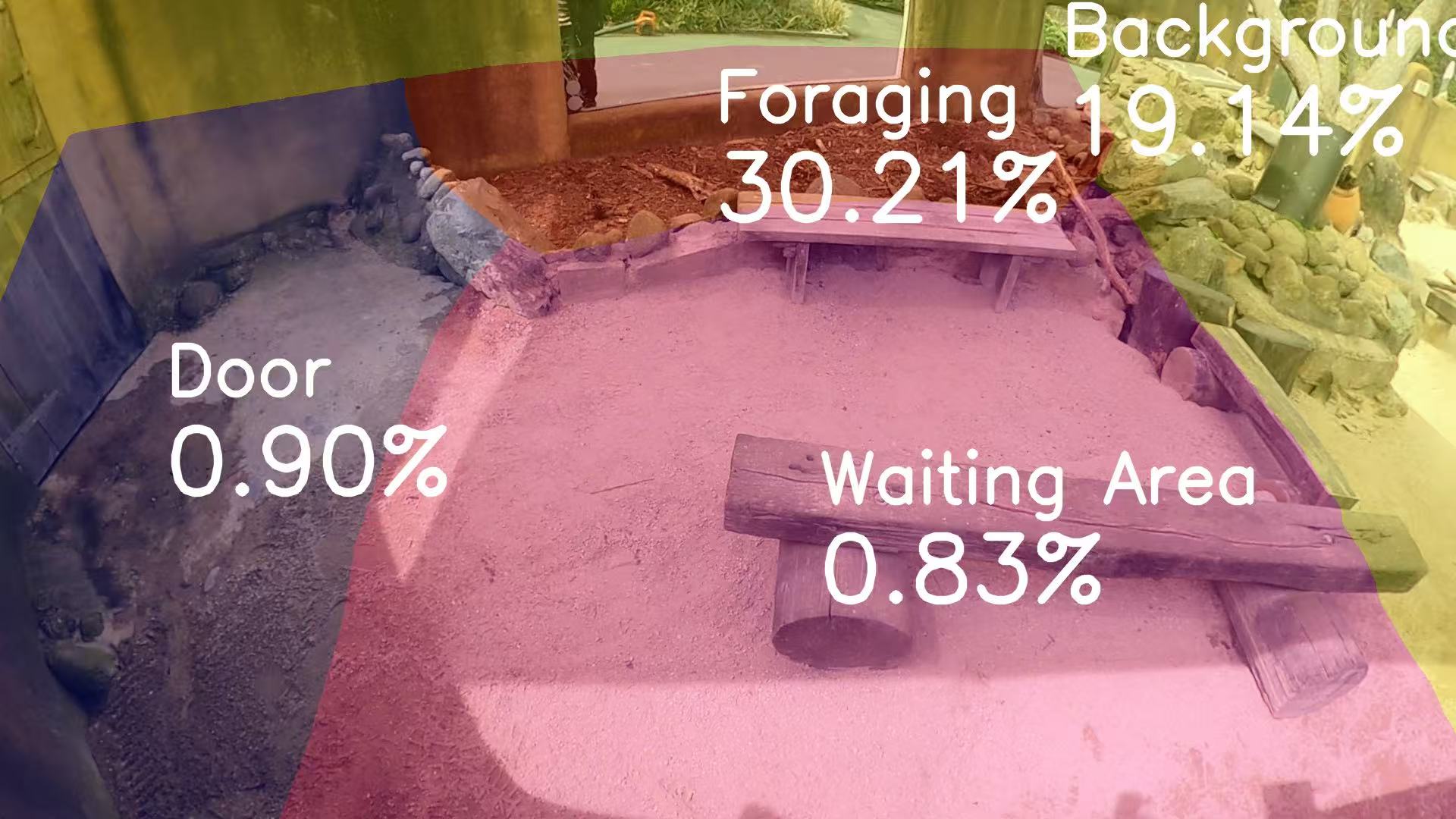}
    \includegraphics[width=0.48\textwidth]{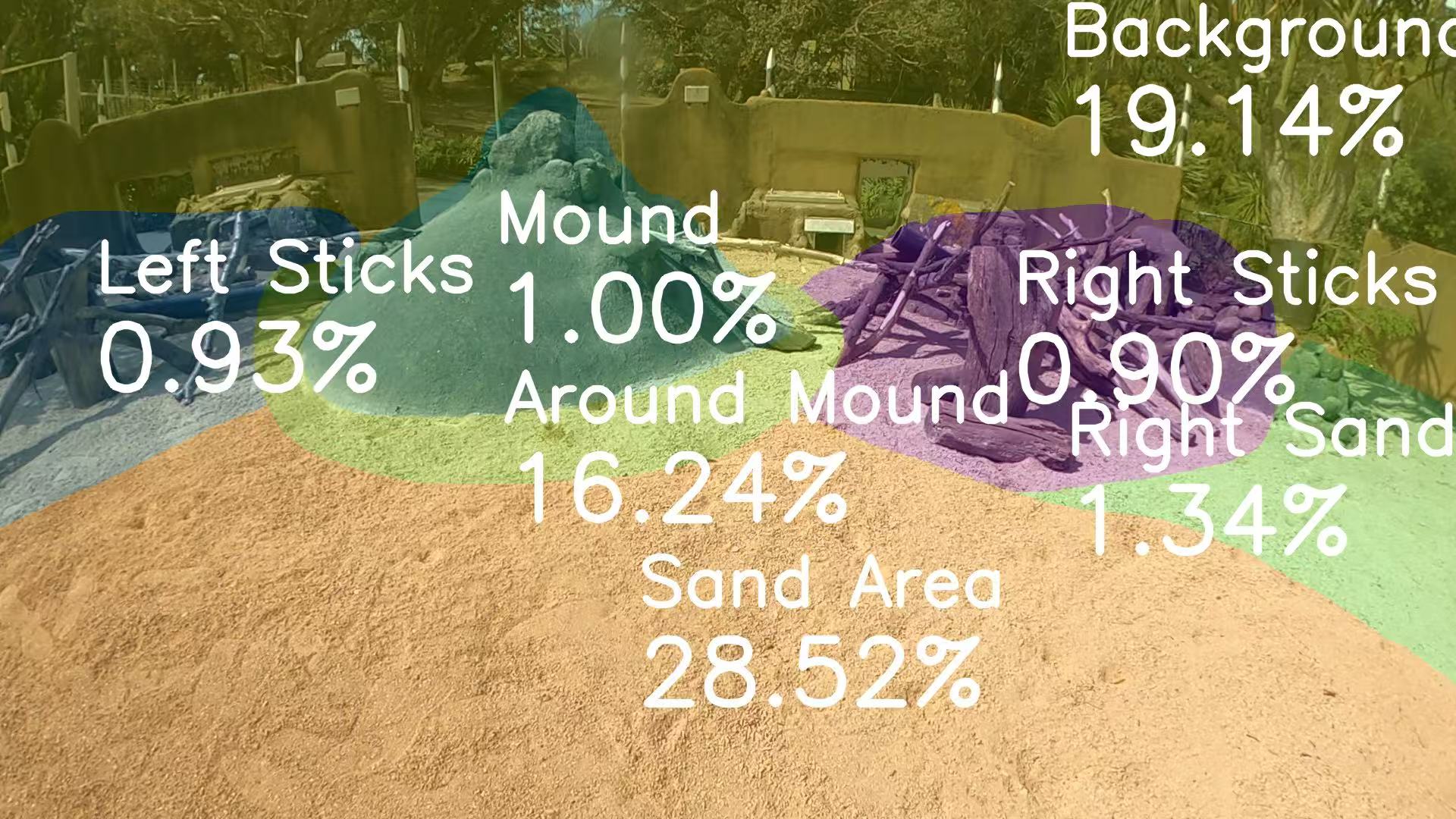}
    \par\vspace{5pt}
    \small (b) PIRO (weighted mean error: {\bf 5.5\%})
  \end{minipage}

  \vskip .1in

  \begin{minipage}{0.48\linewidth}
    \centering
    \includegraphics[width=0.48\textwidth]{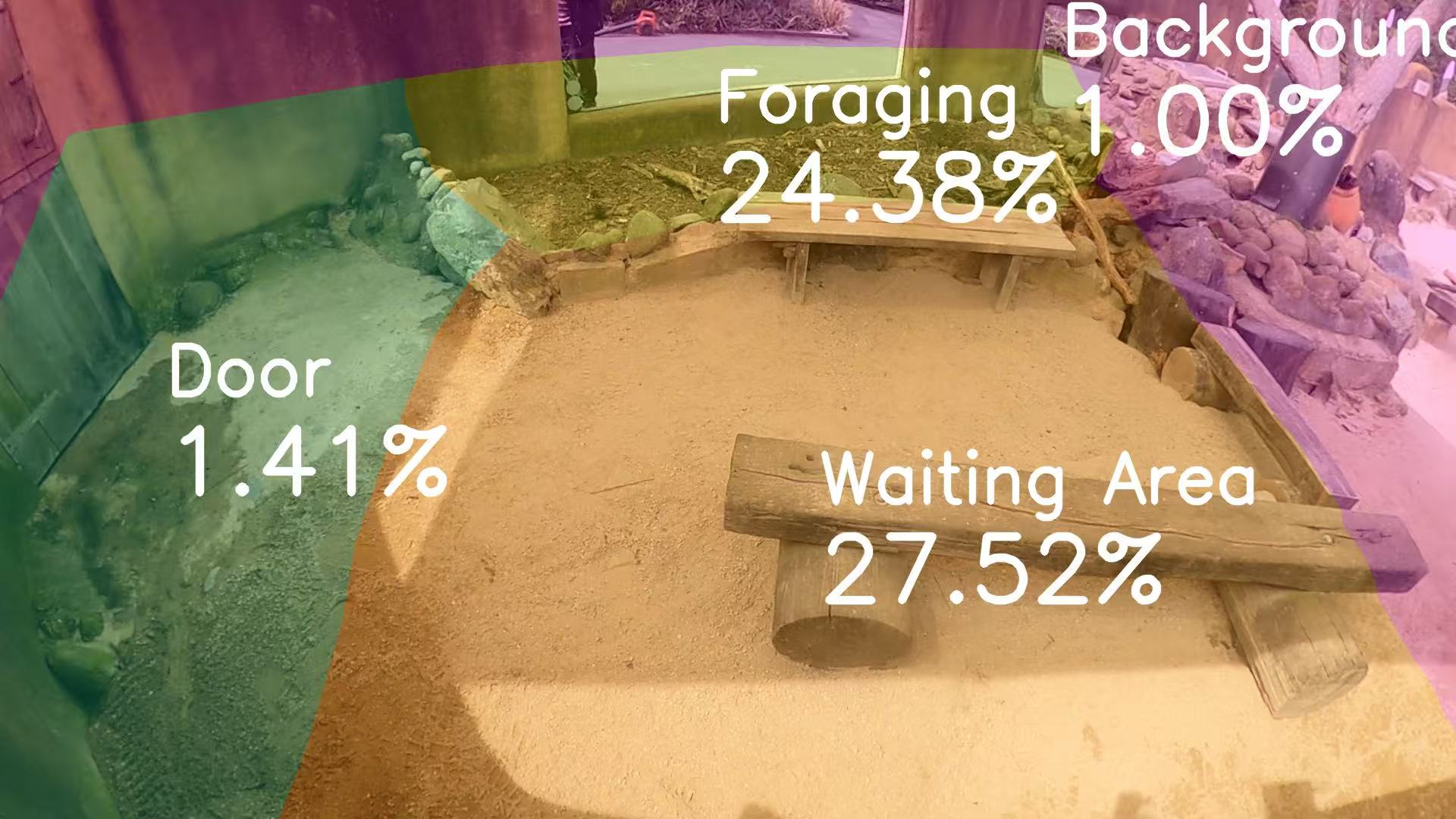}
    \includegraphics[width=0.48\textwidth]{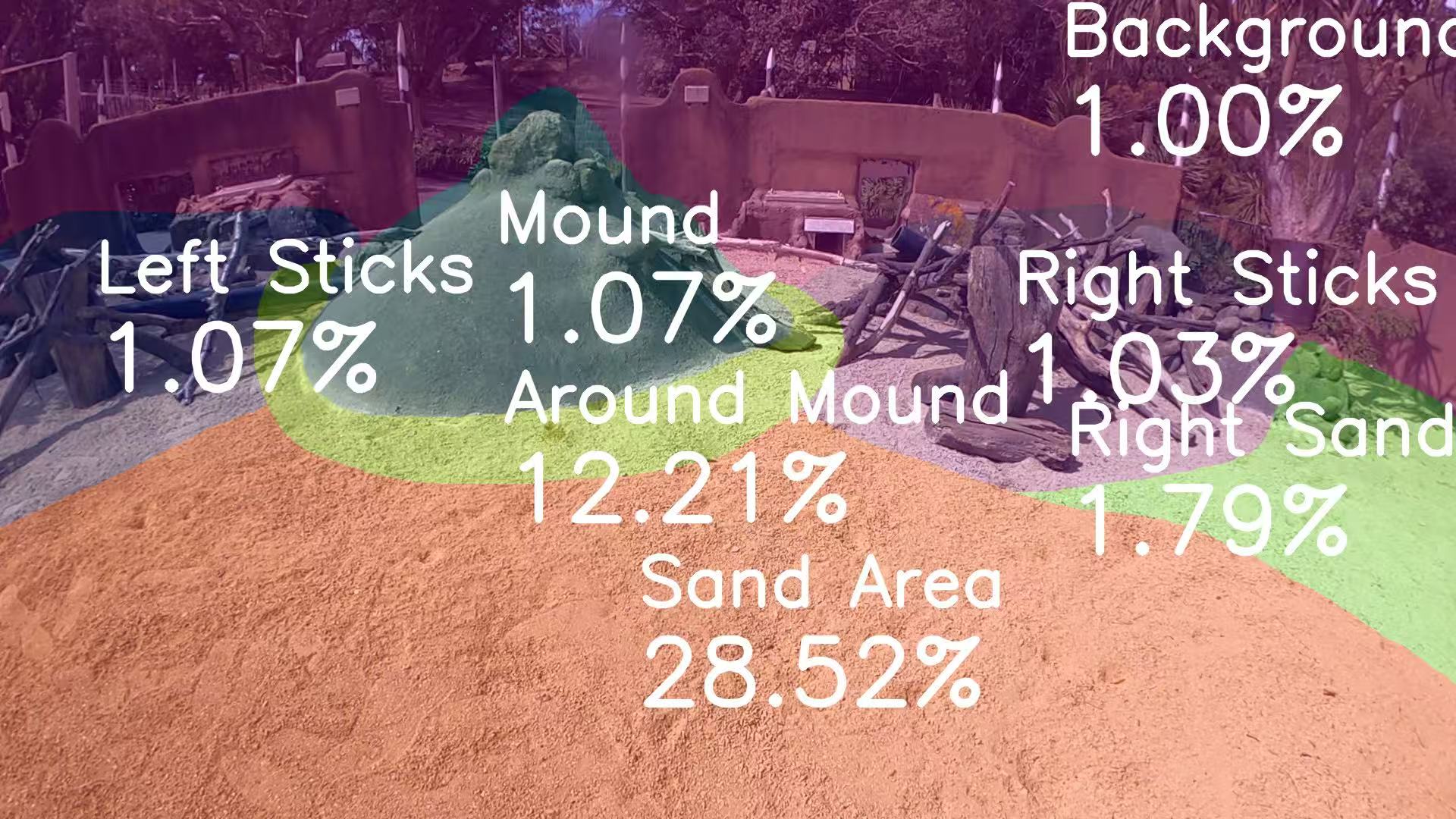}
    \par\vspace{5pt}
    \small (c) AIRL (weighted mean error: 8.7\%)
  \end{minipage}
  \hfill
  \begin{minipage}{0.48\linewidth}
    \centering
    \includegraphics[width=0.48\textwidth]{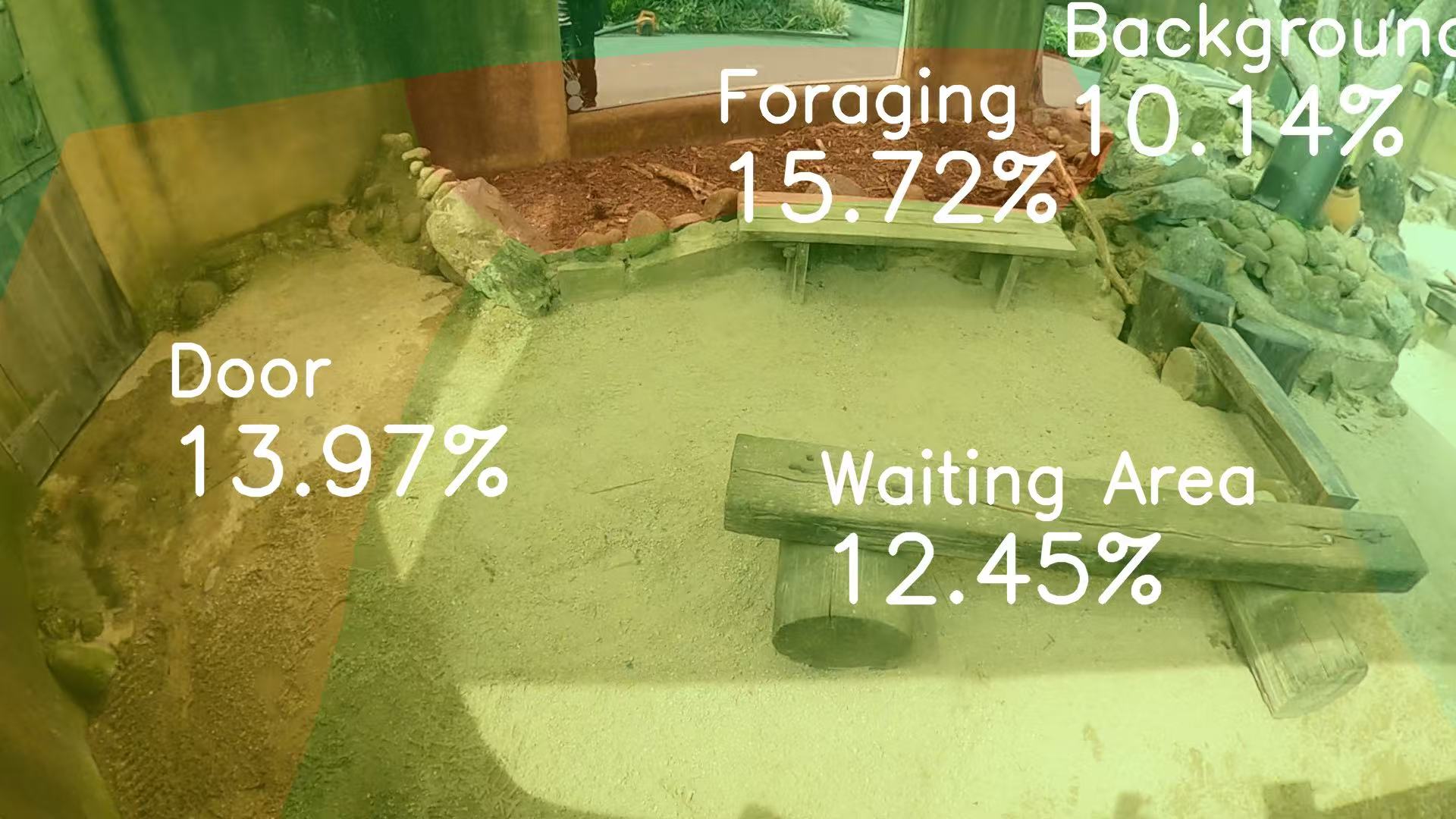}
    \includegraphics[width=0.48\textwidth]{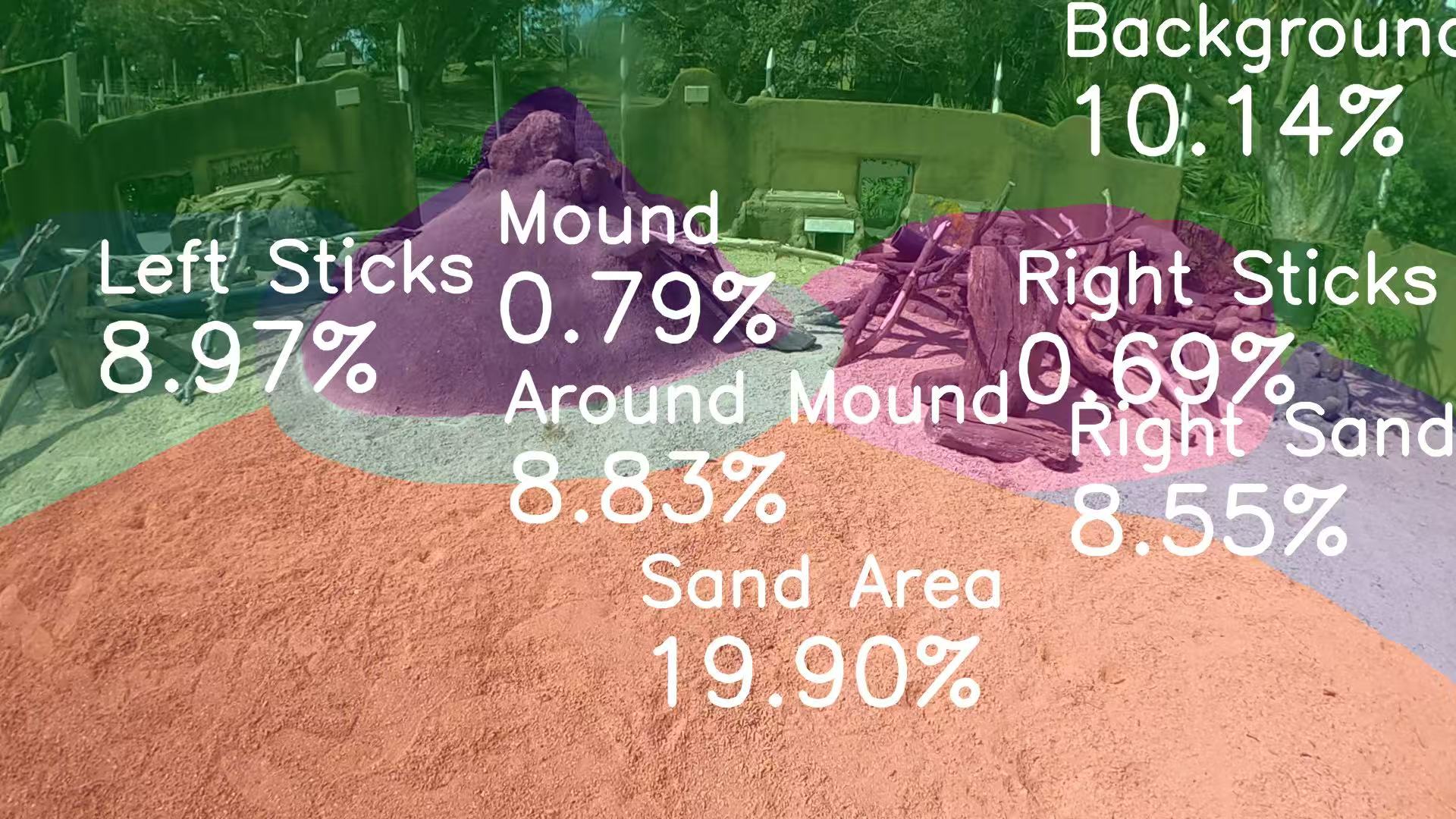}
    \par\vspace{5pt}
    \small (d) BC (weighted mean error: 9.3\%)
  \end{minipage}
  
  \vskip .1in

  \begin{minipage}{0.48\linewidth}
    \centering
    \includegraphics[width=0.48\textwidth]{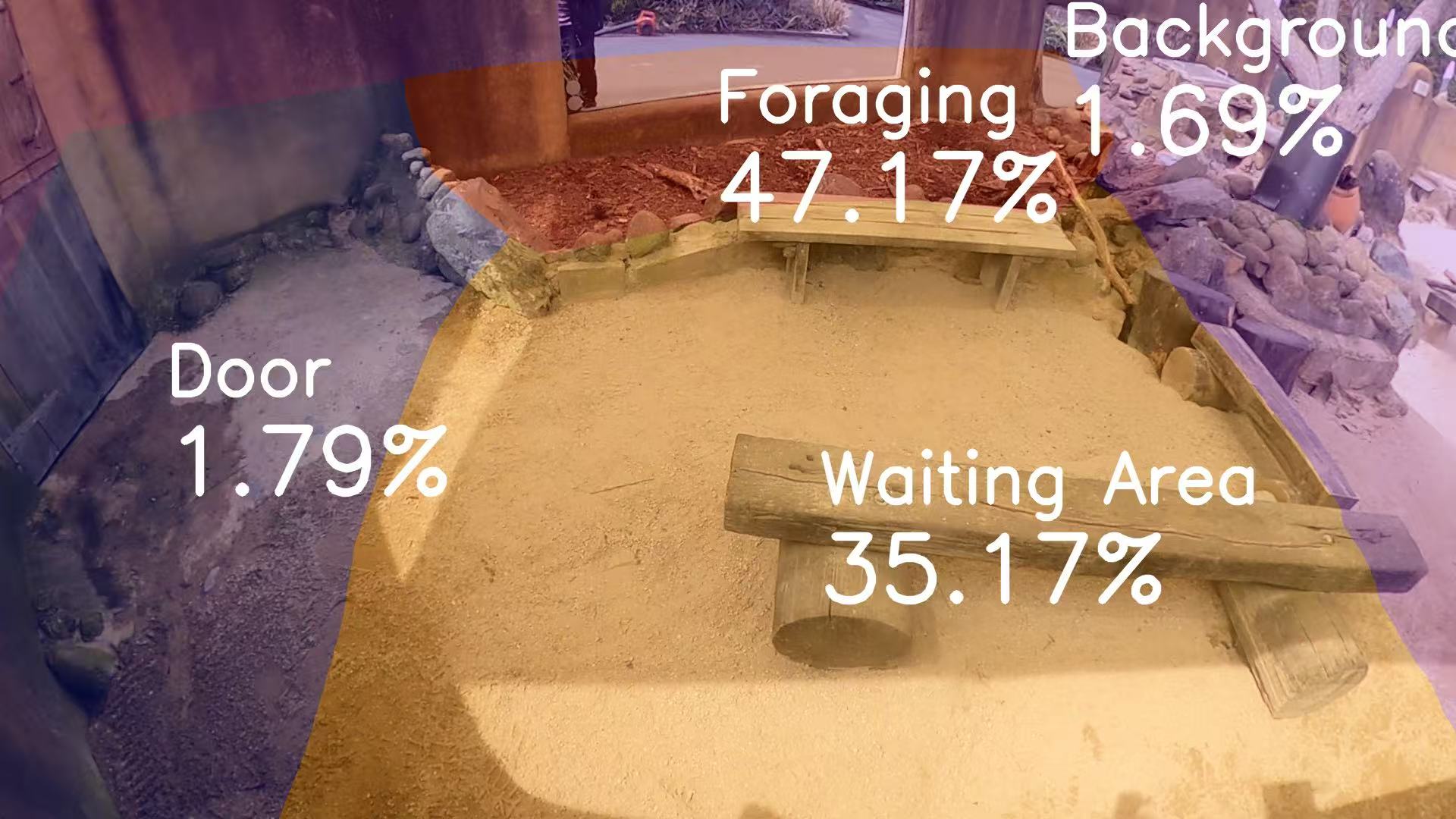}
    \includegraphics[width=0.48\textwidth]{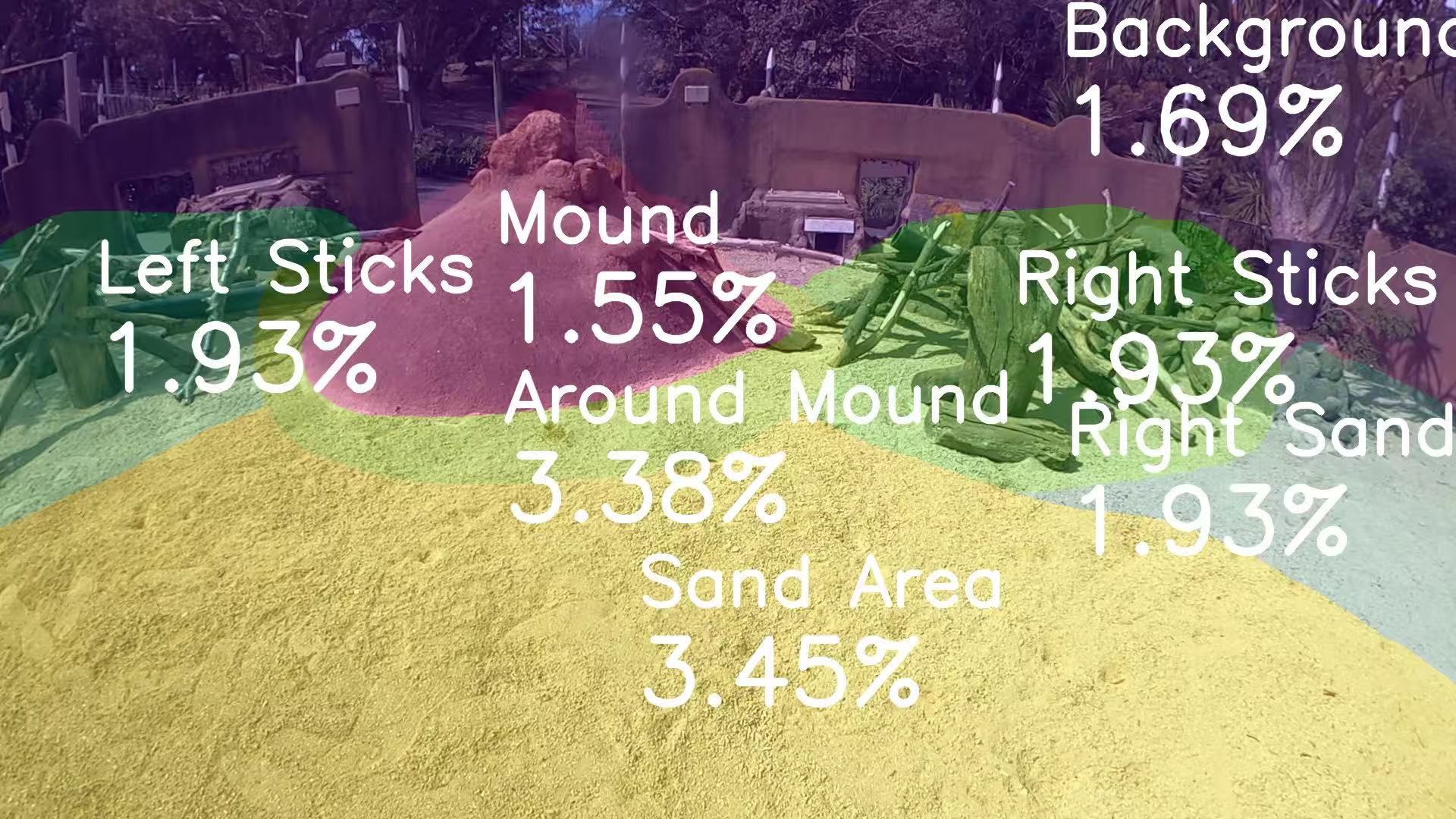}
    \par\vspace{5pt}
    \small (e) GAIL (weighted mean error: 11.8\%)
  \end{minipage}
  \hfill
  \begin{minipage}{0.48\linewidth}
    \centering
    \includegraphics[width=0.48\textwidth]{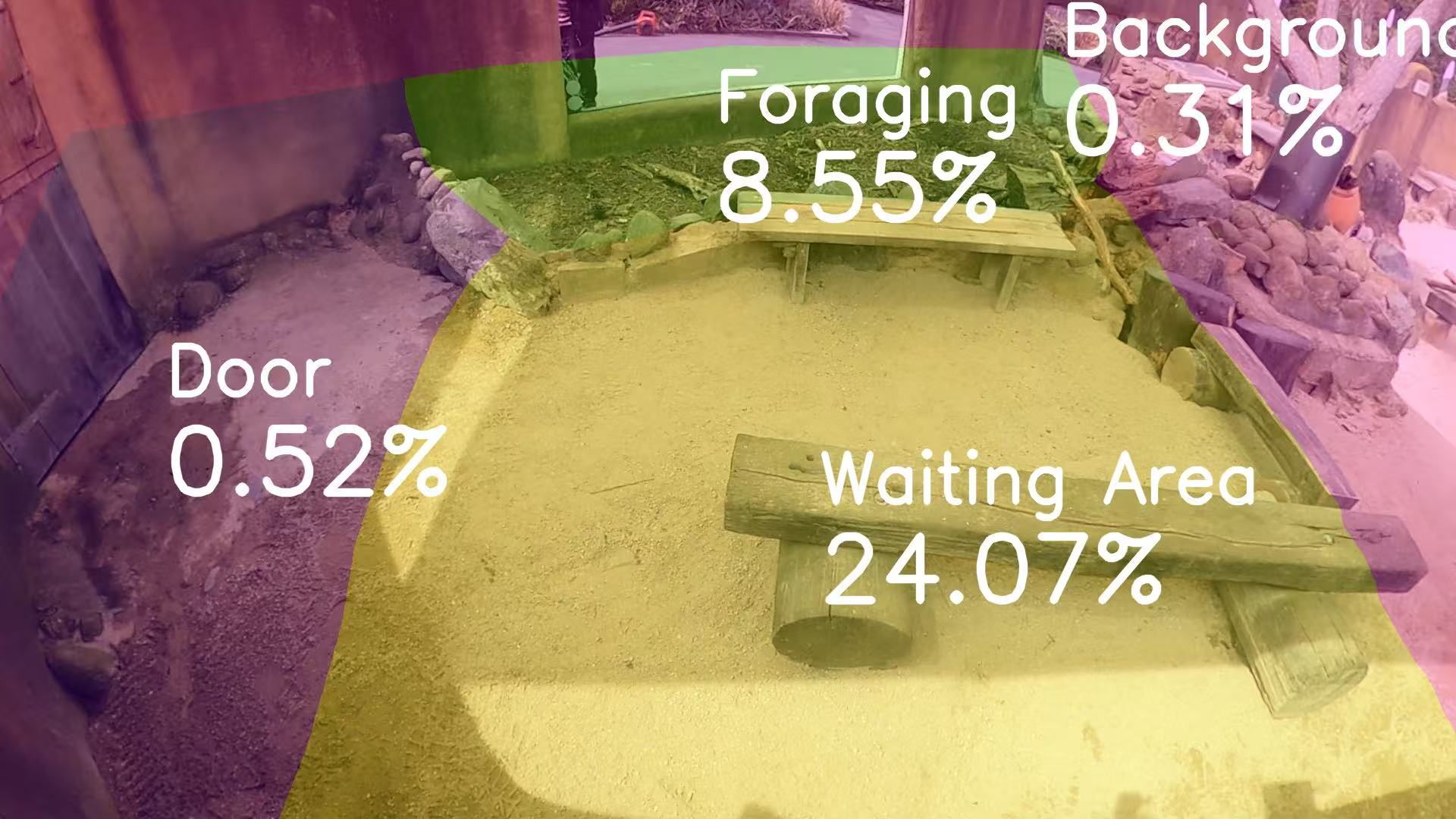}
    \includegraphics[width=0.48\textwidth]{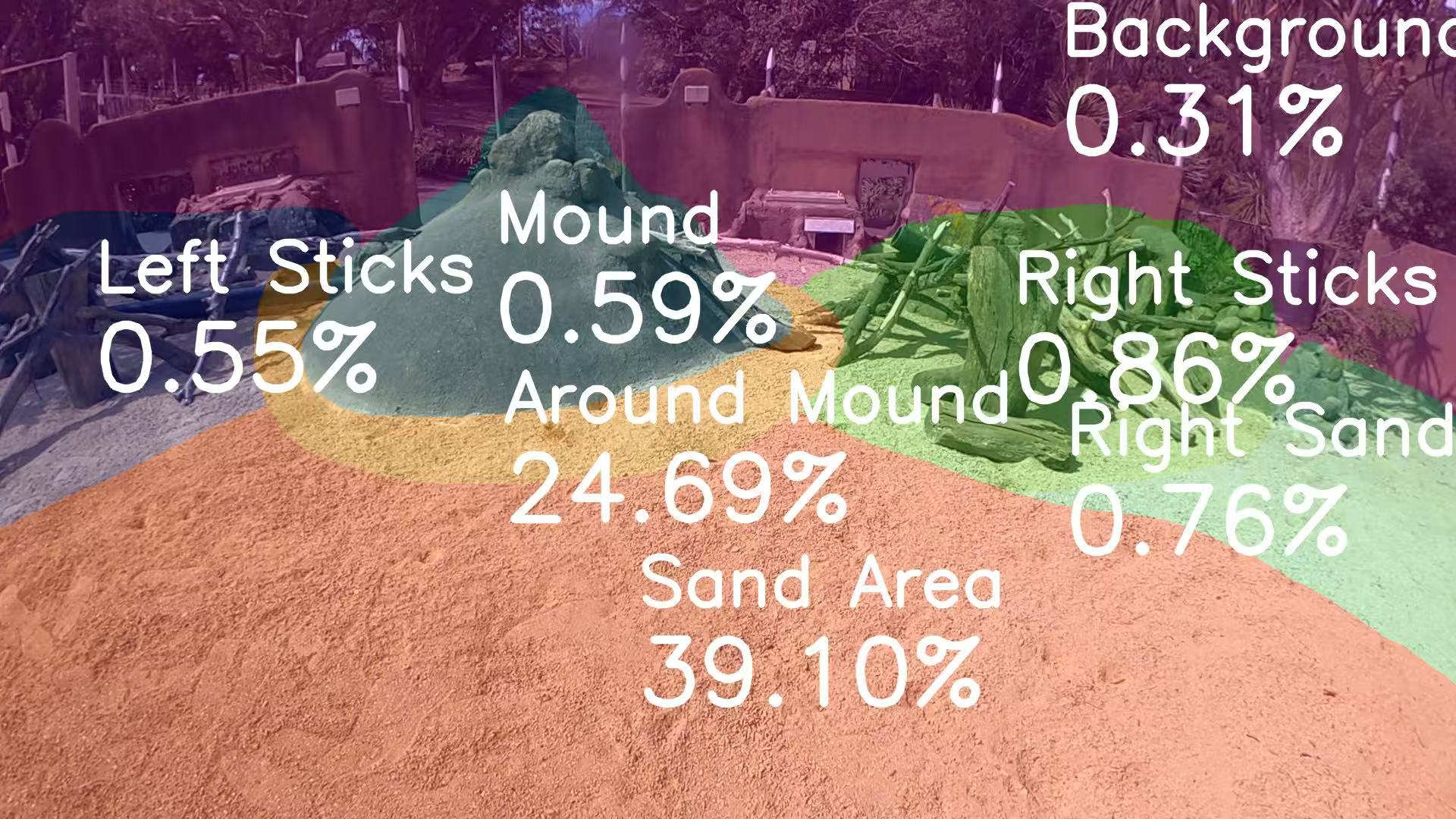}
    \par\vspace{5pt}
    \small (f) $f$-IRL (weighted mean error: 16.1\%)
  \end{minipage}
  
  \vskip .1in

  \begin{minipage}{0.48\linewidth}
    \centering
    \includegraphics[width=0.48\textwidth]{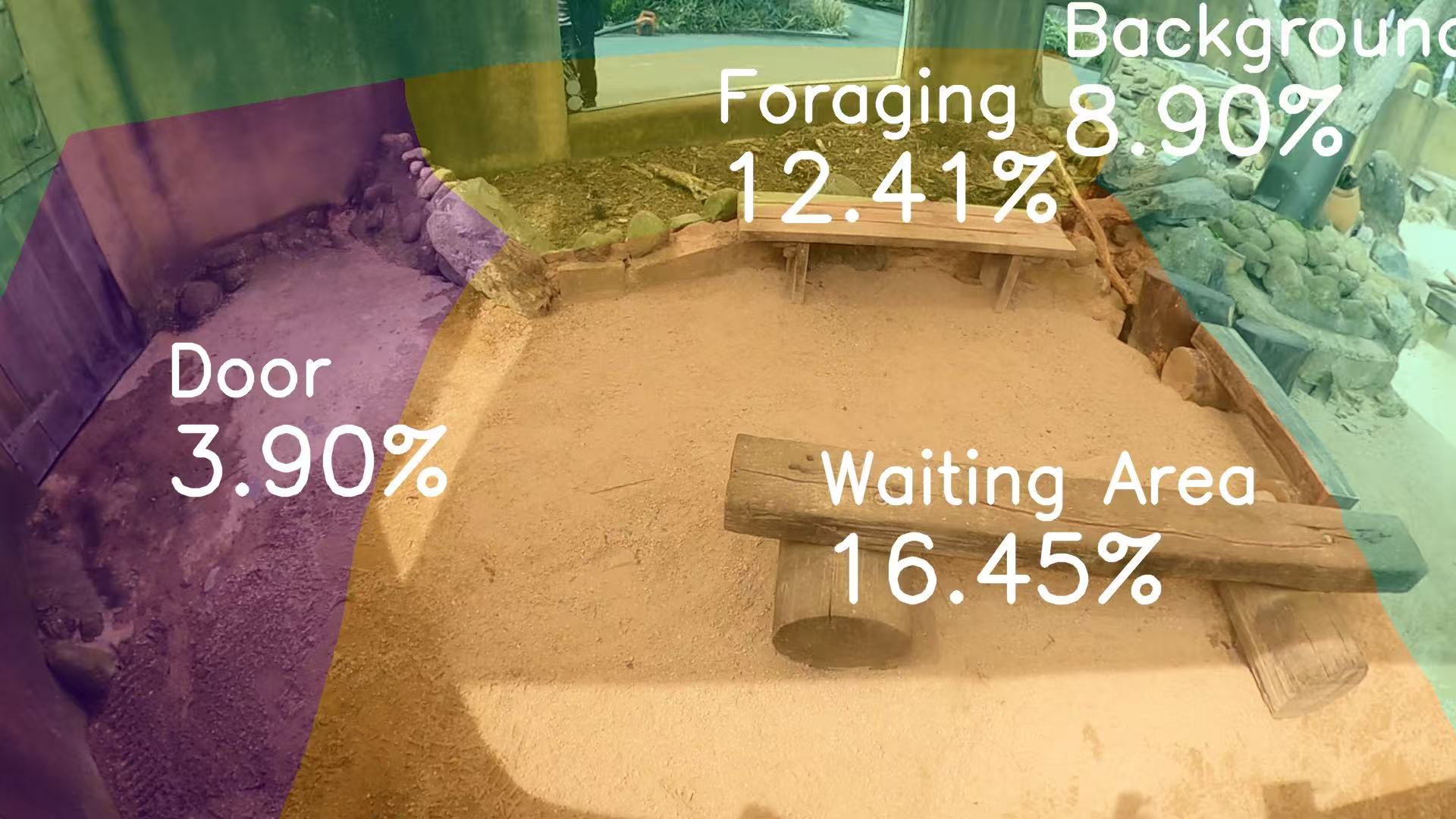}
    \includegraphics[width=0.48\textwidth]{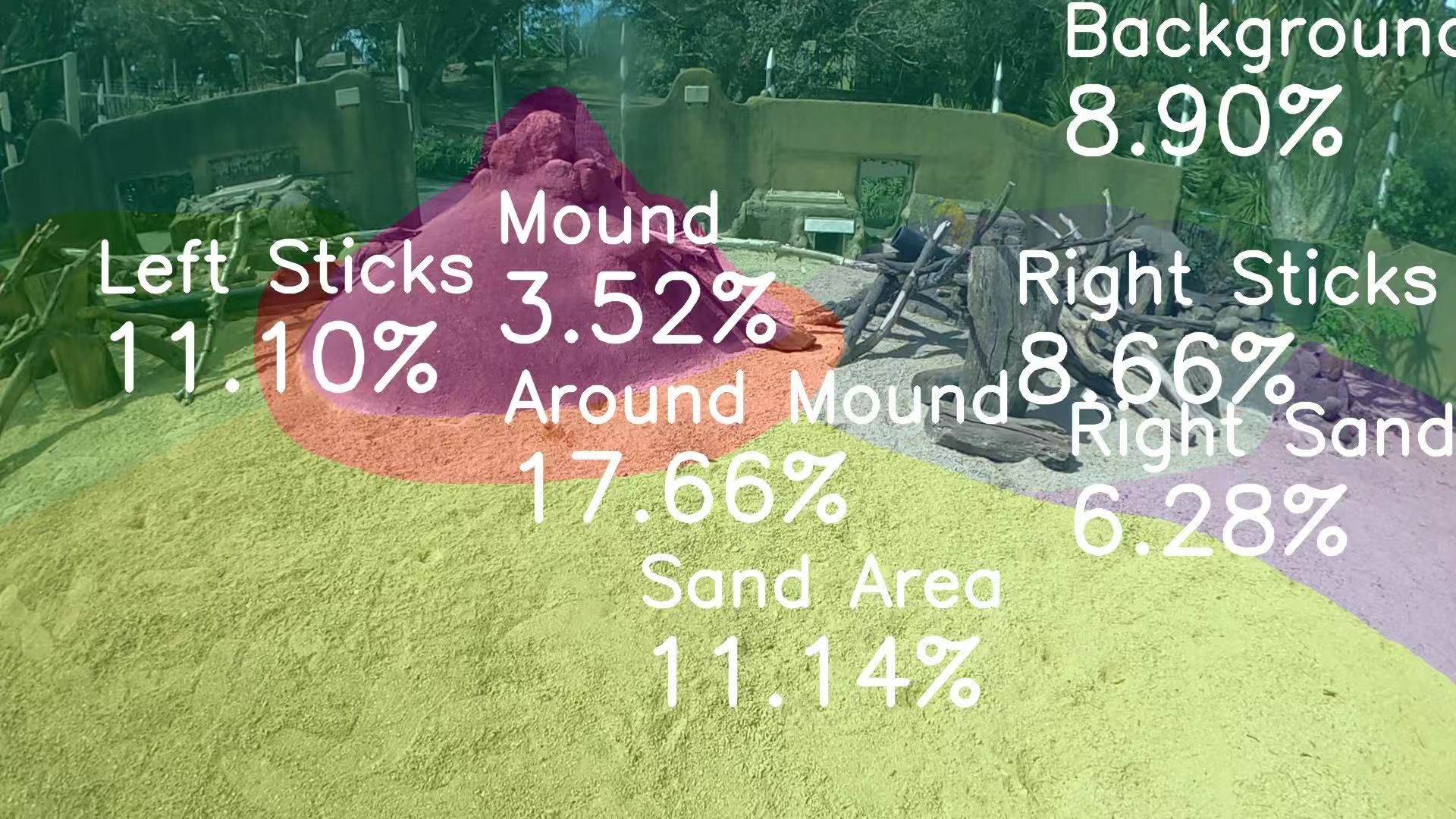}
    \par\vspace{5pt}
    \small (g) FILTER (weighted mean error: 10.0\%)
  \end{minipage}
  \hfill
  \begin{minipage}{0.48\linewidth}
    \centering
    \includegraphics[width=0.48\textwidth]{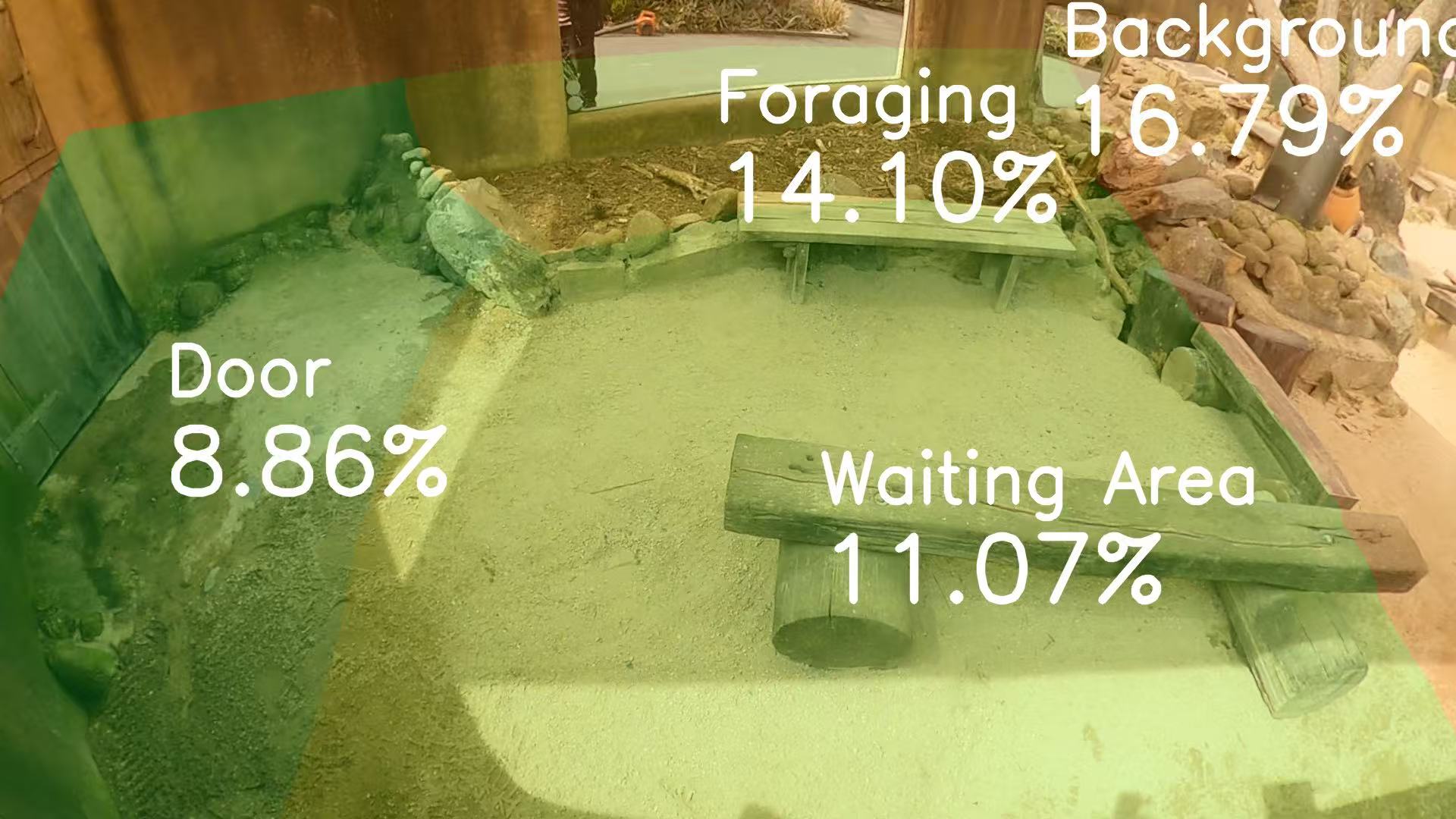}
    \includegraphics[width=0.48\textwidth]{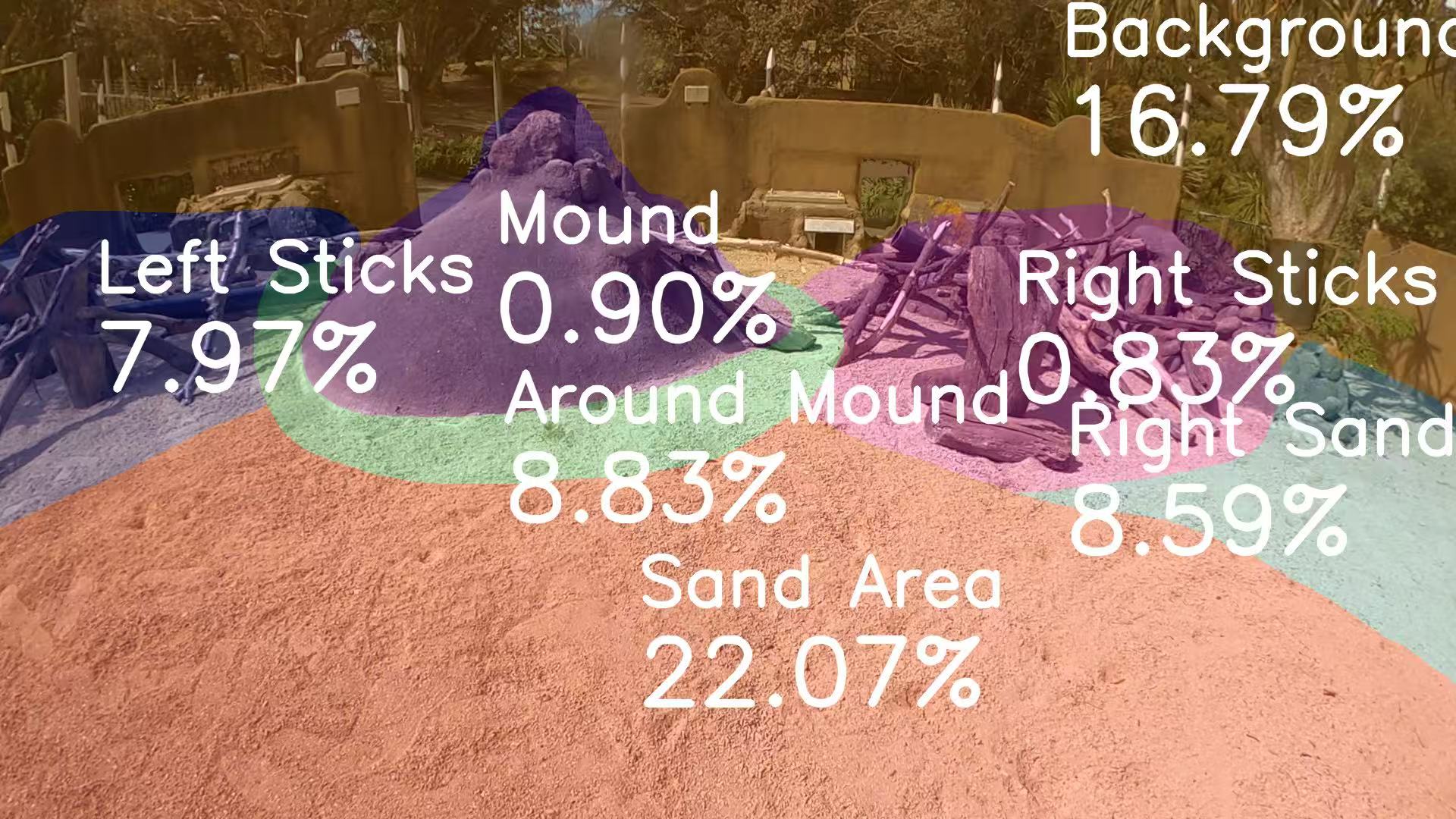}
    \par\vspace{5pt}
    \small (h) MM (weighted mean error: 10.5\%)
  \end{minipage}
  
  \vskip .1in

  \begin{minipage}{0.48\linewidth}
    \centering
    \includegraphics[width=0.48\textwidth]{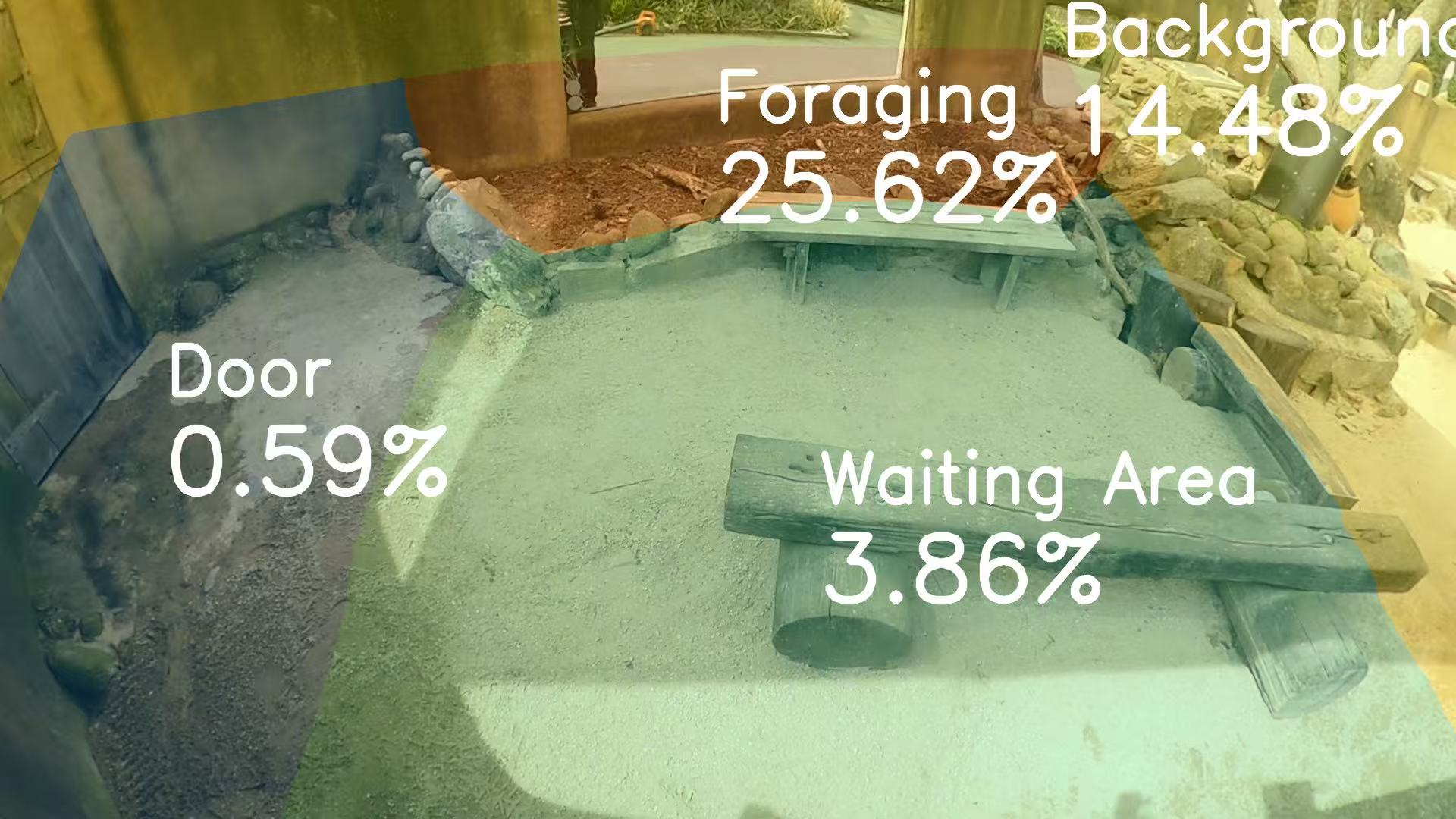}
    \includegraphics[width=0.48\textwidth]{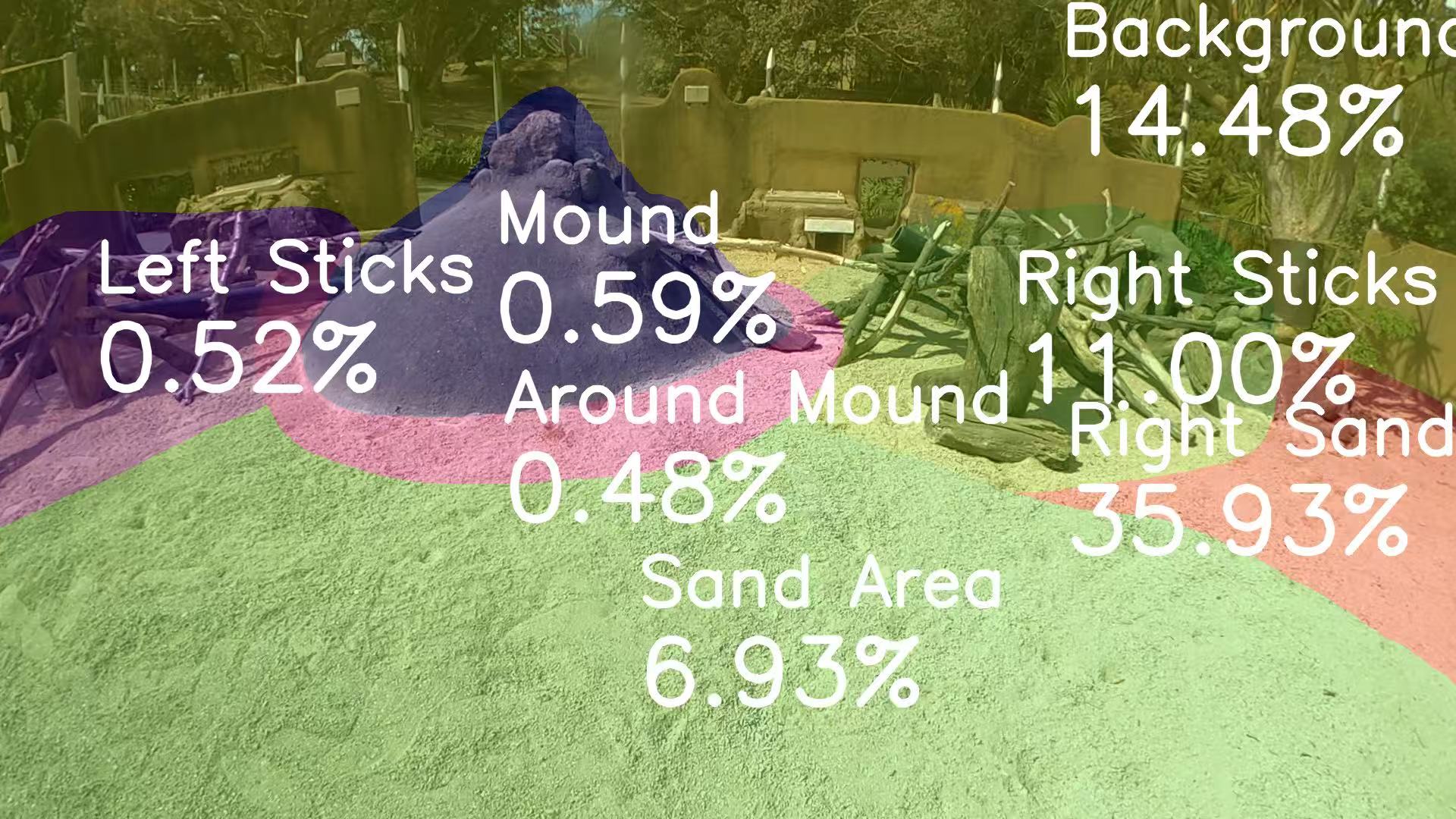}
    \par\vspace{5pt}
    \small (i) IQ-Learn (weighted mean error: 9.8\%)
  \end{minipage}
  \hfill
  \begin{minipage}{0.48\linewidth}
    \centering
    \includegraphics[width=0.48\textwidth]{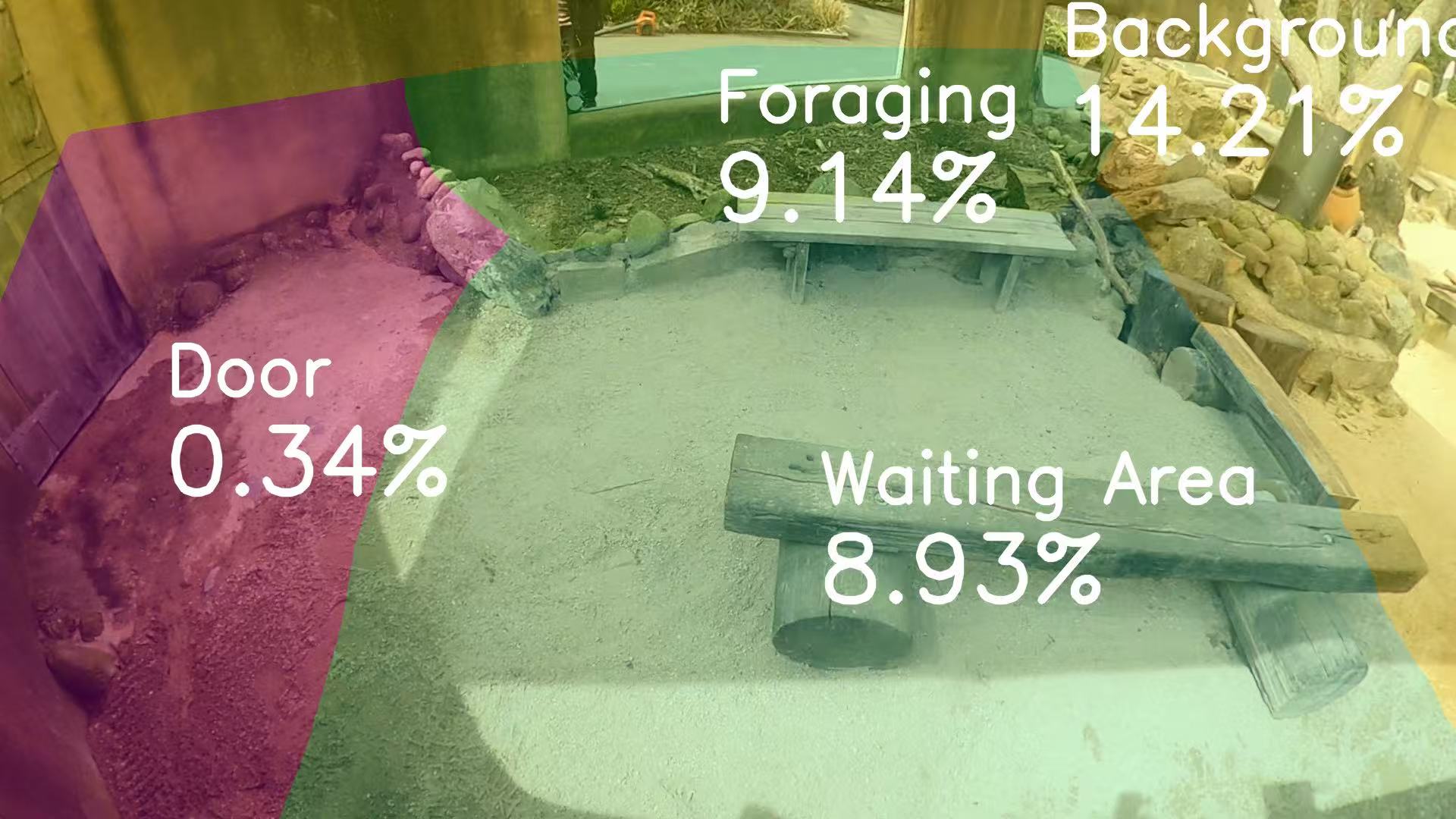}
    \includegraphics[width=0.48\textwidth]{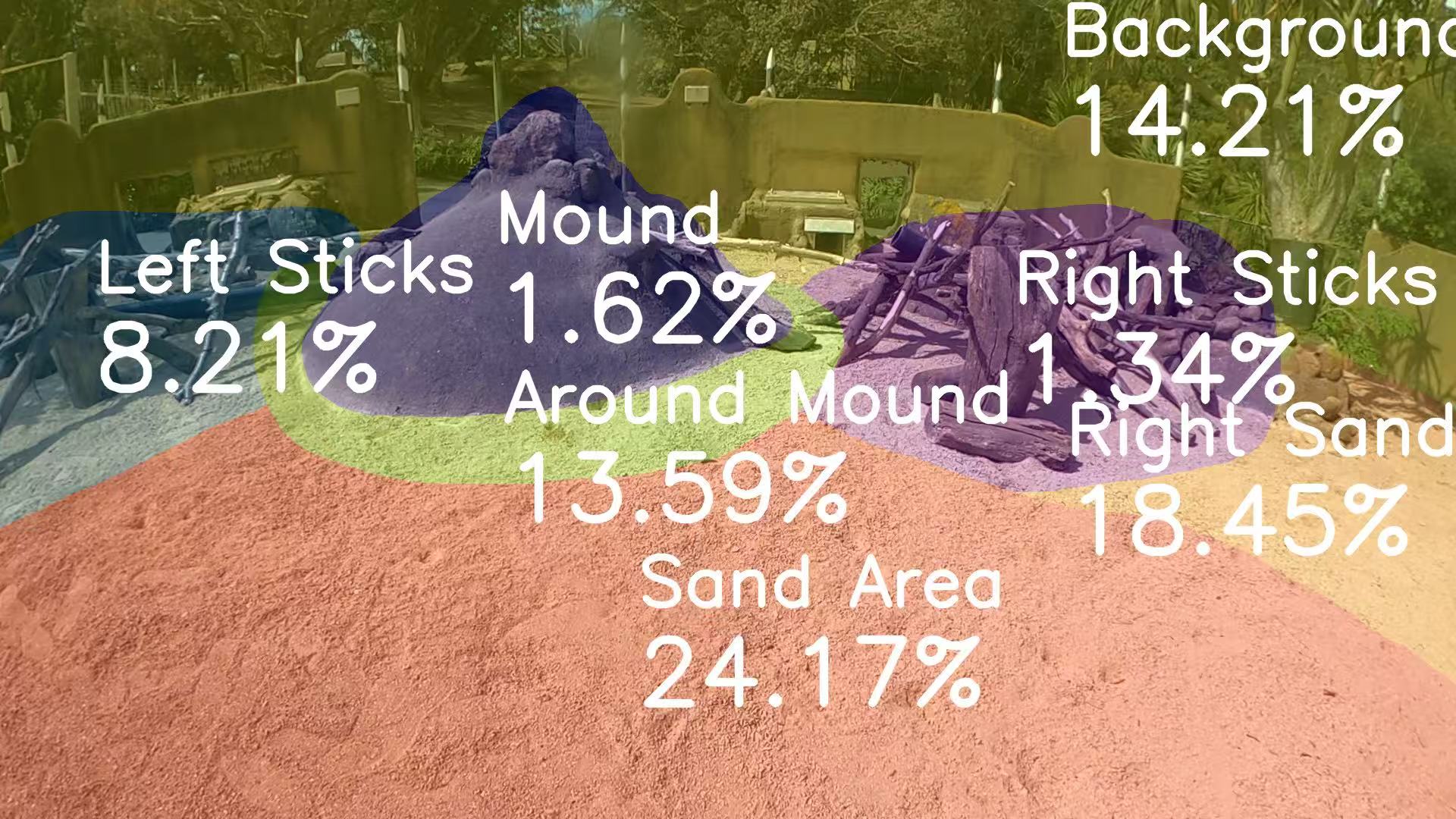}
    \par\vspace{5pt}
    \small (j) ML-IRL (weighted mean error: 11.3\%)
  \end{minipage}

  \vskip .1in

  \begin{minipage}{0.48\linewidth}
    \centering
    \includegraphics[width=0.48\textwidth]{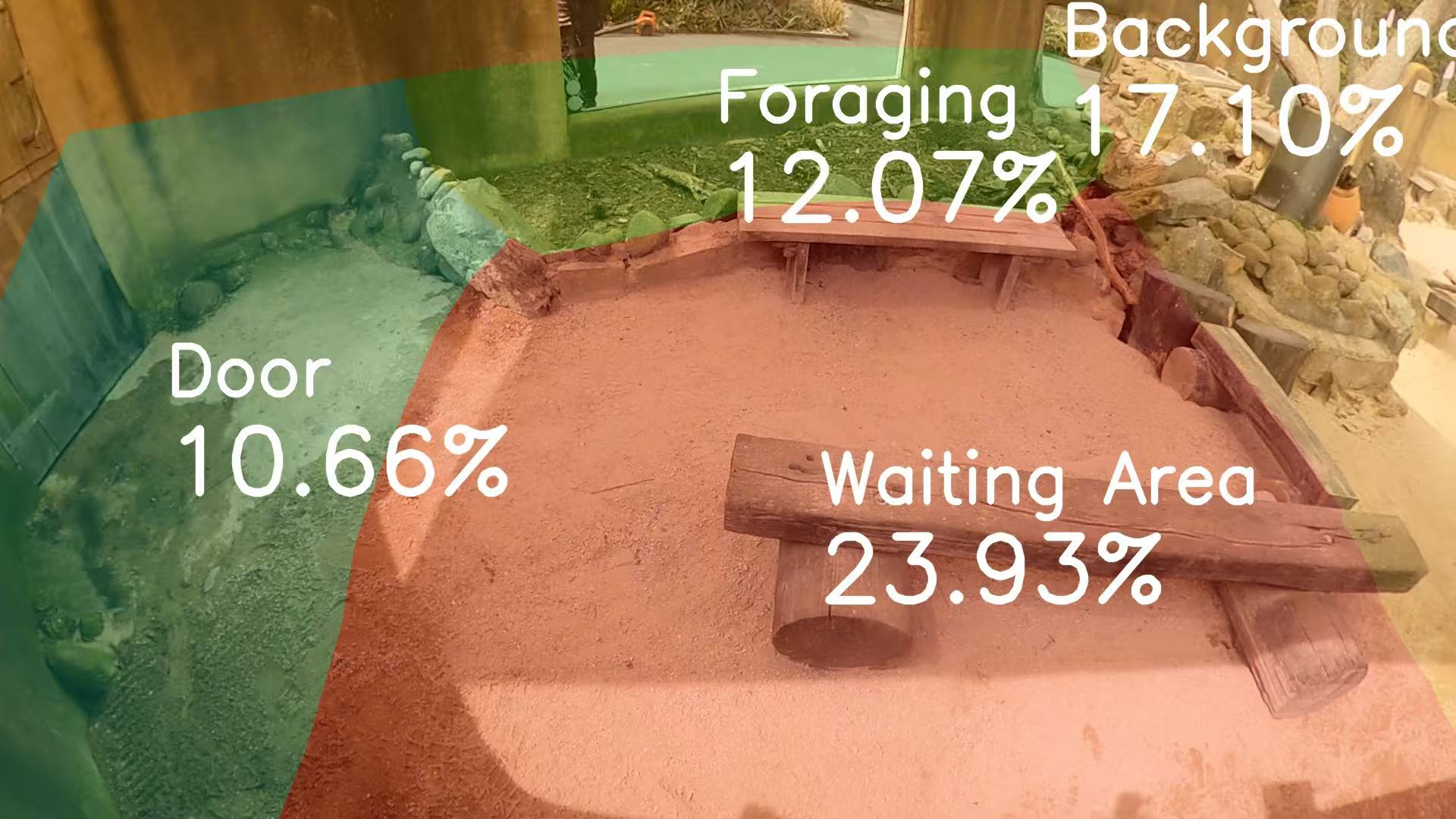}
    \includegraphics[width=0.48\textwidth]{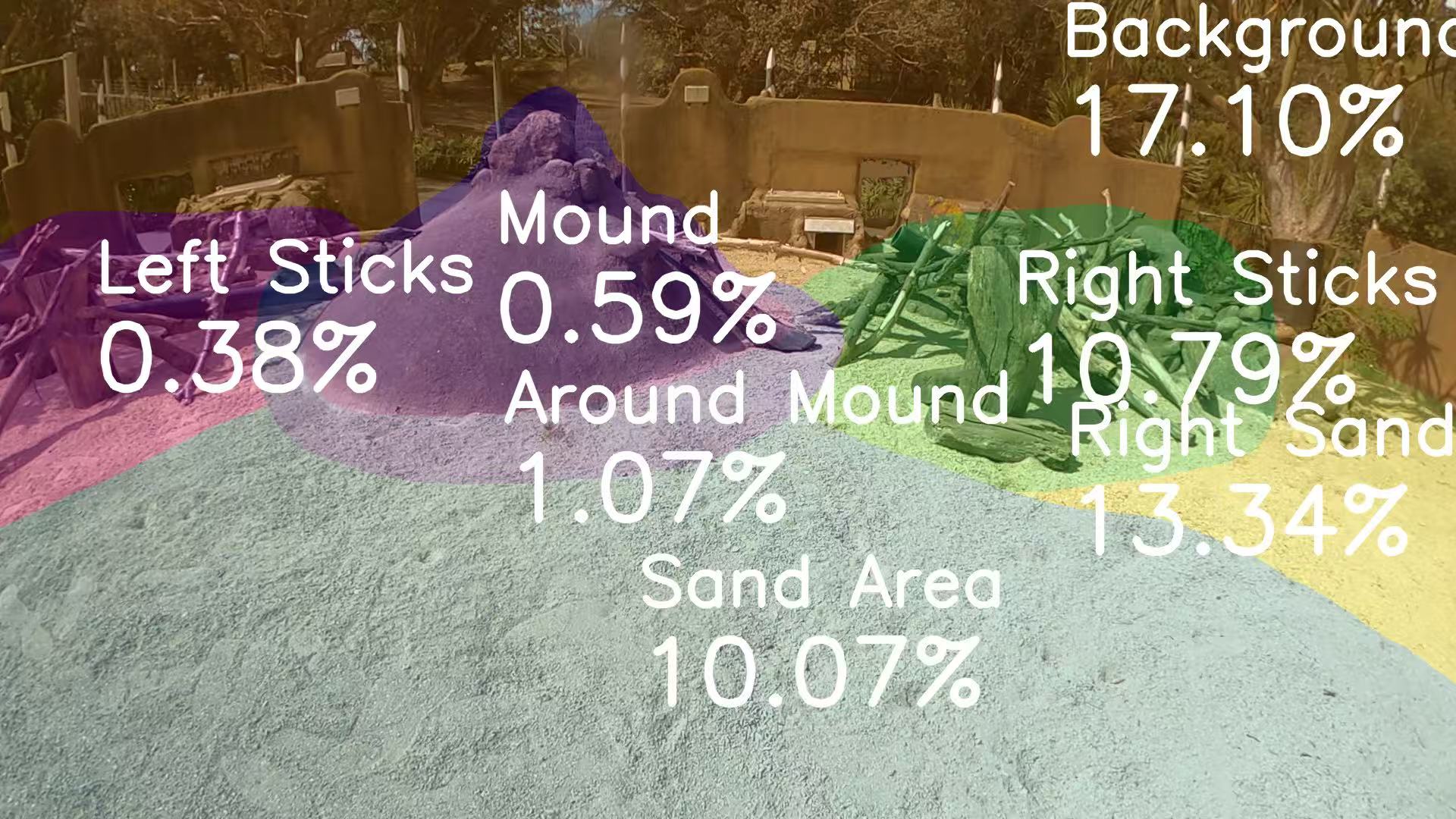}
    \par\vspace{5pt}
    \small (k) HyPE (weighted mean error: 14.6\%)
  \end{minipage}
  \hfill
  \begin{minipage}{0.48\linewidth}
    \centering
    \includegraphics[width=0.48\textwidth]{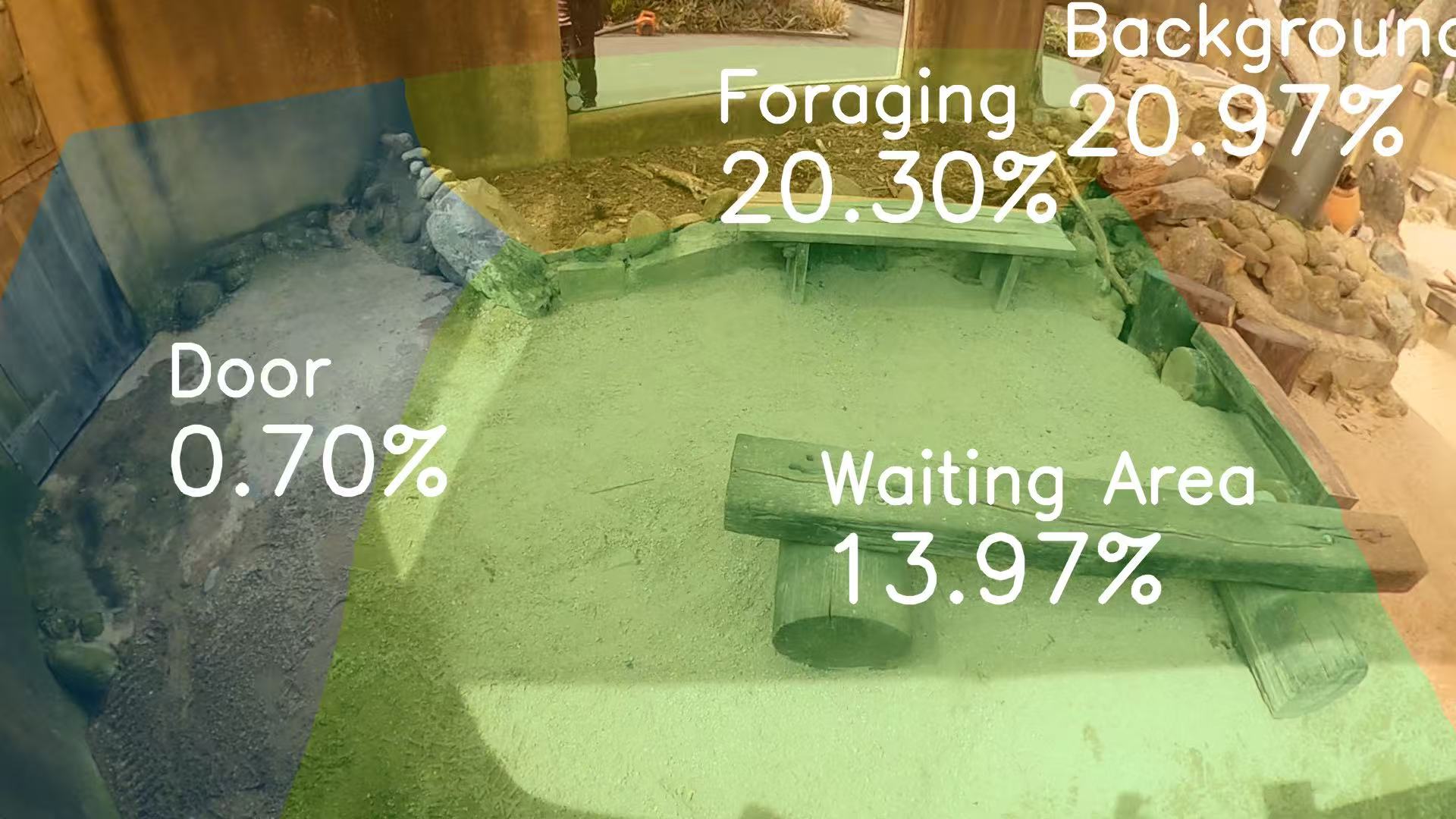}
    \includegraphics[width=0.48\textwidth]{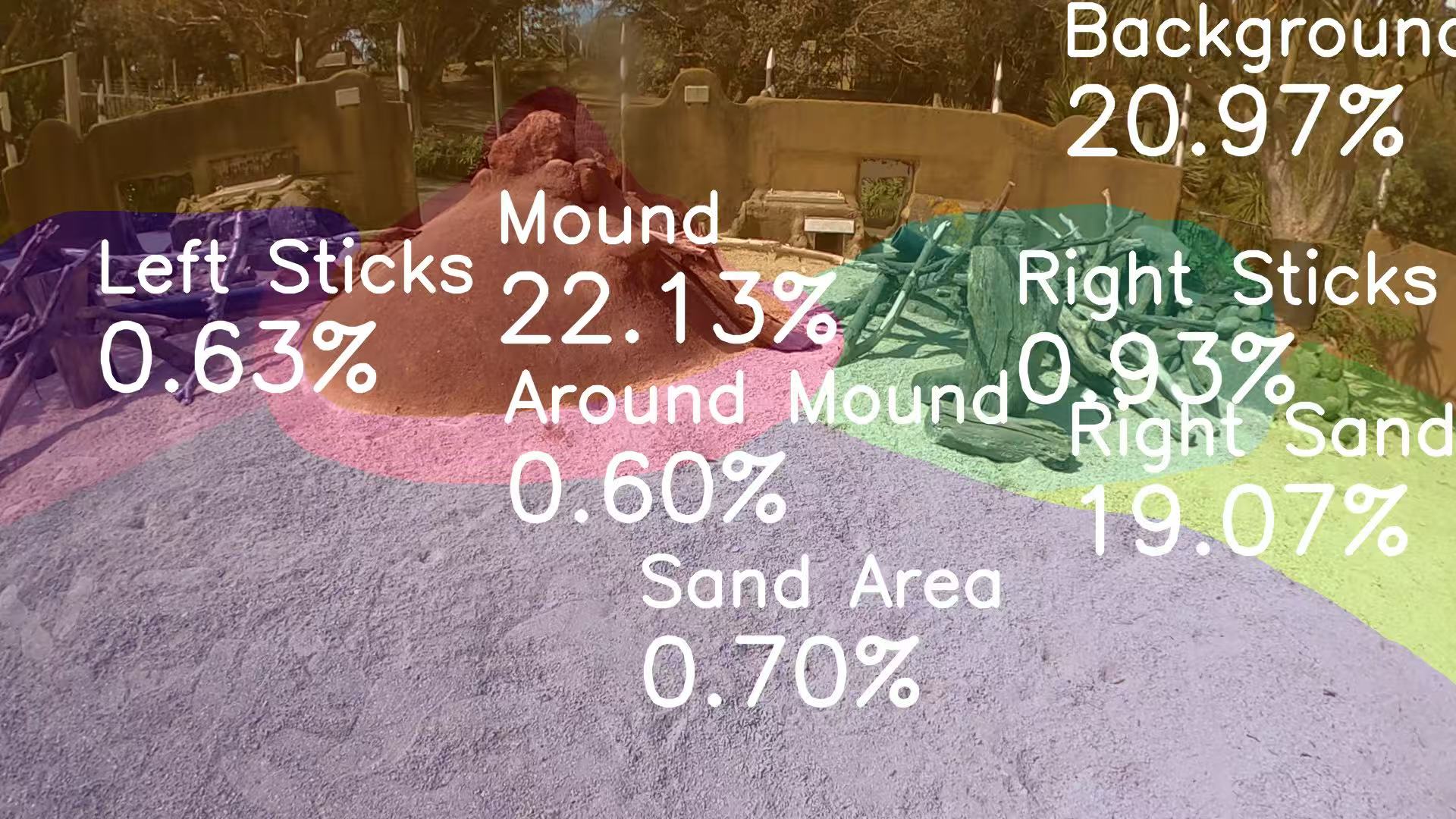}
    \par\vspace{5pt}
    \small (l) P$^2$IL (weighted mean error: 13.0\%)
  \end{minipage}

  \caption{\small {\bf Regional visitation frequency map} generated by analyzing real meerkat trajectories alongside those produced by algorithms. PIRO achieves the lowest weighted mean error.}
  \label{fig:transition_frequency}
\end{figure}

\end{appendices}

\end{document}